\documentclass{article}

\usepackage{microtype}
\usepackage{graphicx}
\usepackage{booktabs} 
\usepackage{subcaption}
\usepackage[dvipsnames]{xcolor}

\usepackage{hyperref}
\usepackage{nicefrac}
\usepackage{wrapfig}

\usepackage[accepted]{icml2026}

\usepackage{amsmath}
\usepackage{amssymb}
\usepackage{mathtools}
\usepackage{amsthm}
\usepackage{enumitem}

\newcommand{\pp}[1]{\paragraph{#1}}
\usepackage[capitalize,noabbrev]{cleveref}

\theoremstyle{plain}
\newtheorem{theorem}{Theorem}[section]
\newtheorem{proposition}[theorem]{Proposition}
\newtheorem{lemma}[theorem]{Lemma}
\newtheorem{corollary}[theorem]{Corollary}
\theoremstyle{definition}
\newtheorem{definition}[theorem]{Definition}
\newtheorem{assumption}[theorem]{Assumption}
\theoremstyle{remark}

\newtheorem{example}[theorem]{Example}

\usepackage[textsize=tiny]{todonotes}

\icmltitlerunning{Path-dependent Discrete Amortized Inference}

\begin{document}

\twocolumn[
\icmltitle{Path-dependent Discrete Amortized Inference}



\icmlsetsymbol{equal}{*}

\begin{icmlauthorlist}
\icmlauthor{Tiago da Silva}{mbzuai}
\icmlauthor{Esmeralda S. Whitammer}{uoe,cifar}
\icmlauthor{Salem Lahlou}{mbzuai}
\end{icmlauthorlist}

\icmlaffiliation{mbzuai}{MBZUAI}
\icmlaffiliation{uoe}{University of Edinburgh}
\icmlaffiliation{cifar}{CIFAR Fellow}

\icmlcorrespondingauthor{Tiago da Silva}{tiago.dasilva@mbzuai.ac.ae}

\icmlkeywords{Machine Learning, ICML}

\vskip 0.3in
]



\printAffiliationsAndNotice{\icmlEqualContribution} 

\begin{abstract}
We consider the problem of sampling compositional and discrete objects from a given unnormalized posterior distribution. 
Notably, recent studies have shown that this problem can be efficiently solved by learning a deterministic Markov Decision Process (MDP) that progressively builds each object in proportion to the posterior.
In this work, however, we demonstrate that the Markovian assumption can both hamper signal propagation during training and catastrophically reduce the learned sampler's expressivity due to state aliasing. 
To address these issues, we propose lifting the MDP with a learnable latent dynamical system that allows the underlying policy to depend on the entire past trajectory---and not only on the current state. 
In view of this, we refer to the resulting method as \emph{path-dependent discrete amortized inference}. 
Importantly, we provably extend existing learning algorithms for discrete amortized samplers to our setting. 
In experiments on standard benchmark problems, we also show that our approach often leads to faster learning convergence and improved state space exploration relatively to prior techniques. \looseness=-1 
\end{abstract}

\section{Introduction}

When dealing with complex stochastic models, there is only so much one can achieve analytically. 
Instead, practitioners frequently rely on approximate algorithms for drawing samples from a target distribution, which are used to produce Monte Carlo approximations to a desired functional \cite{kirkpatrick1983optimization, geyer1991markov, robert1999monte, Blei2017}.   
For continuous state spaces, this \emph{sampling problem} can often be efficiently addressed by the celebrated Hamiltonian Monte Carlo algorithm \citep[HMC;][]{neal2011mcmc, carpenter2017stan}. Similarly to other Markov chain Monte Carlo (MCMC) techniques, HMC operates by constructing a Markov chain for which the target distribution is stationary.
Nonetheless, its central idea is to augment the state space with a \emph{momentum variable} that evolves through a partly deterministic process dictated by the Hamiltonian dynamics \cite{hamilton1833general}, enhancing exploration and reducing convergence (mixing) time \cite{betancourt2017conceptual}. \looseness=-1

Apart from continuously relaxing the state space \cite{maddison2017the, jang2017categorical, han2020steinvariationalinferencediscrete}, which can only be done in low-dimensional settings, there has been no corresponding one-size-fits-all solution for the sampling problem for discrete distributions. 
This issue has been particularly challenging in the presence of compositional objects (e.g., graphs and sequences), as the combinatorial explosion of the induced space often makes generic samplers impractical and forces the reliance on 
ad hoc techniques \cite{Dinh2017}. 
To fill this gap, \citet{Bengio2021, Foundations} recently introduced a novel class of models---termed Generative Flow Networks (GFlowNets)---that cast the sampling problem as 
learning a Markov decision process (MDP) that constructs each compositional object according to a given unnormalized distribution.
Since then, GFlowNets have been thoroughly studied \cite{lahlou2023theory, markovchains, shen23gflownets, yu2025secretsgflownetslearningbehavior}, with successful applications on phylogenetic inference \cite{zhou2024phylogfn} and causal discovery \cite{deleu2022bayesian, deleu2023joint}, among others, that repeatedly demonstrated their competitiveness against ad hoc MCMC-based alternatives. 
On top of that, their relationship to hierarchical variational inference \cite{malkin2023gflownets} and reinforcement learning \citep[RL;][]{tiapkin2024generativeflownetworksentropyregularized, deleu2024discreteprobabilisticinferencecontrol} methods has also been formally established. 
\looseness=-1

From a reinforcement learning perspective, in particular, a GFlowNet can be viewed as solving an MDP problem on a finite and completely observable environment \cite{sutton1998reinforcement, mohri2018foundations} defined by the iterative process of constructing the compositional objects (e.g., adding ancestral nodes to a phylogenetic tree). Importantly, as we cannot directly assign unnormalized probabilities---i.e., rewards---to incomplete states \cite{pan2023generative, LingTrajectory}, GFlowNets are often impaired by reward sparsity and a consequent inefficient credit assignment during training \cite{jang2024learning}. 
In addition, although the seminal work of \citet{Foundations} demonstrated that there always exists a deterministic MDP that generates each complete state in proportion to a given target distribution, recent theoretical results  indicate that the expressive power of commonly used deep neural networks may be insufficient for representing this optimal policy \cite{silva2025when, kim2025symmetryawaregflownets}. 
\looseness=-1

To mitigate these issues, we propose expanding the given MDP with a history-dependent auxiliary variable. 
The graphical models for the resulting generative process alongside that of the original GFlowNet are illustrated in \Cref{fig:lifted_mdp}. 
As with HMC, this variable follows a determinisitic and state-conditional dynamics defined by a discretized differential equation. (Although it can also be made stochastic; see \Cref{sec:stochastic}). 
Remarkably, as we explain in \Cref{sec:pdas}, both the vector field underlying the dynamics and the policy function can be jointly learned by stochastic gradient descent on a single loss function. 
Indeed, from an algorithmic learning standpoint, the introduced latent variable might also be thought of as the memory component of a recurrent neural network \cite{Schmidhuber1992, hochreiter1997long, irie2025fastweightprogramminglinear}. 
\looseness=-1 

In this context, we refer to our method as \emph{path-dependent discrete amortized inference}.  
As in hidden Markov models \cite{baum1966statistical}, by lifting the original state space with a dynamic latent variable, we can no longer ensure that the state-only stochastic process ($\{s_{t}\}_{t\ge1}$; \textcolor{ProcessBlue}{blue} in \Cref{fig:lifted_mdp}) is Markovian. 
As we 
argue in \Cref{sec:limitations_markovian}, however, the introduction of path-dependent state-transitions can strictly improve the expressivity of the hypothesis class defined by the space of parametric policies. 
To show this, we present illustrative environments and distributions that can be solved through 
path-dependent parameterizations, but not through their Markovian counterparts. \looseness=-1

Additionally, traditional learning objectives for GFlowNets \cite{malkin2022trajectory, Madan2022LearningGF, robust, tiapkin2024generativeflownetworksentropyregularized}
have to be adjusted to preserve their validity within our lifted space. 
We carry this out in \Cref{sec:pdas} for the trajectory balance (TB), subtrajectory balance (SubTB), and contrastive balance (CB) loss functions.
Complementarily, our empirical analysis in \Cref{sec:experiments} suggests that the proposed path-dependent algorithm frequently achieves faster learning convergence and a better fit to the target 
than a similarly parameterized Markovian model. 
Besides, we also demonstrate in \Cref{sec:distributed} that task-specific techniques for distributed and streaming inference can be seamlessly adapted to the non-Markovian setting \cite{dasilva2024streamingbayesgflownets}. 
In summary, our contributions are as follows. 
\looseness=-1  

\vspace{-8pt}

\begin{enumerate}[leftmargin=0.2cm]
    \item We introduce a path-dependent algorithm for amortized inference in discrete and compositional spaces, outlining the limitations of the conventional Markovian methods. 
    \item We provably extend existing learning algorithms for discrete amortized samplers to this new setting, demonstrating their effectiveness on standard benchmark problems. \looseness=-1 
\end{enumerate}

\begin{figure} 
    \begin{subfigure}{\linewidth}
        \centering
        \includegraphics[width=\linewidth, page=7]{figures/liftedgraphs.pdf}
        \caption{MDP for Discrete Amortized Samplers.}
        \label{fig:mdp}
    \end{subfigure} 

    \begin{subfigure}{\linewidth}
        \centering
        \includegraphics[width=\linewidth, page=8]{figures/liftedgraphs.pdf}
        \caption{Lifted MDP.}
        \label{fig:lifted_mdp}
    \end{subfigure}
    \caption{\textbf{Graphical model for our algorithm.} We propose attaching a \textcolor{LimeGreen}{state-conditional latent dynamical system} $\{W_{t}\}_{t=1}^{T}$ (b) to the GFlowNet's traditional \textcolor{ProcessBlue}{state-only Markov chain} $\{s_{t}\}_{t=1}^{T}$ (a). The resulting stochastic process, $\{(s_{t}, W_{t})\}_{t=1}^{T}$, can be described by $s_{t} | s_{t - 1}, W_{t - 1} \sim p(\cdot | s_{t - 1}, W_{t - 1})$ and $W_{t} = W_{t - 1} + \phi(s_{t}, W_{t - 1})$ for $1 \le t \le T$ and learnable $p, \phi$ (see \Cref{sec:pdas}). 
    \looseness=-1}
    \label{fig:placeholder}
\end{figure}

\section{Preliminaries} \label{sec:background} 


\paragraph{Notations.} Let $\mathcal{X}$ be a finite set, and $R \colon \mathcal{X} \rightarrow \mathbb{R}_{+}$ be a positive function defined on $\mathcal{X}$. 
Our objective is to generate samples from the distribution $\pi(x) \propto R(x)$ over $\mathcal{X}$ induced by $R$.  
For this, we assume that $\mathcal{X}$ has a \emph{compositional structure}, that is, each $x \in \mathcal{X}$ can be obtained through the addition of simple components to an \emph{initial state} $s_{o}$.
This iterative process characterizes a \emph{pointed DAG}, or a \emph{state graph}, which is central to our work \cite{Foundations}. \looseness=-1

\begin{definition}[Pointed DAG] \label{def:normalpointeddag}
    A \emph{pointed DAG} is a tuple $G = (\mathcal{S}, \mathcal{X}, E)$ composed of \emph{terminal} (or \emph{complete}) states $\mathcal{X}$, (incomplete) \emph{states} $\mathcal{S}$, and directed edges $E \subseteq \mathcal{S} \times (\mathcal{S} \cup \mathcal{X})$.
    The elements of $\mathcal{X}$ are the only nodes of $G$ without outgoing edges. 
    Also, there is a unique $s_{o} \in \mathcal{S}$, called \emph{initial state}, without incoming edges and connected to every $x \in \mathcal{X}$ through a directed path (or \emph{trajectory}). See \Cref{fig:lines}. \looseness=-1
\end{definition}

We say that a trajectory $\tau = (s_{t})_{t=0}^{T} \in \mathcal{S}^{T - 1} \times \mathcal{X}$ is \emph{complete} if it starts at the initial state $s_{o}$ and finishes at $s_{T} \in \mathcal{X}$. \looseness=-1  

\paragraph{GFlowNets.} A GFlowNet \cite{Bengio2021, theory} learns a Markovian \emph{forward} policy $p_{F} \colon \mathcal{S} \times (\mathcal{S} \cup \mathcal{X}) \rightarrow [0, 1]$ over a state graph such that the marginal of $p_{F}(s_{o}, \cdot)$ over the terminal states $\mathcal{X}$, \looseness=-1
\begin{equation} \label{eq:marginal} 
    p_{\intercal}(x) \coloneqq \sum_{\tau = (s_{t})_{t=0}^{T} \rightsquigarrow x} \prod_{1 \le t \le T} p_{F}(s_{t} | s_{t - 1}), 
\end{equation}
matches $R(x)$ up to a normalizing constant; $\tau \rightsquigarrow x$ means that $\tau$ is complete and finishes at $x$. 
In this case, $p_{F}(\cdot | s) \coloneqq p_{F}(s, \cdot)$ is supported on the states $s'$ for which $(s, s') \in E$ in the pointed DAG. 
A GFlowNet also introduces a \emph{backward} policy $p_{B}$, which is a forward policy on the transposed state graph, that is used for tractable learning of $p_{F}$. 
Readers may consult \Cref{sec:related} and \cite{Foundations, malkin2022trajectory} for a comprehensive literature review on GFlowNets. \looseness=-1
\looseness=-1  


\newpage

\section{On the limitations of Markovian samplers} \label{sec:limitations_markovian}

\begin{figure}
    \centering
    \includegraphics[width=0.6\linewidth, page=6]{figures/liftedgraphs.pdf}
    \caption{Illustration of the \textsc{Lines} environment with length $N$ and step size $M = 2$. Starting at $p_{o}$, an agent can either move up to $M$ steps forward or transition to a terminal state ($q$) and stop. \looseness=-1}
    \label{fig:lines}
\end{figure}

This section lays out the empirical and provable limitations of Markovian samplers. In Sections~\ref{subsec:linear} and~\ref{subsec:gnns}, we demonstrate that the introduction of path-dependence strictly improves the model's expressivity under certain conditions. In \Cref{subsec:relu}, we 
experimentally show that the phenomenon of \emph{state aliasing} \cite{Whitehead1991, pardo2022timelimitsreinforcementlearning}, in which distinct states become representationally indistinguishable from the agent's perspective, is dramatically mitigated by a path-dependent parameterization. \looseness=-1


\paragraph{A running example.} Throughout this section, we consider the simple state graph depicted in \Cref{fig:lines},  
which we refer to as the $N$-length \textsc{Lines} environment.
There, an agent starts at $p_o$ and, from any state $p_{k}$, can choose between two actions: either progressing up to $M$ states forward along the main path (i.e., to $p_{k + j}$ for $1 \le j \le \min\{M, N - k\}$) or transitioning directly to the terminal state $q_{k}$ and stopping. 
The only information available at each $p_{k}$ and $q_{k}$ is the state's position $k$.
For convenience, we let $\mathcal{P} = \{p_{o}, p_{1}, \dots, p_{N}\}$ and $\mathcal{Q} = \{q_{o}, q_{1}, \dots, q_{N}\}$. 
Under the formalism of Definition~\ref{def:normalpointeddag}, $\mathcal{S} = \mathcal{P}$ and $\mathcal{X} = \mathcal{Q}$ is the set of terminal states.
Clearly, when $M = 1$, the \textsc{Lines} environment corresponds to a one-dimensional version of the standard hypergrid environment \cite{Bengio2021, madan2025towards}.   
Our objective is to sample each $q_{k}$ proportionally to a given positive function $R \colon \mathcal{Q} \rightarrow \mathbb{R}_{+}$.
\looseness=-1



\subsection{Linear policies} \label{subsec:linear} 


This problem can be 
addressed by learning a forward policy $p_{F} \colon \mathcal{P} \times (\mathcal{P} \cup \mathcal{Q}) \rightarrow [0, 1]$ satisfying, for a backward policy $p_{B}$, the trajectory balance condition \cite{malkin2022trajectory}:
\begin{equation} \label{eq:lines} 
    \begin{aligned}
   \frac{ p_{F}(q_{i_k} | p_{i_k:i_o}) \prod_{1 \le j \le k} p_{F}(p_{i_j} | p_{i_{j - 1}:i_o}) } { p_{B}(p_{i_k} | q_{i_k}) \prod_{1 \le j \le k} p_{B}(p_{i_{j - 1}} | p_{i_j}) } = \frac{ R(q_{i_k}) } {Z}
   \end{aligned}
\end{equation}
for each increasing sequence $(i_o, \dots, i_k)$ such that $|i_j - i_{j - 1}| \le M$ and $0 \le i_j \le N$ and $0 \le k \le N$ and $i_o = 0$, in which $i_j:i_o = (i_o, i_1, \dots, i_j)$ for $0 \le j \le k$. 
Although a sufficiently wide \cite{cybenko1989approximation, hornik1989multilayer} or deep \cite{lu2017expressive, hanin2018approximatingcontinuousfunctionsrelu} neural network can approximate any smooth function, including a policy abiding by \Cref{eq:lines}, such a model is not mathematically tractable \cite{laurent2018deep}. \looseness=-1

Instead, motivated by the literature on deep linear networks \cite{baldi1989neural, baldi2012complex, saxe2014exactsolutionsnonlineardynamics, laurent2018deep} and inverse reinforcement learning \cite{arora2020surveyinversereinforcementlearning, metelli2023theoreticalunderstandinginversereinforcement}, we consider linearly parameterized policies in order to understand the differences between the Markovian and path-dependent approaches for discrete amortized inference. 
That is, we fix a \emph{state-embedding} function $\psi \colon \mathcal{P} \rightarrow \mathbb{R}^{d}$ for intermediate states and examine policy functions with the form \looseness=-1 
\begin{equation} \label{eq:policya} 
    p_{F}(p_{i + k} | p_{i}, \mathbf{W}_{i}) = \text{Softmax}( ( \mathbf{W}_i \psi(p_i)) \odot \mathbf{m}_i )_k
\end{equation}
for $1 \le k \le M$, $\mathbf{W}_i \in \mathbb{R}^{d \times ({M + 1})}$ as a learnable parameter and $\mathbf{m}_{i} \in \{1, -\infty\}^{M + 1}$ as a mask that inhibits actions leading to positions with index larger than $N$. 
Under a conventional Markovian linear model, $\mathbf{W}_i = \mathbf{W}$ is fixed. 
This represents a standard GFlowNet implementation \cite{viviano2025torchgfnpytorchgflownetlibrary}. 
In this scenario, we say that an unnormalized distribution $R \colon \mathcal{Q} \rightarrow \mathbb{R}_{+}$ is \emph{learnable} by our sampler if there exists a $\mathbf{W}$ for which the marginal distribution of $p_{F}(\cdot | s_o)$ on $\mathcal{Q}$ matches $R$ up to a normalizing constant (see \eqref{eq:marginal}). 
Notably, when $\{\mathbf{W}_{t}\}_{t\ge1}$ follows a 
linear dynamics, \looseness=-1
\begin{equation} \label{eq:linear_rec} 
    \begin{aligned}
        \mathbf{W}_{t} &= \mathbf{b} \mathbf{a}_{t}^{T}, \\
        \mathbf{a}_{t} &= \mathbf{a}_{t - 1} + ( \mathbf{w}_{a}^T \psi(s_{t}) ) \cdot \mathbf{v}_{a}, 
    \end{aligned}
\end{equation}
with $\mathbf{a}_{o} \in \mathbb{R}^{d}$, $\mathbf{w}_{a} \in \mathbb{R}^{d}$, and $\mathbf{v}_{a} \in \mathbb{R}^{d}$ as learnable parameters and $\mathbf{b} \in \mathbb{R}^{M + 1}$ fixed normally at random, then the space of learnable distributions under \Cref{eq:linear_rec} is strictly larger than that of a corresponding Markovian 
model; this is the content of the next proposition. \looseness=-1

\begin{proposition} \label{prop:lines}
    Let $M = 1$ and $\mathbf{r} = (R(q_i))_{i=0}^{N}$ be the vector of state-wise probabilities. Clearly, both $\lambda \mathbf{r}$ and $\mathbf{r}$ induce the same distribution over $\mathcal{Q}$ for $\lambda > 0$; let $[\mathbf{r}]$ be the corresponding equivalence class. Then, define 
    \begin{equation*}
        \begin{aligned}
            \mathcal{W}_{M} &= \{ [\mathbf{r}] \colon \exists \mathbf{W} \text{ s.t. } p_{\intercal} (q_i) \propto [\mathbf{r}]_{i} \} \\
            \text{ and } \mathcal{W}_{P} &= \{ [\mathbf{r}] \colon \exists \mathbf{a}_{o}, \mathbf{w}_{a}, \mathbf{v}_{a} \text{ s.t. } p_{\intercal} (q_i) \propto [\mathbf{r}]_{i} \}
        \end{aligned}
    \end{equation*}
    as the space of learnable distributions for a Markovian and path-dependent parameterizations. 
    Under these conditions, $\dim \mathrm{span} ( \mathcal{W}_{M} ) \le \dim \mathrm{span} ( \mathcal{W}_{P} )$ almost surely (with respect to $\mathbf{b}$) for all state-embedding functions $\psi \colon \mathcal{P} \rightarrow \mathbb{R}^{d}$, and $\dim \mathrm{span} ( \mathcal{W}_{M} ) < \dim \mathrm{span} ( \mathcal{W}_{P} )$ for some $\psi$, including the natural embedding 
    $\psi(p_{i}) = i$. \looseness=-1
\end{proposition}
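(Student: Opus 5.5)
The plan is to reduce both hypothesis classes to explicit families of \emph{stopping-logit} vectors and then compare the resulting sets of distributions. When $M = 1$, every state $p_i$ with $i < N$ has exactly two actions: move to $p_{i+1}$ or stop at $q_i$. At $p_N$ the mask forces stopping. Hence each complete trajectory ending at $q_i$ is the unique path $p_o \to \dots \to p_i \to q_i$. Writing $\rho_i$ for the stopping probability at $p_i$, \eqref{eq:marginal} becomes $p_{\intercal}(q_i) = \rho_i \prod_{j<i}(1-\rho_j)$ with $\rho_N = 1$.

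The map $(\rho_0,\dots,\rho_{N-1}) \in (0,1)^N \mapsto p_{\intercal}$ is a bijection onto the interior of the simplex: it is inverted by $\rho_i = p_{\intercal}(q_i)/\sum_{j \ge i} p_{\intercal}(q_j)$. So a class $[\mathbf r]$ is learnable iff its stopping-logit vector $\ell_i = \mathrm{logit}(\rho_i)$, $0 \le i < N$, lies in the set of logit vectors the parameterization can produce. Throughout I work with normalized representatives $\mathbf r$ with $\sum_i r_i = 1$, so that $\mathrm{span}$ is taken over these vectors.

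\textbf{Computing the two logit sets.} Under \eqref{eq:policya} with two outputs, the policy is a two-way softmax, so the stopping logit is a difference of two linear readouts. In the Markovian case this gives $\ell_i = \mathbf u^T \psi(p_i)$ for a free $\mathbf u \in \mathbb R^d$, namely the difference of the two columns of $\mathbf W$.

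In the path-dependent case, $\mathbf W_t = \mathbf b \mathbf a_t^T$, so the logit difference is $c\, \mathbf a_t^T \psi(p_t)$ with $c = b_{\text{stop}} - b_{\text{move}}$. Since $\mathbf b$ is Gaussian, $c \neq 0$ almost surely. Because $M = 1$, the trajectory up to time $t$ is deterministic, so \eqref{eq:linear_rec} unrolls to
\[
\mathbf a_t = \mathbf a_o + S_t\, \mathbf v_a, \qquad S_t = \textstyle\sum_{j \le t} \mathbf w_a^T \psi(p_j).
\]
Thus
\[
\ell_t = c\big(\mathbf a_o^T \psi(p_t) + S_t\, \mathbf v_a^T \psi(p_t)\big).
\]
Choosing $\mathbf v_a = 0$ and $\mathbf a_o = \mathbf u / c$ reproduces every Markovian logit vector. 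Hence $\mathcal W_M \subseteq \mathcal W_P$ almost surely, for every $\psi$, and monotonicity of $\mathrm{span}$ gives the first inequality.

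\textbf{Strict inequality.} Here I must show that $\mathcal W_P$ contains a vector outside $\mathrm{span}(\mathcal W_M)$. For the natural embedding $\psi(p_i) = i$ with $d = 1$, the Markov logits are $\ell_i = u i$, a one-parameter curve. The path-dependent logits are
\[
\ell_t = \alpha t + \beta\, t \cdot \tfrac{t(t+1)}{2}, \qquad \alpha = c a_o,\ \beta = c w_a v_a,
\]
a two-parameter surface.

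The plan is to find a linear functional vanishing on every Markov vector but not on some path-dependent one. First, $\ell_0 = 0$ in both models, so $r_0 = \tfrac12 \sum_i r_i$ holds for both; this relation is shared and does not separate them. I would therefore take the smallest nontrivial $N$ (e.g.\ $N = 2$ or $3$) and write the Markov curve $r(u)$ explicitly in terms of $\sigma(u), \sigma(2u), \dots$. I would then compute the dimension of its span, for instance via the Wronskian of its coordinates in $u$ or by Taylor-expanding at $u = 0$, and exhibit an $(\alpha,\beta)$ whose distribution escapes that span. If the natural embedding happens to give full span for the Markov curve at some $N$, I fall back on choosing a $\psi$ that makes the Markov family degenerate. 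An example is $\psi$ supported on a single position: then the Markov logits vanish except at one index, so the span is at most $2$-dimensional, while the product $S_t\, \mathbf v_a^T\psi(p_t)$ with $\mathbf a_o$ still cannot produce more. So the better fallback is a $\psi$ whose $\mathbf a_o$-part and $S_t$-weighted part have different supports.

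\textbf{Main obstacle.} The hard step is this strictness computation. The span of a \emph{nonlinear} one-parameter curve of distributions can be large, so the argument needs an explicit linear-algebraic certificate, such as a vanishing Wronskian or a determinant of sampled points, rather than a parameter count.
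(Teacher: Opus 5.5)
You reduce to stopping logits and prove the first inequality by set inclusion; both steps are correct, and the inclusion is the same content as the paper's $\mathcal{C}\subseteq\mathcal{C}+\mathcal{H}$.

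The gap is the strictness step, and it is more than unfinished. Under your reading (Euclidean span of the normalized vectors $\mathbf{r}$ in the simplex), the strict inequality is false for the natural embedding, so the $(\alpha,\beta)$ you plan to exhibit does not exist. Take $d=1$, $\psi(p_i)=i$ and $x=e^{u}$. The Markov curve is then $r_0=\tfrac12$, $r_k=\tfrac12\,x^{k}\prod_{j=1}^{k}(1+x^{j})^{-1}$ for $1\le k\le N-1$, and $r_N=\tfrac12\prod_{j=1}^{N-1}(1+x^{j})^{-1}$. Multiplying by $2\prod_{j=1}^{N-1}(1+x^{j})$ turns $r_1,\dots,r_N$ into the polynomials $x^{k}\prod_{j=k+1}^{N-1}(1+x^{j})$ and $1$. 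Their lowest-order terms $x^{k}$ and $1$ are distinct, so they are linearly independent. Hence $\mathrm{span}(\mathcal{W}_M)$ is already the whole hyperplane $\{r_0=\sum_{i\ge1}r_i\}$, of dimension $N$. Since $\ell_0=0$ in the path-dependent model too, $\mathcal{W}_P$ lies in the same hyperplane, and both dimensions equal $N$.

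Your fallback cannot rescue this, because the statement names $\psi(p_i)=i$ explicitly. Other embeddings do no better: for constant $\psi$ the Markov family $r_i=\rho(1-\rho)^{i}$, $r_N=(1-\rho)^{N}$ already spans all of $\mathbb{R}^{N+1}$. Your own worry that a nonlinear curve of distributions can have a large span is exactly what breaks this route.

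The paper never takes spans in the simplex. It measures both dimensions through the achievable log-odds vectors: $\dim\mathcal{C}=\mathrm{rank}\,\Psi$ for the Markovian model, versus $\dim(\mathcal{C}+\mathcal{H})$ for the path-dependent one. Here $\mathcal{H}=\{(\mathbf{C}\mathbf{a})\odot(\Psi\mathbf{b})\}$ and $\mathbf{C}$ is the matrix of cumulative embeddings. The strictness claim, in particular for $\psi(p_i)=i$, holds only in this log-odds sense.

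Your bijection between $p_{\intercal}$ and $(\ell_0,\dots,\ell_{N-1})$ is exactly what justifies working in those coordinates. Once spans are taken there, strictness is immediate. The Markov logit vectors form the line through $(0,1,\dots,N-1)$. The path-dependent set contains both $(i)_{i}$ (take $\beta=0$) and $(i\cdot i(i+1)/2)_{i}$ (take $\alpha=0$). These are independent once $N\ge 3$: their entries at $i=1,2$ are $(1,2)$ versus $(1,6)$. No Wronskian or Taylor expansion is needed, and your first-part argument carries over unchanged.
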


In Proposition~\ref{prop:lines}, the span ($\mathrm{span}$) and dimension ($\dim$) are meant in the Euclidean sense. 
In a nutshell, it states that an MDP augmented (or lifted) with a specific linear determinisitic dynamics posseses significantly greater representational power than a simple linear MDP. \looseness=-1

\paragraph{Empirical illustration.} To illustrate Proposition~\ref{prop:lines}, we consider a distribution $R \colon \mathcal{Q} \rightarrow \mathbb{R}_{+}$ with the form 
\begin{equation} \label{eq:laplaces}
    R(q_i) = \sum_{1 \le j \le 4} w_{j} \exp\{ - | i - k_j | \}, 
\end{equation}
in which $k_{j} \in \{1, \dots, N\}$ for $1 \le j \le 4$ and $w_{j} \in [0, 1]$ with $\sum_{j=1}^{4} w_{j} = 1$ as weighting factors; see \Cref{fig:linestraining} (right). 
We also let $\psi$ be $\mathcal{P}$'s natural embedding, i.e., $\psi(p_{i}) = i$ for each $1 \le i \le N$. 
For simplicity, we let $N = 16$ and $M = 1$, and train both models by minimizing the TB loss; see \Cref{sec:pdas} and Corollary~\ref{col:objectives} for further details.  \looseness=-1  
As expected, \Cref{fig:linestraining} shows that the distribution learned by our non-Markovian model closely matches the target distribution, while its Markovian alternative catastrophically underrepresents the target's high-probability regions.

\begin{figure}
    \centering
    \includegraphics[width=\linewidth]{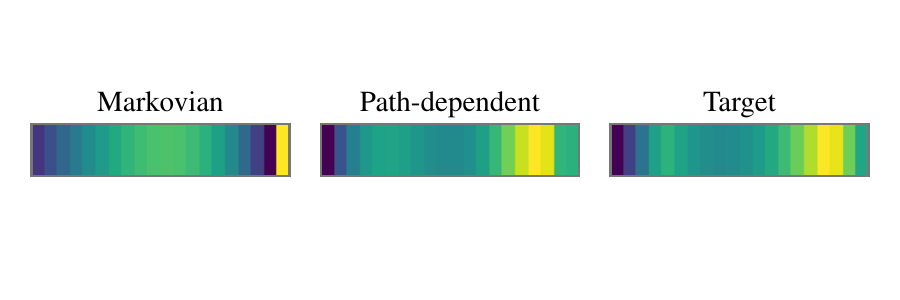}
    \vspace{-44pt}
    \caption{\textbf{Illustration of Proposition~\ref{prop:lines}.} While a linear Markovian model (left) completely misses the high-probability regions of the target (right; see \Cref{eq:laplaces}), our path-dependent parameterization (middle) provides a near-perfect distributional fit. \looseness=-1}
    \label{fig:linestraining}
    \vspace{-12pt}
\end{figure}

\looseness=-1

\subsection{GNN-based policies} \label{subsec:gnns} 

Notably, the proven additional expressivity of path-dependent models can also be extended beyond linear policies.
Drawing on the recent analysis in \cite{silva2025when, kim2025symmetryawaregflownets} and on the long-standing theory about the limitations of graph neural networks \citep[GNNs;][]{kipf2017semisupervisedclassificationgraphconvolutional, hamilton2020graph, bronstein2021geometricdeeplearninggrids, corso2024graph}, we will demonstrate that the lifting process illustrated in \Cref{fig:lifted_mdp} provably boosts the expressive power of GNN-parameterized policy networks for discrete amorized samplers. 
For this, we briefly review the setup in \cite{silva2025when} and the structure of GNNs \cite{xu2020inductiverepresentationlearningtemporal, wang2022inductiverepresentationlearningtemporal} for learning on dynamic graphs. \looseness=-1


 To start with, the reader should recall that a GNN modifies a featured graph $\mathcal{G} = (\mathbf{X}, \cal{E})$ with node features $\mathbf{X} \in \mathbb{R}^{n \times d}$ and edges $\mathcal{E}$ via a neighborhood-aggregating transformation, \looseness=-1
 \begin{equation*}
     \mathbf{x}_i' = \mu \left ( \{ \zeta(\mathbf{x}_{j}) \colon (i, j) \in \mathcal{E} \} \right ) \coloneqq f(\mathbf{x}_{i} ; \mathcal{G}),  
 \end{equation*}
in which $\mathbf{x}_{i} \in \mathbb{R}^{d}$ is the feature of the $i$-th node; $\zeta \colon \mathbb{R}^{d} \rightarrow \mathbb{R}^{d'}$, a parametric function (e.g., an MLP); and $\mu$, a permutation-invariant function on $\mathbb{R}^{d'}$ \cite{hamilton2020graph}. 
Based on this formulation, it is clear that a single-layer GNN will fail to distinguish nodes $\mathbf{x}_{i}$ and $\mathbf{x}_{j}$ for which $\mathbf{x}_{i} = \mathbf{x}_{j}$ and both $i$ and $j$ have isomorphic 1-hop neighborhoods. 
In fact, the expressivity of deep GNNs is well-understood under the light of the Weisfeiler-Lehman hierarchy \citep[WL;][]{Weisfeiler1968, grohe2022logicgraphneuralnetworks}. 

Building on this, \citet{silva2025when} have shown that a GFlowNet constructed on a simple edge-adding generative process (see \Cref{fig:graphssg}) with a policy function defined by \looseness=-1
\begin{equation} \label{eq:graphss} 
    p_{F}(\mathcal{G}^{+(m,n)} | \mathcal{G}) \propto \exp \{ \zeta ( f(\mathbf{x}_{m}; \mathcal{G}) \oplus f(\mathbf{x}_{n} ; \mathcal{G}) ) \},  
\end{equation}
in which $\mathcal{G}^{+(m, n)}$ is the graph resulting from adding the edge $(i, j)$ to $\mathcal{G}$, $\oplus$ is the concatenation operator, $f$ is a 1-WL multi-layer GNN, and $\zeta \colon \mathbb{R}^{2d'} \rightarrow \mathbb{R}$ is a feedforward deep network, 
cannot sample from some simple graph-structured distributions. 
This is another manifestation of state aliasing, as the sampler fails to distinguish distinct states due to its limited representational capacity. 
We show below that our path-dependent parameterization mitigates this issue. 

\begin{proposition}[Temporal GNNs improve GFlowNet's expressivity] \label{prop:temporal_gnns}
    Let $\mathrm{SGs}$ be the space of state graphs defined by the edge-additive process over featured graphs (\Cref{fig:graphssg}), and let $\mathcal{R}(\mathcal{S})$ be the space of \emph{all} unnormalized distributions over the graphs in $\mathcal{S} \in \mathrm{SGs}$.
    Also, by letting $\mathbf{A}_{i}$ be the adjacency matrix of the $i$-th graph and $\mathbf{1} \in \mathbb{R}^{|\mathbf{X}|}$ be column vector of 1's, 
    define \looseness=-1 
    \begin{equation*}
        \mathbf{m}^{(o)} = \mathbf{0}, \
        \mathbf{m}^{(i + 1)} = \mathbf{m}^{(i)} + \alpha(\mathcal{G}_{i}) \cdot (\mathbf{A}_{i + 1} - \mathbf{A}_{i}) \mathbf{1} 
    \end{equation*}
    and $\mathbf{V}^{(i + 1)} = [ \mathbf{X} || \mathbf{m}^{(i+ 1)} ] \in \mathbb{R}^{n \times (d + 1)}$ and $\alpha(\mathcal{G}_{i}) \in (0, 1]$.   
    Similarly, for $\mathcal{S} \in \mathrm{SGs}$, let $\mathcal{R}_{P}(\mathcal{S}) \subseteq \mathcal{R}(\mathcal{S})$ be the set of distributions that a path-dependent amortized sampler based on a GNN $f_{P}$ and a feedforward deep network $\zeta_{P}$, 
    \begin{equation} \label{eq:pdgraphss} 
        p_{F}(\mathcal{G}^{+(m,n)} | \mathcal{G}_i) \propto \exp \{ \zeta_{P}(f_{P}(\mathbf{v}_{m}^{(i)}; \mathcal{G}) \oplus f_{P}(\mathbf{v}_{n}^{(i)} ; \mathcal{G}) ) \},  
    \end{equation}
    can sample from, and let $\mathcal{R}_{M}(\mathcal{S}) \subseteq \mathcal{R}(\mathcal{S})$ be the set of distributions learnable by an amortized sampler in the form of \Cref{eq:graphss}.
    Then, $\mathcal{R}_{M}(\mathcal{S}) \subseteq \mathcal{R}_{P}(\mathcal{S})$ for every $\mathcal{S} \in \mathrm{SGs}$, and there exist sets $S$ for which $\mathcal{R}_{M}(\mathcal{S}) \neq \mathcal{R}_{P}(\mathcal{S})$. 
\end{proposition}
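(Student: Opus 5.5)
The plan has two parts: first show the inclusion $\mathcal{R}_{M}(\mathcal{S}) \subseteq \mathcal{R}_{P}(\mathcal{S})$ by simulating any Markovian sampler with a path-dependent one, and then show strictness with an explicit state graph that a 1-WL Markovian GNN cannot handle.

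\textbf{Inclusion.} Take any $R \in \mathcal{R}_{M}(\mathcal{S})$, realized by a GNN $f$ and head $\zeta$ as in \Cref{eq:graphss}. Build $f_{P}$ so that it ignores the extra coordinate $\mathbf{m}^{(i)}$ of $\mathbf{v}^{(i)}$. Concretely, let its first layer project $[\mathbf{x} \,\|\, m] \mapsto \mathbf{x}$, and let all later layers copy the layers of $f$. Then $f_{P}(\mathbf{v}^{(i)}_{m}; \mathcal{G}) = f(\mathbf{x}_{m}; \mathcal{G})$ for every node and every history. With $\zeta_{P} = \zeta$, \Cref{eq:pdgraphss} reproduces the Markovian policy exactly, so the terminal marginal $p_{\intercal}$ is unchanged and $R \in \mathcal{R}_{P}(\mathcal{S})$. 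This step is routine; the only point to check is that the GNN class is closed under prepending a linear projection, which holds for MLP-based $\zeta$ in the message functions.

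\textbf{Strictness.} Reuse the counterexample of \citet{silva2025when}. Pick a state graph in $\mathrm{SGs}$ whose initial featured graph has two candidate edges $(m,n)$ and $(m',n')$. These edges must be indistinguishable by 1-WL at every state along the relevant trajectories, for instance because of a symmetric regular structure with identical features. Yet adding them must produce non-isomorphic terminal graphs, to which $R$ assigns different values. By that earlier result, any Markovian policy in \Cref{eq:graphss} gives aliased edge pairs equal logits at every visited state. Hence the two non-isomorphic terminal graphs get tied probabilities, and such an $R$ lies outside $\mathcal{R}_{M}(\mathcal{S})$.

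For the path-dependent model, the accumulated vector $\mathbf{m}^{(i)} = \sum_{j<i} \alpha(\mathcal{G}_{j})(\mathbf{A}_{j+1}-\mathbf{A}_j)\mathbf{1}$ records, for each node, a history-weighted degree increment. Two features do the work: $\alpha$ depends on the graph at each step, and edge additions change degrees. So if we choose $\alpha$ injective enough, for example $\alpha(\mathcal{G}_j) = 2^{-c(j)}$ where $c$ is a distinct code for the step index or the isomorphism class, then nodes added at different times or under different histories receive distinct $\mathbf{m}$ values. This breaks the 1-WL symmetry between the aliased pairs. Once the node colorings $f_{P}(\mathbf{v}^{(i)};\mathcal{G})$ separate the relevant pairs, a universal-approximation argument for $\zeta_{P}$ lets us assign arbitrary logits to these distinguishable actions. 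The trajectory-balance construction of \citet{Foundations} then gives a policy whose marginal equals the target $R$, so $R \in \mathcal{R}_{P}(\mathcal{S}) \setminus \mathcal{R}_{M}(\mathcal{S})$.

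\textbf{Main obstacle.} The hard step is the distinguishing claim. The augmented features $\mathbf{m}^{(i)}$ must actually separate the aliased actions, and not only ones that already differ in degree. Note also that two trajectories reaching the same graph produce the same final degrees, and may produce the same $\mathbf{m}$ if $\alpha$ is constant. I would first work out the smallest concrete example by hand, such as two disjoint triangles versus a hexagon-type aliasing. There I would exhibit an $\alpha$ under which the order of edge insertion yields different $\mathbf{m}$ values at the symmetric endpoints, and then verify by direct computation that the resulting policy can match a target that distinguishes the two terminal graphs.
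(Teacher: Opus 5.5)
Your inclusion argument is the paper's: project away the appended $\mathbf{m}$-coordinate in the first layer. That half is fine.

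The strictness half is not proved. Its whole content is exhibiting one state graph in which the lifted features separate the aliased actions, and you defer exactly this to a later hand computation. You do not fix $\mathcal{S}$ or the decision state. You also do not check that the terminals are reachable only through that aliased decision, which is what makes equal logits force $p_{\intercal}(q_{1})=p_{\intercal}(q_{2})$.

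The mechanism you sketch also aims at the wrong separation. Making $\mathbf{m}$ differ \emph{across} histories (insertion order, $\alpha(\mathcal{G}_{j})=2^{-c(j)}$) is not what is required. At the decision state $p$, the two aliased endpoints must carry different $\mathbf{m}$-entries \emph{within} each trajectory, with an orientation that does not cancel when you sum over trajectories. Consider two histories that differ only in which endpoint was touched first, with $\alpha$ depending only on the step index (one of your two options). Suppose that first choice is symmetric for the lifted model, e.g.\ it is made at $s_{o}$, where $\mathbf{m}^{(o)}=\mathbf{0}$. Then the two histories are equiprobable and the order features are merely swapped between them, so the uniform marginal survives. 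You also treat $\alpha$ as a design choice, whereas the statement takes it as given.

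The paper closes this step with the explicit state graph of \Cref{fig:graphssga}. There $p$ is the only parent of $q_{1}$ and $q_{2}$, and a 1-WL GNN cannot separate $(a,b)$ from $(a,c)$ at $p$. But along every trajectory reaching $p$, the node $b$ is modified once and $c$ twice. Hence $\mathbf{m}_{b}\neq\mathbf{m}_{c}$ at $p$ in every history, with the same orientation.

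This already holds for constant $\alpha$. Then $\mathbf{m}^{(i)}=\alpha(\mathbf{A}_{i}-\mathbf{A}_{o})\mathbf{1}$ records degree increments relative to the initial graph. A GNN run on the current graph cannot recover these when the initial degrees compensate.

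So your remark that a constant $\alpha$ makes $\mathbf{m}$ path-independent is true but beside the point; path-independence is harmless. What is needed is a consistent count-based asymmetry between the two endpoints inside one state. Given that, $\zeta_{P}$ can set any logit gap at $p$, and hence realize any ratio $R(q_{1})/R(q_{2})$.
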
 

Intuitively, Proposition~\ref{prop:temporal_gnns} states that including the information of when a node's degree has changed (\Cref{eq:pdgraphss}) into the generative process strictly improves the expressivity of the learned sampler when compared against the original formulation in \Cref{eq:graphss}. 
In doing so, the resulting 
generative process clearly fits into the path-dependent framework described in \Cref{fig:lifted_mdp}, 
as the policy function no longer satisfies the Markovian property.
With this in mind, it would be interesting to understand the performance and limitations of this family of models for graph-structured tasks (e.g., 
\citet{pandey2024gflownet}) under the lens of the analysis in \cite{souza2022provablyexpressivetemporalgraph}.
We leave this investigation to future endeavors. \looseness=-1

\begin{figure*}[!ht] 
    \begin{subfigure}{0.68\textwidth}
        \centering
        \includegraphics[width=\linewidth, page=5]{figures/liftedgraphs.pdf}
        \caption{Common GNN architectures (e.g., GCN, GAT) cannot distinguish the edges $(a, b)$ and $(a, c)$ on the graph $p$. Consequently, a GFlowNet parameterized as in \Cref{eq:graphss} would fail to learn any non-uniform distribution over $\{q_{1}, q_{2}\}$ when trained on this state graph.}
        \label{fig:graphssga}
    \end{subfigure} 
    \hspace{8pt}
    \begin{subfigure}{0.27\textwidth}
        \centering
        \vspace{-36pt}
        \includegraphics[width=.8\linewidth]{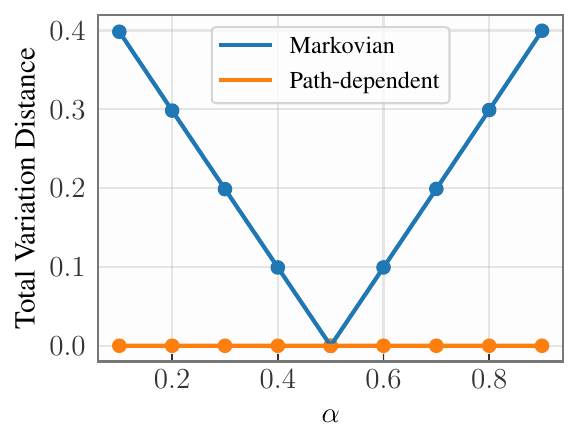}
        \caption{However, due to its larger expressive power, a path-dependent sampler perfectly fits any target distribution. \looseness=-1}
        \label{fig:graphssgb}
    \end{subfigure} 
    \caption{\textbf{Illustration of Proposition~\ref{prop:temporal_gnns}.} We present an example in which a 1-WL GNN-based Markovian amortized sampler (\Cref{eq:graphss}) cannot sample from an non-uniform target over $\{q_{1}, q_{2}\}$, while a corresponding path-dependent model can (right).} \label{fig:graphssg} 
    \vspace{-9.9pt}
\end{figure*}

\paragraph{Empirical illustration.} To shed light on Proposition~\ref{prop:temporal_gnns}, we present in \Cref{fig:graphssga} a state graph $\mathcal{S}$ for which $\mathcal{R}_{P}(\mathcal{S}) \neq \mathcal{R}_{M}(\mathcal{S})$.  
Notice, in particular, that $\mathcal{S}$ has two terminal nodes ($q_{1}$ and $q_{2}$) preceded by a non-terminal node ($p$).
Due to the indistinguishability of node pairs $(a, b)$ and $(a, c)$ in $p$, a GFlowNet based on \Cref{eq:graphss} will fail to learn any non-uniform distribution over $q_{1}$ and $q_{2}$. 
In contrast, as $a$ and $c$ are modified at distinct moments in the different trajectories leading to $p$, our path-dependent parameterization in \Cref{eq:pdgraphss} is able to fit any distribution. 
This fact is illustrated in \Cref{fig:graphssgb}, wherein we let \looseness=-1
\begin{equation*}
    R(q_{1}) = \alpha \text{ and } R(q_{2}) = 1 - \alpha
\end{equation*}
for $\alpha \in \{0.1, 0.2, \dots, 0.9\}$ and use a standard two-layer GCN \cite{kipf2017semisupervisedclassificationgraphconvolutional} for parameterizing the policies, which are trained by minimizing the TB loss in Corollary~\ref{col:objectives}. 
As expected, the total variation (TV) distance between the target $R$ and the GFlowNet's learned distribution, which 
is uniform (as discussed above), is \looseness=-1 
\begin{equation*}
    \mathrm{TV}(R, p_{\intercal}) \coloneqq \frac{1}{2} \left( \left| \alpha - \frac{1}{2} \right| + \left| (1 - \alpha) - \frac{1}{2} \right| \right) = \left| \alpha - \frac{1}{2} \right|, 
\end{equation*}
while the same TV distance for our path-dependent model in \Cref{eq:pdgraphss} is vanishingly small. \looseness=-1 

\subsection{Multi-layer ReLU networks} \label{subsec:relu}

A ReLU network learns a representation of each state by composing piecewise-linear functions, $f_{i} \colon \mathbb{R}^{d_{i}} \rightarrow \mathbb{R}^{d_{i + 1}}$, defined by $f_{i}(\mathbf{x}) = \max (\mathbf{W}_{i} \mathbf{x}, 0)$, in which the $\max$ operator is applied elementwise; $d_{o}$ is the dimension of each state's natural embedding into the Euclidean space (e.g., $d_{o} = 1$ for the \textsc{Lines} environment). 
Although the composition $f_{1} \circ f_{2}$ can approximate any target continuous function from $\mathbb{R}^{d_{o}}$ to $\mathbb{R}^{d_{2}}$ given a sufficiently large $d_{1}$ \cite{hornik1989multilayer}, finding the appropriate parameters $\mathbf{W}_{o}$, $\mathbf{W}_{1}$, and $\mathbf{W}_{2}$ by gradient descent is often difficult---especially when the target has a large Lipschitz constant \cite{scaman2019lipschitzregularitydeepneural}. In the realm of RL, this limitation commonly manifests itself in the form of \emph{state aliasing}, in which the learned model is unable to accurately distinguish states $\mathbf{x}_{1}$ and $\mathbf{x}_{2}$, regardless of their corresponding optimal action distributions. 
In other words, from the agent's perspective, the environment becomes \emph{partially observable}. 
\looseness=-1

\begin{figure*}
    \centering
    \begin{subfigure}{.25\textwidth}
        \includegraphics[width=\linewidth, page=4]{figures/liftedgraphs.pdf}
        \caption{Sparse \textsc{Lines} environment; $R(q_{i}) = 10^{-3}$ for all $i \notin \{2, 20\}$.} 
        \label{fig:linesmodelsa}
    \end{subfigure}
    \hspace{6pt}
    \begin{subfigure}{.25\textwidth}
        \includegraphics[width=\linewidth]{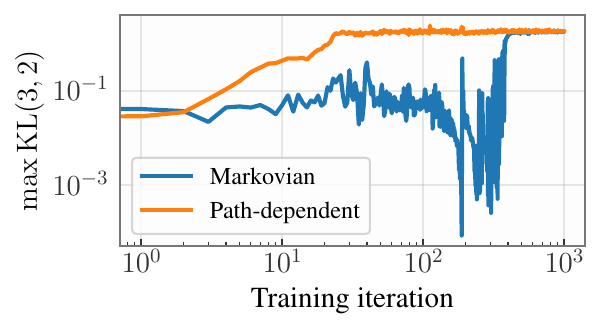}
        \caption{$\mathrm{KL}$-divergence between policies at $p_{3}$ and $p_{2}$; see~\Cref{eq:max_kl}.}
        \label{fig:linesmodelsb}
    \end{subfigure}
    \hspace{6pt}
    \begin{subfigure}{.4\textwidth}
        \includegraphics[width=\linewidth]{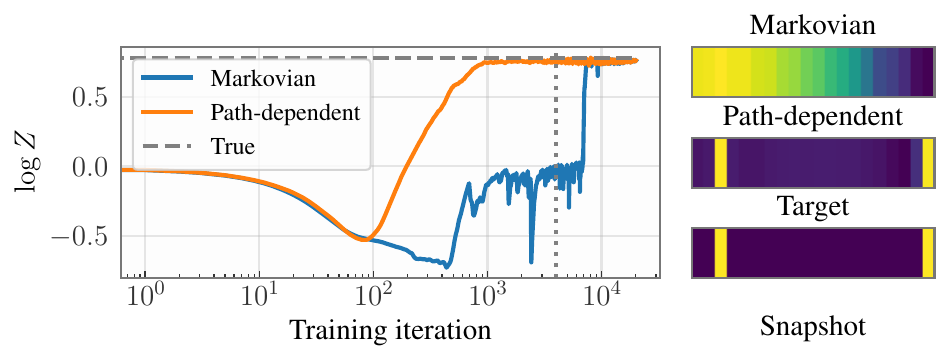}
        \caption{Learning convergence for a \textcolor{blue}{Markovian} and a \textcolor{orange}{path-dependent} sampler for the sparse \textsc{Lines} environment.}
        \label{fig:linesmodelsc}
    \end{subfigure}
    \caption{\textbf{Markovian samplers struggle to distinguish similar states 
    } (\Cref{subsec:relu}). (a) \textsc{Lines} environment with a sparse target; the optimal action distribution at $p_{2}$ puts a large weight on moving up (due to the largeness of $R(q_{2})$), while the one at its adjacent state, $p_{3}$, is biased towards moving to the right. (b) $\mathrm{KL}$-divergence between the policies at $p_{2}$ and $p_{3}$ throughout training; a ReLU network-parameterized Markovian sampler takes around 100 times more iterations to separate $p_{2}$ and $p_{3}$ than its path-dependent counterpart. (c) This results in a better convergence rate. The right-hand-side snapshot shows the sampler's state at the iteration marked by the dotted 
    line. \looseness=-1
    } \label{fig:linesmodels}
    \vspace{-9.9pt} 
\end{figure*}

To understand how this phenomenon emerges in the context of Markovian amortized samplers, consider the \textsc{Lines} environment with $M = 2$ and the target $R$ over $\mathcal{Q}$ depicted in \Cref{fig:linesmodelsa}. There, $R(q_{2}) = 1000 R(q_{1})$ and $R(q_{20}) = R(q_{2})$. 
Under these conditions, the optimal policy $p_{F}^{\star}$ at $p_{2}$ and $p_{3}$ differ significantly. 
Hence, as shown in \Cref{fig:linesmodelsb}, a Markovian sampler based on a 2-layer ReLU network with a softmax output cannot easily separate $p_{2}$ and $p_{3}$, as the Kullback-Leibler divergence \cite{kullback1951information} \looseness=-1 
\begin{equation} \label{eq:max_kl}
    \max_{\tau \colon p_{o} \rightsquigarrow p_{2}, \tau' \colon p_{o} \rightsquigarrow p_{3}} \mathrm{KL}\left[ p_{F}(\cdot | p_{3}, \tau')  || p_{F}(\cdot | p_{2}, \tau) \right] 
\end{equation}
remains essentially constant and near $0$ throughout training (\Cref{fig:linesmodelsb}, \textcolor{blue}{blue}). 
Contrarily, a path-dependent model---which has the ability to learn distinct values of $p_{F}(\cdot|p_{2},\tau)$ for distinct $\tau$---can smoothly learn a distinct representation for $p_{2}$ and $p_{3}$ early in training (\Cref{fig:linesmodelsb}, \textcolor{orange}{orange}).  
This also results in faster learning convergence and a better fit to the target distribution given a limited budget of trajectories; see \Cref{fig:linesmodelsc}, in which we used the model in \Cref{eq:linear_rec} for modelling path-dependence. 
\looseness=-1

\section{Path-dependent Amortized Samplers}
\label{sec:pdas}

The last section showed that a path-dependent algorithm can provide a provable expressivity boost for discrete amortized samplers. 
Nonetheless, it left the following question unanswered: how to define the latent dynamical process $\{W_{t}\}_{t\ge1}$? 
After presenting a formal definition of our framework (\Cref{sec:pdai} and Definition~\ref{def:pointeddag}), we introduce a general-purpose algorithm for learning the vector field $\phi$ characterizing the latent dynamical system $\{W_{t}\}_{t\ge1}$---through the rule $W_{t} = W_{t - 1} + \phi(W_{t - 1}, s_{t})$---and demonstrate its correctness (\Cref{subsec:latent} and Proposition~\ref{prop:lifted_balance}). 
\looseness=-1

\subsection{Path-dependent Discrete Amortized Inference} \label{sec:pdai}


Based on the prior examples, we provide a formal description of our framework for path-dependent discrete amortized inference. 
Crucially, although Proposition~\ref{prop:temporal_gnns} suggests that different applications may benefit from specialized implementations, we will see in Sections~\ref{sec:pdas} and~\ref{sec:experiments} that a single architecture inspired by recent advances in recurrent RL \cite{irie2022modernselfreferentialweightmatrix, irie2025fastweightprogramminglinear} is able to outperform standard Markovian models on most benchmark problems. 
With this in mind, we start by defining a \emph{lifted pointed DAG}.  
\looseness=-1

\begin{definition}[Lifted Pointed DAG] \label{def:pointeddag}
    Let $\mathrm{SG} = (\mathcal{S}, \mathcal{X}, E)$ be a pointed DAG (as per Definition~\ref{def:normalpointeddag}) with state space $\mathcal{S}$, edges $E$, and  terminal states $\mathcal{X}$. 
    Also, let $\phi \colon \mathcal{S} \times \Omega \rightarrow \Omega$ be a function over some set $\Omega$, and $W_{o} \in \Omega$.
    We say that $\mathrm{SG}^\star = (\mathcal{S}^{\star}, \mathcal{X}, E^\star, \phi, W_{o})$ is a \emph{pointed DAG lifted by $\phi$ and $W_{o}$} when the following conditions are satisfied. 
    \begin{enumerate}[leftmargin=.38cm]
        \item $\mathcal{S}^{\star} \subseteq \mathcal{S} \times \Omega$, $E^{\star} \subseteq \mathcal{S}^{\star} \times ( \mathcal{S}^{\star} \cup \mathcal{X})$, $s_{o}^{\star} \coloneq (s_{o}, W_{o}) \in \mathcal{S}^\star$; \looseness=-1
        \item $(s, W) \in \mathcal{S}^{\star}$ iff there is a path $\{(s_{t}, W_{t})\}_{t=0}^{T}$ such that $(s_{T}, W_{T}) = (s, W)$, $W_{t} = W_{t - 1} + \phi(s_{t}, W_{t - 1})$, and $(s_{t - 1}, s_{t}) \in E$ for all $1 \le t \le T$; 
        \item $((s, W), (s', W')) \in E^{\star}$ iff $(s, s') \in E$ are connected in $\mathrm{SG}$ and $W' = W + \phi(s', W)$; and 
        \item $((s, W), x) \in E^{\star}$ iff $(s, W) \in \mathcal{S}^{\star}$ and $(s, x) \in E$. 
    \end{enumerate}
    When $\phi$, $W_{o}$, and $\mathrm{SG}$ are clear from the context, we shall refer to $\mathrm{SG}^{\star}$ simply as the lifted pointed DAG. \looseness=-1 
\end{definition}

In Definition~\ref{def:pointeddag}, we will often let $\Omega = \mathbb{R}^{d}$ for some integer $d$. 
As illustrated in \Cref{fig:lifted_mdp}, a lifted pointed DAG extends a state graph with a latent dynamical system $\{W_{t}\}_{t\ge1}$ abiding by the difference equation determined by $\phi$ and conditioned on $\{s_{t}\}_{t\ge1}$ (\emph{state-conditional}). 
In the following examples, we show that the problems outlined in the previous section fit into the structure presented in Definition~\ref{def:pointeddag}. \looseness=-1 

\begin{example}[\textsc{Lines}]
    In the context of \Cref{fig:lines}, $\mathcal{S} = \mathcal{P}$, $\mathcal{X} = \mathcal{Q}$, and $E = \bigcup_{i=0}^{N} \{(p_{i}, p_{i + k}) \text{ for } 1 \le k \le \min\{M, N - i\} \} \cup \{(p_{i}, q_{i})\}$. 
    For Proposition~\ref{prop:lines}, $W_{o} = \mathbf{a}_{o}$, $\Omega = \mathbb{R}^{d}$ ($\mathbf{b}$ is fixed), and $\phi \colon \mathcal{S} \times \mathbb{R}^{d} \rightarrow \mathbb{R}^{d}$ is defined by 
    \begin{equation*}
        \phi(s, \mathbf{a}) = ( \mathbf{w}_{a}^{T} \psi(s)) \cdot \mathbf{v}_{a},
    \end{equation*}
    in which $\mathbf{w}_{a}$ and $\mathbf{v}_{a}$ are learnable parameters and $\psi$ is an embedding function. Also, notice that $\phi$ is independent of $\mathbf{a}$. \looseness=-1
\end{example}

\begin{example}[\textsc{Graphs}]
    We consider the edge-additive process illustrated in Proposition~\ref{prop:temporal_gnns}. There, $\mathcal{S}$ and $\mathcal{X}$ are sets of featured graphs with a fixed number $N$ of nodes, denoted by $\mathcal{G} = (\mathbf{X}, \mathcal{E})$; $\mathcal{E}$ is the set of edges.
    Also, $E = \{(\mathcal{G}, \mathcal{G}') \text{ for } | \mathcal{E}' \setminus \mathcal{E}| = 1 \text{ and } \mathcal{E} \subseteq \mathcal{E}'\}$.  
    For instance, $|\mathcal{X}| = 2$ and $|\mathcal{S}| = 5$ in \Cref{fig:graphssga}.
    In Proposition~\ref{prop:temporal_gnns}, $W_{o} = (\mathbf{m}_{o}, \mathbf{A}_{o})$, with $\mathbf{m}_{o} = \mathbf{0} \in \mathbb{R}^{N}$ and $\mathbf{A}_{o}$ as the initial state's adjacency matrix, $\Omega = \mathbb{R}^{N} \times \{1, 0\}^{N \times N}$, and $\phi \colon \mathcal{S} \times \Omega \rightarrow \Omega$ is characterized by the recurrence relation
    \begin{equation*}
        \phi(s, (\mathbf{m}, \mathbf{A})) = 
        \left(\alpha(s) \cdot (\mathbf{A}_{s} - \mathbf{A}) \mathbf{1} , \ \mathbf{0}_{N \times N}\right), 
    \end{equation*}
    in which $\alpha \colon \mathcal{S} \rightarrow (0, 1]$, $\mathbf{1}$ is a $N$-sized vector of $1$s, and $\mathbf{A}_{s}$ is $s$'s adjacency matrix. 
    Correspondingly, the update step for the dynamic system $\{W_{t}\}_{t\ge1}$ can be written as
    \begin{equation*}
        \mathbf{A}_{t + 1} = \mathbf{A}_{t}, \ \mathbf{m}_{t + 1} = \mathbf{m}_{t} + \alpha_{t} \left( \mathbf{A}_{t + 1} - \mathbf{A}_{t} \right) \mathbf{1},
    \end{equation*}
    with $\alpha_{t} = \alpha(s_{t})$. Here, $\phi$ depends on
    $s$ and $W = (\mathbf{m}, \mathbf{A})$. \looseness=-1
\end{example}

Interestingly, a stochastic latent dynamical system can also be developed (e.g., as in \citet{Srkk2019}). Nonetheless, this significantly increases the problem's complexity, as the lifted domain becomes uncountable; see \Cref{sec:stochastic} for a formal exposition and preliminary derivations. 
\looseness=-1

\subsection{Choosing a latent dynamical system} \label{subsec:latent} 

\paragraph{Recurrent policy network.} To select an appropriate update rule for $\{W_{t}\}_{t\ge1}$, we initially remark upon the fact that \looseness=-1
\begin{equation*}
    W_{t} = W_{t - 1} + \phi(s_{t}, W_{t - 1})
\end{equation*}
can be directly implemented by a recurrent neural network \citep[RNN;][]{Rumelhart1986, cho2014learning}. A single-layer vanilla RNN with identity activation, for instance, would provide the equation  
\begin{equation*}
    \phi(s_{t}, W_{t}) = \psi(s_{t}) \mathbf{W}_{s} + W_{t - 1} (\mathbf{W}_{h} - \mathbf{I}) + \mathbf{b}_{h}, 
\end{equation*}
with $W_{t} \in \mathbb{R}^{d}$, $\mathbf{W}_{s} \in \mathbb{R}^{d \times d}$, $\mathbf{W}_{h} \in \mathbb{R}^{d \times d}$, and $\mathbf{b}_{h} \in \mathbb{R}^{d}$ as optimizable parameters and 
$\psi \colon \mathcal{S} \rightarrow \mathbb{R}^{d}$
as a (possibly learnable) embedding function for $\mathcal{S}$. 
The resulting representation, $W_{t}$, could then be used as an input to a standard multi-layer neural network for computing a probability distribution over $s_{t}$'s children in the state graph (as in \Cref{eq:policya}). In practice, this process would be repeated until the sampling of a terminal state $x \in \mathcal{X}$, as explained in \Cref{sec:background}. 
We refer to a forward policy $p_{F}$ parameterized by such a recurrent network as a \emph{recurrent policy network} \cite{karkus2017qmdpnetdeeplearningplanning, ni2022recurrentmodelfreerlstrong}. \looseness=-1
\looseness=-1 

\paragraph{Self-Referential Weight Matrix (SRWM).} Based on the long-standing success of Fast Weight Programmers \citep[FWPs;][]{schlag2021lineartransformerssecretlyfast, irie2025fastweightprogramminglinear} on RL and on our own empirical assessment (see \Cref{sec:experiments}), we propose using a modified self-referential weight matrix \citep[SRWM;][]{schmidhuber1993self, irie2022modernselfreferentialweightmatrix} for learning the sampler's recurrent policy network.
Simply put, our modified SRWM implements an update rule with the 
form: \looseness=-1  
\begin{equation} \label{eq:ASRWM}
    \begin{aligned}
        \mathbf{q}_{t}, \mathbf{k}_{t}, \mathbf{v}_{t}, \beta_{t} &= \mathbf{W} \psi(s_{t}) + \mathbf{b}, \\ 
        \bar{\mathbf{v}}_{t} &= \mathbf{W}_{t - 1} \zeta(\mathbf{k}_{t}), \\
        \mathbf{W}_{t} &= \mathbf{W}_{t - 1} \mathbf{R} + \sigma(\beta_{t}) \cdot (\mathbf{v}_{t} - \bar{\mathbf{v}}_{t}) \otimes \zeta(\mathbf{q}_{t}), 
    \end{aligned}
\end{equation}
in which $\mathbf{W} \in \mathbb{R}^{(3 d + 1) \times d}$ and $\mathbf{b} \in \mathbb{R}^{3d + 1}$ are parameters, $\zeta \colon \mathbb{R}^{d} \rightarrow \Delta_{d - 1}$ is an activation function that maps a vector in $\mathbb{R}^{d}$ into the $(d - 1)$-dimensional simplex (e.g., softmax), $\mathbf{R}$ is a (fixed) ergodic rotation matrix, and $\sigma(x) = (1 + \exp\{-x\})^{-1}$ is the sigmoid function.
Intuitively, $\mathbf{R}$ enforces diversity into the policy by rotating the eigenvectors of $\mathbf{W}_{t}$ throughout the rollout of a trajectory, which reduces the aliasing-related effects described in \Cref{subsec:relu}.
Under the terminology introduced by \citet{schmidhuber1993self, schmidhuber1993introspective}, $\mathbf{W}$ and $\mathbf{b}$ are \emph{slowly changing} variables optimized via stochastic gradient descent, while $\mathbf{W}_{t - 1}$ is a \emph{rapidly evolving} matrix with a self-determined learning rate $\sigma(\beta_{t})$. 
Correspondingly, we also parameterize the initial state for the latent dynamical system as $\mathbf{W}_{o} = \mathbf{a}_{o} \mathbf{b}_{o}^T$, in which both $\mathbf{a}_{o} \in \mathbb{R}^{d \times 1}$ and $\mathbf{b}_{o} \in \mathbb{R}^{d \times 1}$ are optimizable parameters. 
\looseness=-1



Importantly, we notice that other architectures, such as LSTMs \cite{hochreiter1997long} and GRUs \cite{cho2014learning}, have also been tested for parameterizing the update rule $\phi$ in early experiments. However, we found them to be generally less effective than a comparably sized SRWM, which corroborates evidence in prior work \cite{irie2022modernselfreferentialweightmatrix}. \looseness=-1 

\subsection{Learning a latent dynamical system} \label{sec:learning_latent}

\paragraph{Criteria for path-dependent samplers.} Given the SRWM-based architecture, we are left with the problem of estimating the model's parameters.  
In this context, we demonstrate that the learning criteria for GFlowNets can also be used for learning a policy function on our lifted DAG. (Interested readers are referred to \Cref{sec:recurrenta} for an extension of the flow network analogy to our context).
We start by recalling the definition of a \emph{state flow} \cite{Bengio2021}. \looseness=-1

\begin{definition} \label{def:flows}
    A \emph{state flow} for a target $R \colon \mathcal{X} \rightarrow \mathbb{R}_{+}$ is any function $F \colon \mathcal{S}^{\star} \cup \mathcal{X} \rightarrow \mathbb{R}_{+}$ s.t. $F(x) = R(x)$ for all $x \in \mathcal{X}$. \looseness=-1
\end{definition}

In practice, $F$ is a learned function representing the total flow arriving at each $s^{\star} \in \mathcal{S}^{\star}$. Under certain conditions (established below), this is also equivalent to the unnormalized marginal probability of reaching $s^{\star}$ when starting at the initial state $s_{o}^{\star}$ and following the forward policty $p_{F}$ \cite{Bengio2021}. As such, it is natural to 
restrict $F$ 
in $\mathcal{X}$ by letting $F(x) = R(x)$ for $x \in \mathcal{X}$ \cite{deleu2022bayesian}, as we do in Definition~\ref{def:flows}. 
From this perspective, we show in the next proposition that \citet{Madan2022LearningGF}'s subtrajectory balance (SubTB) condition is enough to ensure sampling correctness on the lifted DAG of Definition~\ref{def:pointeddag}. \looseness=-1

\begin{proposition} \label{prop:lifted_balance}
    Let $\mathcal{S}^{\star}$ be a lifted pointed DAG, and $p_{F}$, $p_{B}$, and $F$ be a recurrent forward policy, a backward policy, and a flow function on $\mathcal{S}^{\star}$. 
    Assume that, for each complete trajectory $\tau = (s_{t}^{\star})_{t=0}^{T}$ with $s_{T}^{\star} = x \in \mathcal{X}$ and each $i < j$, \looseness=-1
    \begin{equation} \label{eq:conditionsa}  
        F(s^{\star}_{i}) p_{F}(s_{i+1:j}^{\star}|s_{i}^{\star}) = p_{B}(s_{i:j-1}^{\star} | s_{j}^{\star}) F(s_{j}^{\star}),
    \end{equation} 
    in which $p_{F}$ is computed according to the graphical model in \Cref{fig:lifted_mdp} and $p_{B}(s_{i}^{\star}|s_{i+1:j}^{\star}) = \prod_{t=i}^{j-1} p_{B}(s_{t}^{\star}|s_{t+1}^{\star})$, according to the transposed $\mathrm{SG}^{\star}$. 
    Then, denoting by $\tau \rightsquigarrow s^{\star}$ the 
    trajectories starting at $s_{o}^{\star}$ and finishing at $s^{\star} \in \mathcal{S}^{\star} \cup \mathcal{X}$, \looseness=-1
    \begin{equation} \label{eq:marginallifted}
        p_{\intercal}(s^{\star}) = \sum_{\tau \rightsquigarrow s^{\star}} p_{F}(\tau | s_{o}^{\star}) \propto F(s^{\star}). 
    \end{equation}
    In particular, $p_{\intercal}(x) \propto R(x)$ for $x \in \mathcal{X}$. 
\end{proposition}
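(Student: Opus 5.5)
The plan is to reduce the lifted setting to the standard (Markovian) GFlowNet argument by observing that, on the lifted DAG $\mathrm{SG}^{\star}$, the forward policy is Markovian \emph{in the lifted state}. By Definition~\ref{def:pointeddag}, a lifted state $s_t^{\star} = (s_t, W_t)$ determines $W_t$, and $W_t$ is a deterministic function of the prefix $(s_0,\dots,s_t)$. Following the graphical model of \Cref{fig:lifted_mdp}, the recurrent policy $p_F(s_{t+1}^{\star} \mid s_{0:t}^{\star})$ therefore depends on the past only through $(s_t, W_t) = s_t^{\star}$. Hence $p_F(s_{i+1:j}^{\star} \mid s_i^{\star}) = \prod_{t=i}^{j-1} p_F(s_{t+1}^{\star} \mid s_t^{\star})$, and the transitions $p_F(\cdot \mid s^{\star})$ form a genuine Markov kernel on the finite-or-countable DAG $\mathrm{SG}^{\star}$. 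The first step is to make this precise and to check that $\mathrm{SG}^{\star}$ is itself a pointed DAG with initial state $s_o^{\star}$ and terminal set $\mathcal{X}$. Acyclicity is inherited from $\mathrm{SG}$ via the projection $(s,W) \mapsto s$. Every lifted state is reachable from $s_o^{\star}$ by item 2. Only the elements of $\mathcal{X}$ lack children, by items 3 and 4 together with the fact that every $s \in \mathcal{S}$ has a child in $\mathrm{SG}$.

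With this in place, I take $i$ and $j$ to be consecutive, so that \eqref{eq:conditionsa} yields the edge-level detailed balance $F(s^{\star}) p_F(s'^{\star} \mid s^{\star}) = F(s'^{\star}) p_B(s^{\star} \mid s'^{\star})$ for every edge of $\mathrm{SG}^{\star}$. This holds because each edge lies on some complete trajectory. I then prove $p_{\intercal}(s^{\star}) = F(s^{\star})/F(s_o^{\star})$ by induction over a topological order of $\mathrm{SG}^{\star}$. The base case is $p_{\intercal}(s_o^{\star}) = 1$. For the inductive step, decompose over the last edge:
\begin{equation*}
p_{\intercal}(s'^{\star}) = \sum_{s^{\star} \to s'^{\star}} p_{\intercal}(s^{\star})\, p_F(s'^{\star}\mid s^{\star}) = \sum_{s^{\star}\to s'^{\star}} \frac{F(s^{\star})}{F(s_o^{\star})} p_F(s'^{\star}\mid s^{\star}) = \frac{F(s'^{\star})}{F(s_o^{\star})}\sum_{s^{\star}\to s'^{\star}} p_B(s^{\star}\mid s'^{\star}).
\end{equation*}
The final sum equals $1$ because $p_B$ is a probability distribution over parents in the transposed $\mathrm{SG}^{\star}$. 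This gives \eqref{eq:marginallifted}. Restricting to $x \in \mathcal{X}$ and using $F(x) = R(x)$ from Definition~\ref{def:flows} yields $p_{\intercal}(x) \propto R(x)$.

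The main obstacle is the first step. I need to justify that the path-dependent policy collapses to a Markov kernel on lifted states, and that trajectories in $\mathrm{SG}^{\star}$ are in bijection with trajectories in $\mathrm{SG}$. In particular, the same $x$ is reached from distinct lifted predecessors $(s, W)$ and $(s, W')$, and these must be counted as distinct parents of $x$ in the transposed graph so that $p_B(\cdot \mid x)$ normalizes over them. I would handle this by noting that $\phi$ is deterministic, so each $\mathrm{SG}$-trajectory lifts uniquely (item 3). Item 4 then makes the parents of $x$ in $\mathrm{SG}^{\star}$ exactly the reachable pairs $(s, W)$ with $(s, x) \in E$. A minor technical point is that $\mathcal{S}^{\star}$ is finite, since $\mathcal{S}$ is finite and the lifting is deterministic, so all sums above are finite.
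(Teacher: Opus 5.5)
Your proof is correct, but it uses a different slice of the hypothesis than the paper does. The paper fixes $i=0$ and lets $j$ vary, so \eqref{eq:conditionsa} turns each prefix probability into $p_F(\tau\mid s_o^{\star}) = p_B(\tau\mid s^{\star})\,F(s^{\star})/F(s_o^{\star})$. Summing gives $p_{\intercal}(s^{\star}) = \frac{F(s^{\star})}{F(s_o^{\star})}\,Q(s^{\star})$ with $Q(s^{\star}) = \sum_{\tau \rightsquigarrow s^{\star}} p_B(\tau\mid s^{\star})$. The remaining work is to show $Q \equiv 1$, via the recursion $Q(s^{\star}) = \sum_{g^{\star}} p_B(g^{\star}\mid s^{\star})\,Q(g^{\star})$ over a topological order with $Q(s_o^{\star})=1$. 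You instead take $j=i+1$ (edge-level detailed balance) and run the topological induction forward on $p_{\intercal}$ itself.

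Both arguments rest on the same two facts: $p_B$ is normalized over lifted parents, and $\mathrm{SG}^{\star}$ is finite and acyclic. They buy different things, though.
\begin{itemize}
\item The paper's route never needs $p_F$ to factor as a Markov kernel on lifted states, since $p_F(\tau\mid s_o^{\star})$ is just the probability of the prefix. Moreover, taking $s^{\star}=x$ with only the slice $i=0$, $j=T$, it yields Corollary~\ref{col:tb} with no extra work.
\item Your route needs the decomposition $p_{\intercal}(s'^{\star}) = \sum_{s^{\star}} p_{\intercal}(s^{\star})\,p_F(s'^{\star}\mid s^{\star})$, hence the lifted-Markov property you establish in your first step. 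It also does not by itself recover the TB corollary, since trajectory balance does not imply edge-level balance.
\end{itemize}
In exchange, your argument is more local, and your preliminary checks make explicit several points the paper's proof leaves implicit: that $\mathrm{SG}^{\star}$ is a pointed DAG; that every lifted edge lies on a complete trajectory, so the detailed-balance slice is actually available; and that a terminal $x$ has all reachable $(s,W)$ with $(s,x)\in E$ as distinct parents, over which $p_B(\cdot\mid x)$ normalizes.
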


Since our interest lies exclusively on the sampling of complete objects, it should be clear that the \Cref{eq:marginallifted} 
remains true on $\mathcal{X}$ when \Cref{eq:conditionsa} is only 
for $i = 0$ and $j = T$ \cite{Madan2022LearningGF}.   
This results in the trajectory balance (TB) condition \cite{malkin2022trajectory}. 
\looseness=-1

\begin{corollary} \label{col:tb} 
    Similarly to Proposition~\ref{prop:lifted_balance}, assume that 
    \begin{equation} \label{eq:tb} 
        F(s_{o}^{\star}) p_{F}(\tau | s_{o}^{\star}) = R(x) p_{B}(\tau|x) 
    \end{equation}
    for each complete trajectory $\tau = (s_{t}^{\star})_{t=0}^{T}$. Then, $p_{\intercal}(x) \propto R(x)$ for each terminal state $x \in \mathcal{X}$. 
\end{corollary}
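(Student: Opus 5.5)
The plan is a direct marginalization argument over complete trajectories of the lifted DAG $\mathrm{SG}^{\star}$, without going through Proposition~\ref{prop:lifted_balance}. Fix a terminal state $x \in \mathcal{X}$. By definition, $p_{\intercal}(x) = \sum_{\tau \rightsquigarrow x} p_{F}(\tau | s_{o}^{\star})$, where the sum runs over complete trajectories of $\mathrm{SG}^{\star}$ ending at $x$. Since $F(s_{o}^{\star}) > 0$, I will divide \Cref{eq:tb} by it and substitute, which gives
\begin{equation*}
    p_{\intercal}(x) = \frac{R(x)}{F(s_{o}^{\star})} \sum_{\tau \rightsquigarrow x} p_{B}(\tau | x).
\end{equation*}
It therefore suffices to show that $\sum_{\tau \rightsquigarrow x} p_{B}(\tau | x) = 1$. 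Once that holds, $p_{\intercal}(x) = R(x)/F(s_{o}^{\star})$, and the constant $F(s_{o}^{\star})$ does not depend on $x$.

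The key step is this normalization of the backward policy. Here $p_{B}(\tau|x) = \prod_{t} p_{B}(s_{t-1}^{\star} | s_{t}^{\star})$ is a Markov chain on the transposed lifted graph. It starts at $x$, and each $p_{B}(\cdot | s^{\star})$ is a probability distribution supported on the parents of $s^{\star}$ in $E^{\star}$. I will argue by strong induction along a reverse topological order, on the claim that for every $s^{\star} \in \mathcal{S}^{\star} \cup \mathcal{X}$ the total backward probability of paths from $s^{\star}$ to $s_{o}^{\star}$ equals $1$. The base case is $s_{o}^{\star}$ itself, with the empty path. For the inductive step, I condition on the first backward move and obtain $\sum_{s'} p_{B}(s' | s^{\star}) \cdot 1 = 1$.

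For this induction to work, three facts about the lifted DAG are needed.
\begin{enumerate}[leftmargin=.5cm]
\item $\mathrm{SG}^{\star}$ is acyclic. This holds because its edges project onto edges of $\mathrm{SG}$ (conditions 3 and 4 of Definition~\ref{def:pointeddag}), so a cycle in the lift would project to a cycle in $\mathrm{SG}$.
\item Every lifted state other than $s_{o}^{\star}$ has at least one parent. By condition 2, each $(s, W) \in \mathcal{S}^{\star}$ is reached by a valid path from $(s_{o}, W_{o})$, and each $x$ has a parent because some trajectory reaches it.
\item $s_{o}^{\star}$ is the unique state without parents. Any parentless lifted state must project to $s_{o}$. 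It must then equal $(s_{o}, W_{o})$, because $s_{o}$ has no incoming edges in $\mathrm{SG}$, so the only path in condition 2 ending at $s_{o}$ has length zero.
\end{enumerate}
Together these ensure that backward chains cannot get stuck except at $s_{o}^{\star}$, and that they terminate in finitely many steps.

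The main obstacle I expect is finiteness and well-definedness. $\Omega$ may be uncountable, but $\mathcal{S}^{\star}$ is still finite: each lifted state is determined by a path in the finite DAG $\mathrm{SG}$, since $\phi$ is deterministic. The lifted graph is therefore the finite unfolding of path-prefixes, modulo collisions of $W$, and this justifies both the induction and exchanging the finite sums. Finally, I will note that this is exactly the $i = 0$, $j = T$ instance of \Cref{eq:conditionsa} with $F(x) = R(x)$, which matches the remark preceding the corollary.
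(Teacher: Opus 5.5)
Your proposal is correct and follows essentially the paper's route. Substituting the TB identity gives $p_{\intercal}(x) = \frac{R(x)}{F(s_{o}^{\star})}\sum_{\tau \rightsquigarrow x} p_{B}(\tau|x)$, and the backward sum equals $1$ by the recursion $Q(s^{\star}) = \sum_{g^{\star}} p_{B}(g^{\star}|s^{\star}) Q(g^{\star})$ solved along a topological order; this is exactly the $i = 0$, $j = T$ case of the paper's proof of Proposition~\ref{prop:lifted_balance}. Your explicit checks of acyclicity, existence of parents, and finiteness of $\mathcal{S}^{\star}$ make rigorous what the paper only asserts.
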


As in \citet[Example 6]{Foundations}, Corollary~\ref{col:tb} only depends on the flow associated to the initial state, $F(s_{o}^{\star})$, which equals the target $R$'s partition function when \Cref{eq:tb} is satisfied \cite{malkin2022trajectory}. 
Below, we demonstrate that the $F$-independent contrastive balance (CB) condition \cite{robust, dasilva2024embarrassinglyparallelgflownets} is also sufficient to ensure the marginal of $p_{F}$ over $\mathcal{X}$ matches $R$. 
\looseness=-1 

\begin{corollary} \label{col:cb} 
    Under the setup of Proposition~\ref{prop:lifted_balance}, let $\tau = (s_{t}^{\star})_{t=0}^{T}, \ \tau' = (s_{t}'^{\star})_{t=0}^{T}$ be complete trajectories on $\mathrm{SG}^{\star}$. If \looseness=-1
    \begin{equation*}
        p_{F}(\tau|s_{o}^{\star}) p_{B}(\tau'|s_{T}'^{\star}) R(s_{T}'^{\star}) = p_{F}(\tau'| s_{o}'^{\star}) p_{B}(\tau|s_{T}^{\star}) R(s_{T}^{\star})  
    \end{equation*}
    for every pair $(\tau, \tau')$, then $p_{\intercal}(x) \propto R(x)$. 
\end{corollary}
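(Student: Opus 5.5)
The plan is to reduce Corollary~\ref{col:cb} to Corollary~\ref{col:tb}: from the contrastive condition we will manufacture a constant $Z>0$ such that the trajectory balance condition \eqref{eq:tb} holds with $F(s_{o}^{\star}) \coloneqq Z$. The only structural facts needed are the following. On the lifted DAG $\mathrm{SG}^{\star}$, every complete trajectory starts at the same initial state $s_{o}^{\star}=(s_{o},W_{o})$, so $s_{o}'^{\star}=s_{o}^{\star}$ in the statement. Moreover, $p_{F}(\tau|s_{o}^{\star})$ and $p_{B}(\tau|x)$ are strictly positive on complete trajectories. This positivity holds because the policies are softmax-parameterized over the children (resp.\ parents) in $E^{\star}$; if one prefers not to assume it, we will argue below that it is forced by the hypothesis.

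First, we fix an arbitrary reference complete trajectory $\tau'$ ending at $x'=s_{T}'^{\star}$. We then define
\[ Z \coloneqq \frac{R(x')\,p_{B}(\tau'|x')}{p_{F}(\tau'|s_{o}^{\star})}. \]
This quantity is positive and finite. Next, for any complete $\tau$ ending at $x$, we rearrange the contrastive identity applied to the pair $(\tau,\tau')$ and obtain
\[ p_{F}(\tau|s_{o}^{\star})\,Z = R(x)\,p_{B}(\tau|x). \]
This is exactly \eqref{eq:tb} with $F(s_{o}^{\star})=Z$, and it holds for every complete trajectory. Because the contrastive identity is assumed for all pairs, $Z$ does not depend on which reference $\tau'$ was chosen.

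Then we invoke Corollary~\ref{col:tb}. Alternatively, the conclusion can be checked directly: summing the displayed identity over all complete $\tau\rightsquigarrow x$ in $\mathrm{SG}^{\star}$ gives
\[ Z\,p_{\intercal}(x) = R(x)\sum_{\tau\rightsquigarrow x}p_{B}(\tau|x). \]
The remaining sum equals $1$, because $p_{B}$ is a forward policy on the transposed lifted DAG and every backward path from $x$ terminates at the unique source $s_{o}^{\star}$.

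The main obstacle is this last normalization step and the positivity caveat, since both rely on the structure of $\mathrm{SG}^{\star}$. For the sum $\sum_{\tau\rightsquigarrow x}p_{B}(\tau|x)=1$, we need that backward walks from $x$ on the finite, acyclic transposed graph terminate almost surely, and only at $s_{o}^{\star}$. This follows from conditions 2--4 of Definition~\ref{def:pointeddag}, which make every lifted state reachable from $s_{o}^{\star}$ with $s_{o}^{\star}$ the unique node lacking parents. It also requires that the set of complete trajectories be countable, which holds because the lifting is deterministic. For positivity: if $p_{F}(\tau'|s_{o}^{\star})=0$ for some complete $\tau'$, then the hypothesis applied to $(\tau,\tau')$ with any $\tau$ of positive forward probability forces $p_{B}(\tau'|x')R(x')=0$. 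Since $R>0$, this gives $p_{B}(\tau'|x')=0$, so such a $\tau'$ contributes to neither side and can be discarded. With the softmax parameterization this case never arises.
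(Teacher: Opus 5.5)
Your proposal is correct and takes the paper's route. The paper also argues that the contrastive balance condition entails trajectory balance, with a constant playing the role of $F(s_{o}^{\star})$. It then concludes via the computation in the proof of Proposition~\ref{prop:lifted_balance}, where $\sum_{\tau \rightsquigarrow x} p_{B}(\tau|x) = 1$ follows by backward induction on the finite lifted DAG; your explicit construction of $Z$ from a reference trajectory and your treatment of zero-probability trajectories fill in details the paper leaves implicit.
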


Taken together, Proposition~\ref{prop:lifted_balance} and Corolaries~\ref{col:tb} and~\ref{col:cb} demonstrate that traditional learning objectives for GFlowNets can be seamlessly adapted to a lifted pointed DAG (see Definition~\ref{def:pointeddag}). 
In practice, we enforce these conditions by minimizing an expectation of the log-squared difference between their left- and right-hand sides \citep[as in][Examples 4-6]{Foundations}. 
Remarkably, when $p_{B}$ does not depend on the latent dynamical system $\{W_{t}\}_{t\ge1}$, the optimal $p_{F}$ satisfying either Corollaries~\ref{col:tb} or~\ref{col:cb} is Markovian with respect to the unlifted DAG; see Proposition~\ref{prop:markovian}. 
\looseness=-1 

\begin{proposition}
\label{prop:markovian}
    Under the setup of Proposition~\ref{prop:lifted_balance}, assume $\tilde{p}_{B}$ is a backward policy on $\mathrm{SG}$ such that $p_{B}(s_{t}^{\star}|s_{t+1}^{\star})=\tilde{p}_{B}(s_{t}|s_{t + 1})$ for each trajectory $(s_{t}^{\star})_{t=0}^{T}$ on $\mathrm{SG}^{\star}$.
    We write $s_{t}$ for the unlifted instantiation of $s_{t}^{\star} \coloneqq (s_{t}, W_{t})$.  
    Let $p_{F}$ be a forward policy abiding by either Corollaries~\ref{col:tb} or~\ref{col:cb}. 
    Then, there is a policy $\tilde{p}_{F}$ on $\mathrm{SG}$ for which $p_{F}(s_{t+1}^{\star}|s_{t}^{\star})=\tilde{p}_{F}(s_{t+1}|s_{t})$. \looseness=-1
\end{proposition}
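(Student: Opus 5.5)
Both balance conditions reduce to trajectory balance, so I work from Corollary~\ref{col:tb}. The conclusion of Corollary~\ref{col:cb} already makes $p_{F}(\tau|s_{o}^{\star})/(p_{B}(\tau|x)R(x))$ constant across complete trajectories; calling that constant $1/Z$ gives \Cref{eq:tb} with $F(s_{o}^{\star}) = Z$. The hypothesis $p_{B}(s_{t}^{\star}|s_{t+1}^{\star}) = \tilde{p}_{B}(s_{t}|s_{t+1})$ lets me rewrite \Cref{eq:tb} as
\begin{equation*}
    p_{F}(\tau|s_{o}^{\star}) = \frac{R(x)}{Z} \prod_{t=0}^{T-1} \tilde{p}_{B}(s_{t}|s_{t+1}),
\end{equation*}
with the right-hand side a function of the unlifted path $(s_{t})_{t=0}^{T}$ alone. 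A key observation, from Definition~\ref{def:pointeddag}, is that the lift is deterministic: each unlifted complete path determines $(W_{t})_{t}$ uniquely through $W_{t} = W_{t-1} + \phi(s_{t}, W_{t-1})$. Lifted complete trajectories are therefore in bijection with unlifted ones, and $p_{F}(\cdot|s_{o}^{\star})$ induces the distribution $q(\bar\tau) = \frac{R(x)}{Z}\tilde{p}_{B}(\bar\tau|x)$ on unlifted complete trajectories $\bar\tau$.

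Next I define the Markovian candidate. The unlifted backward flow is $\tilde{F}(s) := \sum_{x} \frac{R(x)}{Z}\, \tilde{p}_{B}(s \leftarrow x)$, where $\tilde{p}_{B}(s \leftarrow x)$ is the total $\tilde{p}_{B}$-probability of all backward paths from $x$ to $s$. I then set $\tilde{p}_{F}(s'|s) := \tilde{F}(s')\tilde{p}_{B}(s|s')/\tilde{F}(s)$ for non-terminal $s'$, and $\tilde{p}_{F}(x|s) := R(x)\tilde{p}_{B}(s|x)/(Z\tilde{F}(s))$ for terminal $x$. Normalization follows from the backward flow-matching identity $\tilde{F}(s) = \sum_{s'} \tilde{F}(s')\tilde{p}_{B}(s|s')$, which holds by decomposing each backward path at its last step.

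The core step computes the actual conditional $p_{F}(s_{t+1}^{\star}|s_{t}^{\star})$ from the joint $q$. It equals the $q$-probability of trajectory prefixes through $s_{0:t+1}$ divided by that of prefixes through $s_{0:t}$. Summing $q$ over completions of a fixed prefix $s_{0:t}$ gives $\tilde{F}(s_{t}) \prod_{k<t}\tilde{p}_{B}(s_{k}|s_{k+1})$, because the suffix sum $\sum_{x}\frac{R(x)}{Z}\tilde{p}_{B}(s_t\leftarrow x)$ factorizes off the backward product. Taking the ratio at $t+1$ versus $t$ cancels the prefix product and leaves exactly $\tilde{F}(s_{t+1})\tilde{p}_{B}(s_{t}|s_{t+1})/\tilde{F}(s_{t}) = \tilde{p}_{F}(s_{t+1}|s_{t})$, which depends only on $(s_{t}, s_{t+1})$ and not on the history or on $W_{t}$.

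The main obstacle is justifying that the conditionals of $p_{F}$ are determined by its trajectory marginal. This needs every lifted state $s_{t}^{\star}$ to be reached with positive probability. I would secure it by noting that \Cref{eq:tb} with $R > 0$ forces $p_{F}(\tau|s_{o}^{\star})>0$ for every complete trajectory whose $\tilde{p}_{B}$-probability is positive, and by assuming, as is standard, that $\tilde{p}_{B}$ has full support on $\mathrm{SG}$'s parent sets. A second subtlety is that distinct histories can land on the same lifted state $(s,W)$. This is harmless: the computed conditional is identical for every history, so it is well defined on $\mathcal{S}^{\star}$ and equals $\tilde{p}_{F}(s_{t+1}|s_{t})$.
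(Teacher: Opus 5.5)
Your proposal is correct and takes essentially the same route as the paper's proof. Both reduce CB to TB, use TB to rewrite the $p_{F}$-law of complete trajectories as $\frac{R(x)}{Z}\prod_{t}\tilde{p}_{B}(s_{t}|s_{t+1})$ on unlifted paths, and compute the transition as a ratio of prefix masses in which the prefix's backward product cancels, leaving $\tilde{p}_{B}(s_{t}|s_{t+1})\tilde{F}(s_{t+1})/\tilde{F}(s_{t})$; your $\tilde{F}$ is the paper's $R/Z$-weighted sum $\sum_{\tau \colon s \rightsquigarrow \mathcal{X}} \tilde{p}_{B}(\tau)$ over completions. Your full-support assumption on $\tilde{p}_{B}$ is a worthwhile addition: the paper divides by $p_{F}(\xi^{\star})$ without showing it is nonzero, and TB does not constrain the policy at lifted states reached with probability zero.
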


\Cref{fig:setsa} illustrates Proposition~\ref{prop:markovian} for the \textsc{Lines} environment  (recall \Cref{fig:lines}), measuring the same-state, distinct-trajectory KL divergence averaged over non-terminal states $s \in \{p_{1}, \dots, p_{N}\}$ when (i) $p_{B}$ is Markovian and (ii) $p_{B}$ is non-Markovian. 
Under (i), $p_{F}$ converges to a Markovian policy, as indicated by the vanishing KL divergence; under (ii), $p_{F}$ stabilizes as a non-Markovian model.
\looseness=-1 

\begin{figure}[H]
    \centering
    \includegraphics[width=\linewidth]{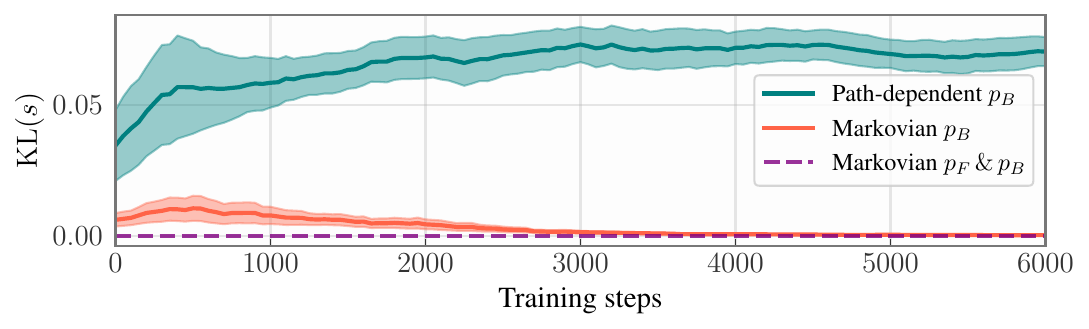}
    \caption{\textbf{Illustration of Proposition~\ref{prop:markovian}}. A (non-)Markovian $p_{B}$ entails a (non-)Markovian $p_{F}$ at equilibrium. We let $\mathrm{KL}(s) = \max_{\tau, \tau' \colon s_{o} \rightsquigarrow s} \mathrm{KL}(p_{F}(\cdot | \tau, s) || p_{F}(\cdot | \tau', s))$ be the maximum KL between policies conditioned on the same state, but distinct trajectories; $\mathrm{KL}(s) > 0$ for some $s$ indicates $p_{F}$ is path-dependent.}
    \label{fig:setsa}
    \vspace{-12pt} 
\end{figure}

Importantly, in view of the aliasing-related challenges described in \Cref{sec:limitations_markovian}, the choice of a Markovian $p_{B}$ can be effective, as a path-dependent parameterization may provide a better approximation to the optimal policy $\tilde{p}_{F}$ outlined in Proposition~\ref{prop:markovian} than its Markovian equivalent.
\looseness=-1 





\section{Experiments} \label{sec:experiments}

The main purpose of our experiments is to attest the effectiveness of our path-dependent models in terms of their goodness-of-fit to the target distribution relatively to their Markovian counterparts. 
For this, we consider the following four standard benchmark tasks in the GFlowNet literature.
Additionally, we provide results for the tasks of preference learning, bit sequences, and lazy random walk in the supplement. 
We implemented our models in JAX \cite{jax2018github}; see \Cref{sec:details} for further details. \looseness=-1

\paragraph{Set generation \cite{Foundations}.} This task consists of generating fixed-size subsets of a fixed superset. 
Let $\Gamma = \{1, 2, \dots, S\}$ be such a superset, and $u \colon \Gamma \rightarrow \mathbb{R}$ be a log-utility function for the elements of $\Gamma$. 
In this context, our initial state is the empty set, $s_{o} = \emptyset$, and each transition consists of adding an element of $\Gamma \setminus s$ to the current state $s$ until $|s| = K$. 
Our objective is to sample each $\gamma \subseteq \Gamma$ with $|\gamma| = K$ in proportion to $R(\gamma) = \sum_{i \in \gamma} \exp \{ u(i)\}$. 
In this scenario, \Cref{tab:sets} shows that a path-dependent parameterization achieves better goodness-of-fit for $S = 64$ and $K \in \{16, 32\}$ ($\mathcal{X}$ contains roughly $10^{14}$ and $10^{18}$ elements, respectively) in terms of FCS (i.e., the average TV distance on tractable subsets of $\mathcal{X}$; see \Cref{eq:fcsa}). 
\looseness=-1 

\begin{table}
    \centering
    \caption{FCS for set generation with $S = 64$ and varying $K$. \looseness=-1} 
    \label{tab:sets}
    \begin{tabular}{c|c|c}
        $K$ & Markovian & Path-dependent 
        \\ \hline  
        16  & $0.091 \textcolor{gray} { \pm {\scriptstyle 0.012} }$ & $\mathbf{0.053} \textcolor{gray} { \pm {\scriptstyle 0.006} }$ \\
        24  & $0.105 \textcolor{gray} { \pm {\scriptstyle 0.009} }$ & $\mathbf{0.064} \textcolor{gray} { \pm {\scriptstyle 0.002} }$ \\
    \end{tabular}
    \vspace{-12pt}
\end{table}

\paragraph{Sequence design \cite{sequence}.} For this problem, we autoregressively generate sequences with a fixed size $S$. 
That is, we start at an empty sequence $s_{o} = \emptyset$ and add tokens from a given vocabulary $\mathcal{V}$ to the current state $s$ until $s \in \mathcal{V}^{S}$.  
First, we consider a synthetic distribution with $\mathcal{V} = \{1, \dots, 6\}$, $S \in \{8, 16, 32\}$, and $R(s) = \sum_{i=1}^{S} u(i) \cdot v(s_{i})$ with $u \colon \{1, \dots, S\} \rightarrow \mathbb{R}_{+}$ and $v \colon \mathcal{V} \rightarrow \mathbb{R}_{+}$ as utility functions. 
\Cref{tab:sequences} shows that our path-dependent parameterization has a better fit to the target distribution for each $S$. \Cref{fig:ap} (supplement) shows these results are consistent across model sizes. 
Second, and similarly, we show in \Cref{tab:sequences} that our model also achieves better distributional fit for the task of 8 (resp. 10)-sized biological sequence design of \citet{sequence}, in which the target distribution measures the binding affinity of a DNA sequence to human (resp. yeast) transcription factors \cite{barrera2016survey, trabucco2022design} denoted by SIX6 (resp. PHO4). \looseness=-1

\begin{table}
    \centering
    \caption{FCS for the sequence design task. Syn-$S$ stands for the synthetic distribution over $S$-sized sequences, while SIX6 and PHO4 refer to the task of DNA sequence generation.}
    \label{tab:sequences}
    \begin{tabular}{c|c|c}
        Task & Markovian & Path-dependent \\
        \hline 
         Syn-$8$ & $0.084 \textcolor{gray} { \pm {\scriptstyle 0.001} }$ & $\mathbf{0.011} \textcolor{gray} { \pm {\scriptstyle 0.001} }$ \\
         Syn-$16$ & $0.165 \textcolor{gray} { \pm {\scriptstyle 0.006} }$ & $\mathbf{0.024} \textcolor{gray} { \pm {\scriptstyle 0.001} }$ \\ 
         Syn-$32$ & $0.243 \textcolor{gray} { \pm {\scriptstyle 0.015} }$ & $\mathbf{0.055} \textcolor{gray} { \pm {\scriptstyle 0.002} }$ \\ 
         SIX6 & $0.165 \textcolor{gray} { \pm {\scriptstyle 0.006} }$ & $\mathbf{0.135} \textcolor{gray} { \pm {\scriptstyle 0.003} }$ \\
         PHO4 & $0.175 \textcolor{gray} { \pm {\scriptstyle 0.004} }$ & $\mathbf{0.130} \textcolor{gray} { \pm {\scriptstyle 0.001} }$ \\
    \end{tabular}
    \vspace{-12pt}
\end{table}


\paragraph{Grid world \cite{Bengio2021}.} 
The grid world represents a sparse distribution over $\{0, \dots, N - 1\}^{2}$ illustrated in \Cref{fig:gridsa} (right). 
The initial state is the origin, $s_{o} = (0, 0)$, and a transition consists of either stopping, or moving up, or moving to the right, similarly to the \textsc{Lines} task in \Cref{fig:lines}. 
Again, we show in \Cref{fig:gridsa} that our path-dependent model achieves a better approximation to the target than its Markovian counterpart---which struggles to navigate the space. \looseness=-1

\begin{figure}[!h]
    \centering
    \vspace{-8pt}
    \includegraphics[width=.8\linewidth]{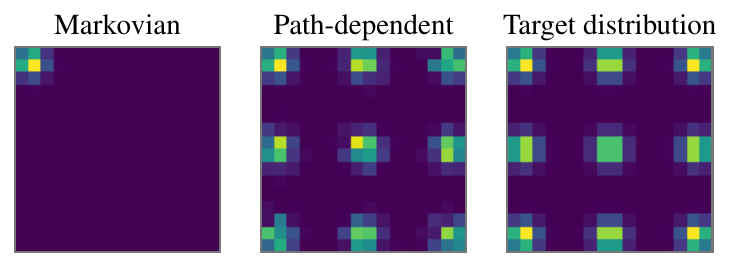}
    \caption{Goodness-of-fit to the target in the grid-world \looseness=-1.}
    \label{fig:gridsa}
    \vspace{-12pt}
\end{figure}

\section{Conclusions} 

When working with continuous distributions, the most effective samplers rely on augmenting the state space with an auxiliary momentum (latent) variable to accelerate convergence \cite{neal2011mcmc, cheng2018underdampedlangevinmcmcnonasymptotic}, a process which we call \emph{lifting}. 
In this work, we demonstrated that a similar strategy can be used effectively for discrete amortized samplers in compositional spaces. 
We also delineated the limitations of existing samplers operating on the unlifted state space, and showed that the latent dynamical process can be efficiently learned with a self-referential weight matrix. \looseness=-1 

In closing, we note 
the question regarding 
the feasibility of a stochastic dynamics for a path-dependent model remains open, which we believe represents an interesting venue for future research. 
All in all, our results further suggest that algorithms for partially observable MDPs are well-suited for discrete amortized sampling, corroborating early observations by \citet{randlov1998, boussif2024actionabstractionsamortizedsampling}. 
\looseness=-1

\section*{Impact statement}

Our work delineates the limitations of Markovian amortized samplers for discrete and compositional objects, which have attracted increasing interest from our community, and introduces a general-purpose path-dependent algorithm for mitigating them. 
However, we do not foresee any direct negative societal impacts for this work specifically. 

\section*{Acknowledgements} 

ESW acknowledges support from the CIFAR Learning in Machines and Brains program.

\bibliography{example_paper}
\bibliographystyle{icml2026}

\newpage
\appendix
\onecolumn



\section{Proofs}

This section provides the proofs for our statements throughout the text. Each section corresponds to a different Proposition and, when needed, Corollary. Our results in \Cref{sec:learning_latent} closely match that of \citep{Foundations, malkin2022trajectory, Madan2022LearningGF}, to which we refer the reader for further details. 

\subsection{Proof of Proposition~\ref{prop:lines}}

Our proof has two steps. 
First, we characterize the space of learnable distributions for both the Markovian and path-dependent parameterizations. 
Second, we argue that, under the conditions of Proposition~\ref{prop:lines}, the latter is larger than the former. 

As $M = 1$, $\mathbf{W}_{i} = \mathbf{W} \in \mathbb{R}^{2 \times d}$ for the Markovian model. For clarity, let 
\begin{equation*}
    \mathrm{Softmax} ( \mathbf{W}\psi(p_{i})) = \begin{bmatrix} \alpha_{i} & 1 - \alpha_{i} \end{bmatrix}; 
\end{equation*}
equivalently, for $\mathbf{y}_{i} =  \mathbf{W}\psi(p_{i}) \in \mathbb{R}^{2}$, $\log \frac{\alpha_{i}}{1 - \alpha_{i}} = \mathbf{y}_{i}^{T} (\mathbf{e}_{1} - \mathbf{e}_{2})$, with $\mathbf{e}_{1}, \ \mathbf{e}_{2} \in \{1, 0\}^{2 \times 1}$ as the first and second columns of the identity matrix. (As per standard convention, vectors in $\mathbb{R}^{d}$ represent column vectors). 
We omit $\mathbf{m}_{i}$ as $\mathbf{m}_{i} = \mathbf{1}$ for each $i$ except for $i = N$---in which case $\alpha_{N} = 0$ is the only solution. 

From this viewpoint, the sampler's objective can be represented through the linear system \looseness=-1 
\begin{equation}
    \mathbf{o} = \Psi \mathbf{k},
\end{equation}
in which $\mathbf{o} \in \mathbb{R}^{N}$, $\mathbf{o}_{i} = \log \nicefrac{\alpha_{i}}{1 - \alpha_{i}}$, and $\Psi$, $\Psi_{i} = \psi(p_{i})^{T}$, are fixed\footnote{We omit the $N$th term from $\mathbf{o}$ as $\alpha_{N} = 0$ (the only possible move at $p_{N}$ is going to $q_{N}$).}; $\mathbf{k}_{i} = \mathbf{W}^{T} (\mathbf{e}_{1} - \mathbf{e}_{2})$ is the quantity we wish to learn (through $\mathbf{W}$). Consequently, we can only learn distributions for which the policies' log-odds, $\mathbf{o}$, are within the column space of the state-embedding matrix, $\Psi$. 
Under the notation of Proposition~\ref{prop:lines}, this implies that $\dim \mathcal{W}_{M} = \mathrm{rank} \ \Psi \le d$. 

Under the non-Markovian parameterization of our linear control system, the corresponding learning problem becomes \looseness=-1
\begin{equation*}
    \mathbf{o}_{i} =  \psi(p_{i})^{T} \mathbf{a}_{i} \mathbf{b}^{T} \left ( \mathbf{e}_{1} - \mathbf{e}_{2} \right ), 
\end{equation*}
in which 
\begin{equation*}
    \mathbf{a}_{i} = \mathbf{a}_{o} + \left[ \sum_{0 \le j \le i} \mathbf{w}_{a}^{T} \psi(p_{i}) \right] \mathbf{v}_{a}. 
\end{equation*}
Clearly, $\delta \coloneqq \mathbf{b}^{T} (\mathbf{e}_{1} - \mathbf{e}_{2})$ is a non-zero scalar almost surely (with respect to $\mathbf{b}$). Therefore, we let $\tilde{\mathbf{o}}_{i} = \nicefrac{\mathbf{o}_{i}}{\delta}$. Similarly, define \looseness=-1
\begin{equation}
    \mathbf{c}_{i} = \sum_{0 \le j \le i} \psi(p_{i})
\end{equation}
as the cumulative sum of the state-embeddings $\psi(p_{j})$. Then, $\mathbf{a}_{i} = \mathbf{a}_{o} + ( \mathbf{w}_{a}^{T} \mathbf{c}_{i} ) \mathbf{v}_{a}$, and 
\begin{equation*}
    \tilde{\mathbf{o}}_{i} = \psi(p_{i})^{T} \mathbf{a}_{o} + \psi(p_{i})^{T}(\mathbf{w}_{a}^{T} \mathbf{c}_{i}) \mathbf{v}_{a} = \psi(p_{i})^{T} \mathbf{a}_{o} + ( \psi(p_{i})^{T} \mathbf{v}_{a} ) \cdot (\mathbf{c}_{i}^{T} \mathbf{w}_{a}).   
\end{equation*}
Let $\mathbf{C}$ be the $N \times d$ matrix whose $i$th row is $\mathbf{c}_{i}$. From the equation above, 
\begin{equation}
    \tilde{\mathbf{o}} = \Psi \mathbf{a}_{o} + \left[ \Psi \mathbf{v}_{a} \right ] \odot \left[ \mathbf{C} \mathbf{w}_{a} \right]. 
\end{equation}
As a consequence, our path-dependent linear model can represent any distribution for which the log-odds of the corresponding policies are within the linear space spanned by $\Psi$ and the Hadamard-span of $\mathbf{C}$ and $\Psi$. In other words, let 
\begin{equation*}
    \mathcal{C} = \{\mathbf{v} \colon \mathbf{v} = \Psi\mathbf{w} \text{ for some } \mathbf{w} \} \text{ and } \mathcal{H} = \{ \mathbf{v} \colon \mathbf{v} = (\mathbf{C} \mathbf{a}) \odot (\Psi \mathbf{b}) \text{ for some } \mathbf{a}, \mathbf{b} \}.   
\end{equation*}
In this scenario, the dimension of the range of distributions representable by a Markovian parameterization is $\dim \mathcal{C}$, while a path-dependent model is capable of representing distributions in a space with dimension $\dim (\mathcal{C} + \mathcal{H})$. 
Hence, $\dim \mathcal{W}_{P} \coloneqq \dim (\mathcal{C} + \mathcal{H}) \ge \dim \mathcal{C} =\colon \dim \mathcal{W}_{M}$ regardless of $\psi$, and it is clear that $\dim \mathcal{W}_{P} > \dim \mathcal{W}_{M}$ for some $\psi$; a trivial example is for $\psi(p_{i}) = \mathbf{1}_{d} \in \mathbb{R}^{d}$ constant, in which case $\mathcal{C} = \{\alpha \mathbf{1}_{N} \colon \alpha \in \mathbb{R}\} \subset \mathbb{R}^{N}$ and $\mathcal{H} = \{\alpha \begin{bmatrix} 0 & \dots & N - 1 \end{bmatrix}^{T} \colon \alpha \in \mathbb{R}\}$; thus, $\dim \mathcal{C} = 1$ and $\dim (\mathcal{H} + \mathcal{C}) = 2$.   

We found the case $d = 1$ with the natural embedding $\psi(p_{i}) = i$ to be instructive. Under these conditions, $\mathcal{C}$ is simply the spanned by $\mathbf{N} \coloneqq \begin{bmatrix} 0 & 1 & \dots & N - 1 \end{bmatrix}$, while $\mathcal{H}$ is the Hadamard product between $\mathbf{C} = \begin{bmatrix} c_{o} & c_{1} & \dots & c_{N - 1} \end{bmatrix}$, $c_{i} = \frac{i (i - 1)}{2}$, and $\mathbf{N}$. Clearly, $\mathbf{N}$ and $\mathbf{C}$ are linearly independent for $N \ge 3$. 
The interested reader is invited to fill up the details. \looseness=-1

\subsection{Proof of Proposition~\ref{prop:temporal_gnns}} 

Similarly to the prior section, this proof has two steps: we first demonstrate that each distribution representable by a Markovian parameterization can also be learned by its path-dependent counterpart; then, we show that there exist distributions that can only be learned by the latter. 

On the one hand, it should be clear that the path-dependent model defined by the recurrent GNN is at least as expressive as its Markovian counterpart. 
To see this, simply notice that we can zero out the appended (right-most) coordinate of $\mathbf{V}$ in the first GNN layer; in doing so, the resulting model becomes equivalent to that of \Cref{eq:graphss}. As a consequence, $\mathrm{R}_{P}(S) \subseteq \mathcal{R}_{M}(S)$. \looseness=-1

On the other hand, consider the state graph $S^{\star}$ in \Cref{fig:graphssg}. 
As shown in the main text, a GFlowNet parameterized as in \Cref{eq:graphss} can only sample from an uniform distribution over the terminal states, $q_{1}$ and $q_{2}$. 
In contrast, as the edges $(a, b)$ and $(a, c)$ have distinct temporal features ($b$ is modified once during the generative process; $c$, twice), the actions resulting in $q_{1}$ and $q_{2}$ from $p$ are distinguishable by a recurrent GNN. 
Hence, our path-dependent model can approximate any distribution over $\{q_{1}, q_{2}\}$. 
That is, $\mathcal{R}_{M}(S^{\star}) \neq \mathcal{R}_{P}(S^{\star})$.
This concludes our demonstration. 

\subsection{Proof of Proposition~\ref{prop:lifted_balance}}

We proceed with a direct computation of the marginal distribution, $p_{\intercal}$, of each augmented state $s^{\star}$ to show that 
\begin{equation*}
    p_{\intercal}(s^{\star}) = \sum_{\tau \rightsquigarrow s^{\star}} p_{F}(\tau|s_{o}^{\star}) \propto F(s^{\star}); 
\end{equation*}
as the space of terminal objects, $\mathcal{X}$, remains unchanged in the lifted DAG, this implies that the marginal distribution of our path-dependent model matches $R$ on $\mathcal{X}$ (up to a normalizing constant). 
It should be noted that the the number of trajectories for which $p_{F}(\tau|s_{o}^{\star}) > 0$ is finite in $\mathrm{SG}^{\star}$ due to the deterministic nature of the latent dynamical system. (The lack of finiteness is the reason for which designing a stochastic latent dynamical system is significantly harder than a deterministic one---see \Cref{sec:stochastic}). 

Our asumption is that 
\begin{equation*}
    F(s_{i}^{\star}) p_{F}(s_{i+1:j}^{\star}|s_{i}^{\star}) = p_{B}(s_{i:j-1}^{\star}|s_{j}^{\star})F(s_{j}^{\star})
\end{equation*}
for each slice of every trajectory $\tau = (s_{t}^{\star})_{t=0}^{T}$. Fix $i = 0$. 
Then, the marginal distribution of $s^{\star}$ induced by $p_{F}(\cdot | s_{o}^{\star})$ is 
\begin{equation*}
    p_{\intercal}(s^{\star}) = \sum_{\tau \colon s_{o}^{\star} \rightsquigarrow s^{\star}} p_{F}(\tau|s_{o}^{\star}) = \sum_{\tau \colon s_{o}^{\star} \rightsquigarrow s^{\star}} p_{B}(\tau | s^{\star}) \cdot \frac{F(s^{\star})}{F(s_{o}^{\star})}. 
\end{equation*}
Obviously, $\sum_{\tau \colon s_{o}^{\star} \rightarrow s^{\star}} p_{F}(\tau|s^{\star}) = 1$, as $s_{o}^{\star}$ is the only absorbing state for the Markov chain induced by the backward policy on $\mathrm{SG}^{\star}$. 
This can also be seen by a recursive characterization of this sum.
Indeed, let $Q(s_{o}^{\star}) = 1$ and 
\begin{equation*}
    Q(g^{\star}) = \sum_{\tau \colon s_{o}^{\star} \rightsquigarrow g^{\star}} p_{B}(\tau | g^{\star})
\end{equation*}
for each state $g^{\star}$ on the lifted DAG. 
Then,
\begin{equation*}
    Q(s^{\star}) = \sum_{\tau \colon s_{o}^{\star} \rightsquigarrow s^{\star}} p_{B}(\tau|s^{\star}) = \sum_{g^{\star} \in \mathrm{Pa}(s^{\star})} p_{B}(g^{\star} | s^{\star}) \sum_{\tau \colon s_{o}^{\star} \rightsquigarrow g^{\star}} p_{B}(\tau|g^{\star}) = \sum_{g^{\star} \in \mathrm{Pa}(s^{s^\star})} p_{B}(g^{\star}|s^{\star}) Q(g^{\star}).  
\end{equation*}
Since $\sum_{g^{\star} \in \mathrm{Pa}(s^{s^\star})} p_{B}(g^{\star}|s^{\star}) = 1$, the above equation implies that $\mathbf{Q} \coloneqq (Q(g^{\star}))_{g^{\star} \in \mathcal{S}^{\star} \cup \mathcal{X}}$ is the eigenvector of a stochastic matrix corresponding to the unit eigenvalue. 
By the finiteness of the DAG, $\mathbf{Q} = \mathbf{1}$ is the unique eigenvector 
(due to the boundary conditions, $Q(s_{o}^{\star}) = 1$, as $s_{o}^{\star}$ is an absorbing state for the backward policy). 
Formally, by writing $\mathbf{Q} = \mathbf{M} \mathbf{Q}$ for a stochastic matrix $\mathbf{M}$, we may re-index the nodes of the underlying DAG in topological order. 
In doing so, $\mathbf{M}$ becomes upper-triangular, and the value of $\mathbf{Q}$ can be uniquely characterized via backward induction, given the boundary conditions on $s_{o}^{\star}$. \looseness=-1 

All in all, we observe that
\begin{equation*}
    p_{\intercal}(s^{\star}) = \frac{F(s^{\star})}{F(s_{o}^{\star})} \sum_{\tau \colon s_{o}^{\star} \rightsquigarrow s^{\star}} p_{B}(\tau | s^{\star}) \propto F(s^{\star}).
\end{equation*}

Corollaries~\ref{col:tb}, ~\ref{col:cb}, ~\ref{col:objectives} follow directly from Proposition~\ref{prop:temporal_gnns}. In fact, Corollary~\ref{col:cb} entails Corollary~\ref{col:tb}, and the global minima in Corollary~\ref{col:objectives} satisfy either Corollary~\ref{col:tb} or~\ref{col:cb}.   

\subsection{Proof of Proposition~\ref{prop:markovian}}

Before diving into a formal demonstration of the forward policy's path-independence, the reader should recall the graphical model in \Cref{fig:lifted_mdp}. 
If $p_{B}$ is independent of $\{W_{t}\}_{t\ge1}$, so is $p_{F}$---as both induce the same distribution over trajectories when the (trajectory or contrastive) balance condition is satisfied. 
As a consequence, $p_{F}$ should also be independent of $\{W_{t}\}_{t\ge1}$. 
The proof below builds on this intuition: we compute the joint distribution over trajectories induced by $p_{F}$ and convert it into a $p_{B}$-induced joint distribution through the TB condition. Then, we marginalize out all states following a given transition $(s_{t}, s_{t + 1})$ to show that the probability of going from $s_{t}$ to $s_{t + 1}$ does not depend on the states preceding $s_{t}$. \looseness=-1 

For conciseness, let $\bar{\mathcal{S}} = \mathcal{S} \cup \mathcal{X}$ and $\bar{\mathcal{S}}^{\star} = \mathcal{S}^{\star} \cup \mathcal{X}$. 
Let $p_{B} \colon \bar{\mathcal{S}}^{\star} \times \mathcal{S}^{\star} \rightarrow [0, 1]$ and $\tilde{p}_{B} \colon \bar{\mathcal{S}} \times \mathcal{S} \rightarrow [0, 1]$ be the backward policies from Proposition~\ref{prop:markovian}, which satisfy $p_{B}(s_{t}^{\star} | s_{t + 1}^{\star}) = \tilde{p}_{B}(s_{t} | s_{t + 1})$ for $s_{t}^{\star} = (s_{t}, W_{t}) \in \mathcal{S}^{\star}$ and $(s_{t}^{\star})_{t=0}^{T - 1} \in (\mathcal{S}^{\star})^{T - 1} \times \mathcal{X}$ as a valid trajectory in $\mathrm{SG}^{\star}$. 
Under these conditions, recall that a forward policy abiding by the CB condition also satisfies the TB condition \cite{dasilva2024embarrassinglyparallelgflownets}. Consequently, we henceforth assume that there is a $Z^{\star}$ such that 
\begin{equation} \label{eq:apx:A}
    Z^{\star} \prod_{0 \le t \le T - 1} p_{F}(s_{t+1}^{\star} | s_{t}^{\star}) = R(s_{T}) \prod_{0 \le t \le T - 1} p_{B}(s_{t}^{\star} | s_{t + 1}^{\star}) = R(s_{T}) \prod_{0 \le t \le T - 1} \tilde{p}_{B}(s_{t} | s_{t + 1}) 
\end{equation}
for each complete trajectory $(s_{t}^{\star})_{t=0}^{T}$ on the lifted DAG $\mathrm{SG}^{\star}$. 
Also, recall that there are policies $p_S$ and $p_W$ such that 
\begin{equation*}
    p_{F}(s_{t + 1}^{\star} | s_{t}^{\star}) = p_{S}(s_{t + 1} | s_{t}, W_{t}) \cdot p_{W}(W_{t + 1} | s_{t + 1}, W_{t}); 
\end{equation*}
$p_{W}(W_{t + 1} | s_{t + 1}, W_{t})$ is either $1$ or $0$---depending on whether $W_{t + 1} = W_{t} + \phi(W_{t}, s_{t + 1})$ or not. 
As \Cref{eq:apx:A} holds only for trajectories in $\mathrm{SG}^{\star}$, $p_{W}(W_{t + 1} | s_{t}, W_{t}) = 1$ and 
\begin{equation}
    Z^{\star} \prod_{0 \le t \le T - 1} p_{S}(s_{t+1} | s_{t}, W_{t}) =  R(s_{T}) \prod_{0 \le t \le T - 1} \tilde{p}_{B}(s_{t} | s_{t + 1}). 
\end{equation}

We will show that $p_{F}(s_{t + 1}^{\star}|s_{t}^{\star})$ is path-independent;  equivalently, that $W_{t}$ does not depend on the path leading to $s_{t}^{\star}$. In doing so, we define $\tilde{p}_{F}(s_{t + 1} | s_{t}) = p_{F}(s_{t + 1} | s_{t}, W_{t})$ for this particular $W_{t}$, concluding the demonstration. For this, let $\xi^{\star} = (\mathrm{s}_{t}^{\star})_{t=0}^{l - 1}$ be a (possibly incomplete) trajectory, $\xi = (\mathrm{s}_{t})_{t=0}^{l - 1}$, and $p_{B}(\xi^{\star}) \coloneqq (\nicefrac{R(s_{T})}{Z^{\star}}) p_{B}(\xi^{\star} | s_{T})$ according to the factorization in \Cref{eq:apx:A}. 

Importantly, the result is trivial for $l = 1$, as $W_{o}$ is fixed. From now on, assume $l\ge2$. In this context, it is straightforward to see that
\begin{equation*}
    p_{S}(s_{l} | \xi^{\star}) = 
    \frac{p_{F}(s_{l}^{\star}, \xi^{\star})}{p_{F}(\xi^{\star})} = 
    \frac{\sum_{\tau \colon s_{l}^{\star} \rightsquigarrow \mathcal{X}} p_{F}(\xi^{\star}, s_{l}^{\star}, \tau)}{\sum_{\tau' \colon s_{l - 1}^{\star} \rightsquigarrow \mathcal{X}} p_{F}(\xi^{\star}, \tau')}; 
\end{equation*}
$\tau \colon s_{l}^{\star} \rightsquigarrow \mathcal{X}$ refers to the space of trajectories starting at $s_{l}^{\star}$ and ending on any $x \in \mathcal{X}$. 
Consequently, 
\begin{equation*}
    p_{S}(s_{l} | \xi^{\star}) = 
    \frac{\sum_{\tau \colon s_{l} \rightsquigarrow \mathcal{X}} \tilde{p}_{B}(\xi, s_{l}, \tau)}{\sum_{\tau' \colon s_{l - 1} \rightsquigarrow \mathcal{X}} \tilde{p}_{B}(\xi, \tau')}
\end{equation*}
by \Cref{eq:apx:A}; trajectories $\tau \colon s_{l} \rightsquigarrow \mathcal{X}$ inhabit the unlifted DAG. Since $\tilde{p}_{B}(\xi, s_{l}, \tau) = \tilde{p}_{B}(\xi | s_{l - 1}) \cdot \tilde{p}_{B}(s_{l - 1}, \tau)$ and $\tilde{p}_{B}(\xi, \tau') = \tilde{p}_{B}(\xi | s_{l - 1}) \cdot \tilde{p}_{B}(s_{l - 1}, \tau')$, 
\begin{equation*}
    p_{S}(s_{l} | s_{l - 1}, W_{l - 1}) =  p_{S}(s_{l} | \xi^{\star}) = 
    \frac{\tilde{p}_{B}(\xi | s_{l - 1}) \sum_{\tau \colon s_{l} \rightsquigarrow \mathcal{X}} \tilde{p}_{B}(s_{l - 1}, \tau)}{\tilde{p}_{B}(\xi | s_{l - 1}) \sum_{\tau' \colon s_{l - 1} \rightsquigarrow \mathcal{X}} \tilde{p}_{B}(\tau')} = 
    \tilde{p}_{B}(s_{l - 1} | s_{l})
    \frac{\sum_{\tau \colon s_{l} \rightsquigarrow \mathcal{X}} \tilde{p}_{B}(\tau)}{\sum_{\tau' \colon s_{l - 1} \rightsquigarrow \mathcal{X}} \tilde{p}_{B}(\tau')} =\colon \tilde{p}_{F}(s_{l}|s_{l-1}). 
\end{equation*}
That is, $p_{S}(s_{l} | \xi^{\star})$ depends only on $s_{l - 1}$ and $s_{l}$---and not on the past latent dynamical system $\{W_{t}\}_{t\ge0}$. This characterizes the Markovianity of $p_{F}$ (with respect to the natural filtration of the unlifted state space).  

\section{Learning a recurrent flow} \label{sec:recurrenta} 

This section provides further details on the learning of a recurrent policy network---i.e., a policy $p_{F}$ on the lifted DAG of Definition~\ref{def:cldag} satisfying either Corolalry~\ref{col:cb} or~\ref{col:tb}.
We also elaborate on the connection between our path-dependent algorithm and the flow network analogy that has been used to characterize GFlowNets. \looseness=-1 

\begin{figure}
    \centering
    \begin{subfigure}{.25\linewidth}
        \includegraphics[width=\linewidth, page=2]{figures/liftedgraphs.pdf}
        \caption{Path-dependent.} 
        \label{fig:flows:p}
    \end{subfigure} 
    \begin{subfigure}{.25\linewidth}
        \includegraphics[width=\linewidth, page=3]{figures/liftedgraphs.pdf}
        \caption{Markovian.}
        \label{fig:flows:m} 
    \end{subfigure}
    \caption{\textbf{Multi-gated flow network interpretation} for path-dependent (a) and Markovian (b) amortized samplers. 
    Under our framework, each node sends a variable amount of flow to its children, depending on the flow's past trajectory (e.g., either \textcolor{green}{green} or \textcolor{red}{red} for $x_{1}$'s parent).  
    In both cases, the flow reaching each terminal state $(x_{1}, x_{2}, x_{3})$ is the same; see \Cref{sec:pdas} for details. \looseness=-1  
    }
\end{figure}

\paragraph{On the flow network analogy.} 
An important contribution of \citet{Bengio2021}'s seminal work on GFlowNets was the proposed physical interpretation of sampling as a flow assignment problem. 
Under this epistemology, the unnormalized distribution $R(x)$ of a terminal state $x \in \mathcal{X}$ is viewed as the amount of flow that should be transported from the initial state ($s_{o}$) to $x$ through the edges of the state graph; see \citep[Figure 2]{kim2024localsearchgflownets} and \Cref{fig:flows:m}. 
Interestingly, Figure~\ref{fig:flows:p} shows that our proposed framework can also be interpreted as learning a flow assignment in a \emph{multi-gated} flow network, in which each node can send a variable amount of flow to its children depending on the value of the latent variable $W$ of the lifted state graph (Definition~\ref{def:cldag}). 
Drawing on this, the reader is invited to notice that many of the recent algorithmic improvements for GFlowNets (e.g., \cite{kim2024learningscalelogitstemperatureconditional, malik2023batchgfngenerativeflownetworks, lau2023dgfn, lau2024qgfncontrollablegreedinessaction, kim2025symmetryawaregflownets}) can be easily adapted to our setting. 
We discussed some of them in \Cref{sec:pdas} in the main text and in \Cref{sec:distributed} in the supplement (below). \looseness=-1 

\paragraph{Learning a recurrent policy network.} 
For completeness, we also elaborate on the search for policies $p_{F}$ and $p_{B}$ satisfying any of the introduced balance conditions in . 
To achieve this, we simply minimize the expectation of the log-squared difference between the left- and right-hand sides in Proposition~\ref{prop:lifted_balance} via stochastic gradient descent.
As the corresponding stochastic programming problems have already been introduced and studied elsewhere \cite{Bengio2021, malkin2022trajectory, malkin2023gflownets, Madan2022LearningGF, dasilva2024embarrassinglyparallelgflownets, hu2024squarederrorexploringloss, Zhang2023} in the context of Markovian samplers, they are only briefly described in the following corollary of Proposition~\ref{prop:lifted_balance}. \looseness=-1

\begin{corollary}[Learning objectives] \label{col:objectives} 
    Let $p_{E}$ be an exploratory policy\footnote{An \emph{exploratory policy} is a policy assigning positive probability to each children of the state graph at every state.
    Under the formalism of Section~\ref{sec:stochastic}, this means that the corresponding kernel $P_{E}$ is mutually absolutely continuous with respect to base measure $\kappa_{F}$. 
    }, which may depend on $p_{F}$ (e.g., an $\epsilon$-greedy policy: $p_{E}(\cdot | s) = (1 - \epsilon) p_{F}(\cdot | s) + \epsilon p_{U}(\cdot | s)$, with $p_{U}$ as the uniform policy). 
    Then, the unrestricted global optima of each of the following objective functions over all policies respectively satisfy the TB (Corollary~\ref{col:tb}), and CB (Corollary~\ref{col:cb}) conditions. \looseness=-1
    \begin{enumerate}[leftmargin=0.34cm]
        \item (TB loss). Let $Z = F(s_{o}^{\star})$ be the initial state's flow. Then, 
        \begin{equation*}
            \mathcal{L}_{\mathrm{TB}}(p_{F}, p_{B}, Z) = \mathbb{E}_{\tau \sim p_{E}} \left[ \left( \log \frac{Z p_{F}(\tau|s_{o}^{\star})}{p_{B}(\tau|x) R(x)} \right)^{2} \right]
        \end{equation*}
        \item (CB loss). Due to the variance identity for i.i.d. squared differences \cite{richter2020vargradlowvariancegradientestimator, robust}, \looseness=-1
        \begin{equation*}
            \begin{aligned} 
                \mathcal{L}_{\mathrm{CB}}(p_F, p_B) &= \!\!\!\!\!\! \underset{\tau, \tau' \sim p_{E}} {\mathbb{E}} \!\!\left[ \! \left( \log \frac{p_{F}(\tau|s_{o}^{\star}) p_{B}(\tau'|x') R(x')}{p_{B}(\tau|x) p_{F}(\tau'|s_{o}^{\star}) R(x)} \!\right)^{\!\!2} \right] \\
                &= 2 \mathrm{Var}_{\tau \sim p_{E}} \left[ \log \frac{p_{F}(\tau|s_{o}^{\star})}{p_{B}(\tau|x) R(x)} \right] 
            \end{aligned} 
        \end{equation*}
        in which $x, x'$ denote the terminal states of $\tau, \tau'$. 
    \end{enumerate}
\end{corollary}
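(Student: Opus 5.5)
The plan is to show, for each objective, that the unrestricted global minimum value is zero, and that any minimizer must therefore satisfy the corresponding balance condition on every complete trajectory of $\mathrm{SG}^{\star}$. Both losses are expectations of nonnegative quantities, so their infimum is at least zero. We then exhibit a zero-loss configuration and use the full support of $p_{E}$ to go from ``zero in expectation'' to ``zero pointwise''.

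\textbf{Step 1: attainability.} Take any Markovian backward policy $\tilde{p}_{B}$ on $\mathrm{SG}$ and lift it to $\mathrm{SG}^{\star}$. Define $p_{F}$ via the flow $F(s^{\star}) = \sum_{\tau \rightsquigarrow s^{\star}}$ (backward-weighted), as in the classical construction of \citet{Foundations}. Concretely, let
\[
\tilde{p}_{F}(s_{l}|s_{l-1}) = \tilde{p}_{B}(s_{l-1}|s_{l}) \, \frac{F(s_{l})}{F(s_{l-1})},
\]
with $F(s) = \sum_{x} R(x) \sum_{\tau: s \rightsquigarrow x} \tilde{p}_{B}(\cdot)$ and $Z = \sum_{x} R(x)$. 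This pair satisfies TB on every complete lifted trajectory. Since the latent $W_{t}$ is a deterministic function of the state path, every lifted trajectory corresponds to exactly one unlifted trajectory, so the lifted condition follows from the unlifted one. Consequently $\mathcal{L}_{\mathrm{TB}} = 0$, and CB also holds, giving $\mathcal{L}_{\mathrm{CB}} = 0$.

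\textbf{Step 2: TB loss.} Suppose $(p_{F}, p_{B}, Z)$ is a global minimizer, so $\mathcal{L}_{\mathrm{TB}} = 0$. Because $p_{E}$ assigns positive probability to every child at every lifted state, and there are finitely many complete trajectories (by determinism of $\phi$, as noted in the proof of Proposition~\ref{prop:lifted_balance}), every complete trajectory $\tau$ has $p_{E}(\tau) > 0$. A vanishing expectation of squares then forces
\[
\log \frac{Z \, p_{F}(\tau|s_{o}^{\star})}{p_{B}(\tau|x) \, R(x)} = 0
\]
for each $\tau$. This is exactly \Cref{eq:tb} with $F(s_{o}^{\star}) = Z$, and Corollary~\ref{col:tb} applies.

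\textbf{Step 3: CB loss.} First verify the variance identity. Writing $g(\tau) = \log \frac{p_{F}(\tau|s_{o}^{\star})}{p_{B}(\tau|x) R(x)}$, the integrand equals $\bigl(g(\tau) - g(\tau')\bigr)^{2}$. For i.i.d. $\tau, \tau'$ this has expectation $2\,\mathrm{Var}[g]$. A minimizer has zero variance, so by the full-support argument $g$ is constant on all complete trajectories. Hence the CB identity holds for every pair, and Corollary~\ref{col:cb} applies.

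\textbf{Main obstacle.} The subtle point is the full-support step when $p_{E}$ depends on $p_{F}$, for example under $\epsilon$-greedy exploration. We must ensure that $p_{E}$ still charges every lifted trajectory at the optimum. With $\epsilon > 0$ this is immediate, since the $\epsilon p_{U}$ mixture alone gives every child positive probability regardless of $p_{F}$.
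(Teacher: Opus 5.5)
Your Steps 2 and 3 are correct. They are the reasoning the paper compresses into one sentence ("the global minima \ldots\ satisfy either'' TB or CB): nonnegativity, full support of $p_{E}$ on the finitely many complete lifted trajectories, and the identity $\mathbb{E}[(g(\tau)-g(\tau'))^{2}] = 2\,\mathrm{Var}[g]$. The full-support point is not really an obstacle, since the footnote builds it into the definition of an exploratory policy.

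The step that carries the weight is Step 1. Without a feasible zero-loss point, the minimizers need not satisfy anything, and as written Step 1 does not produce a feasible point. Setting $p_{B}((s,W)\mid(s',W')) = \tilde{p}_{B}(s\mid s')$ does not define a policy on the transposed $\mathrm{SG}^{\star}$, which is what \Cref{prop:lifted_balance} requires. The reason is that lifted parents are not in bijection with the unlifted parents of the projection.
\begin{itemize}
\item By item 4 of \Cref{def:pointeddag}, a terminal $x$ has one lifted parent $(s,W)$ for \emph{every} latent value $W$ reachable at each parent $s$. So the weights at $x$ sum to more than one.
\item A lifted non-terminal $(s',W')$ only has those parents $s$ for which some reachable $W$ satisfies $W+\phi(s',W)=W'$. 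So there the weights can sum to less than one.
\end{itemize}
For example, take \textsc{Lines} with $M=N=2$ and generic parameters of the linear dynamics. Then $p_{2}$ has two lifted copies, $A$ (via $p_{o}\to p_{2}$) and $B$ (via $p_{o}\to p_{1}\to p_{2}$). The naive lift gives $p_{B}(A\mid q_{2}) = p_{B}(B\mid q_{2}) = 1$, while $p_{B}(\cdot\mid A)$ has total mass $\tilde{p}_{B}(p_{o}\mid p_{2})<1$. Only the trajectory-level products are consistent.

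The repair is short, and there are two ways to do it.
\begin{itemize}
\item $\mathrm{SG}^{\star}$ is itself a finite pointed DAG. It is acyclic because its edges project onto edges of $\mathrm{SG}$, and finite because every lifted state is the endpoint of a path in $\mathrm{SG}$. Its unique source is $s_{o}^{\star}$ and its sinks are $\mathcal{X}$. So you can run the classical construction directly on $\mathrm{SG}^{\star}$ with any full-support backward policy defined there.
\item Alternatively, keep your lifted forward policy; lifting a forward policy is harmless, since each unlifted child yields exactly one lifted child. Pair it with its induced backward policy $p_{B}(s^{\star}\mid s'^{\star}) = P_{F}(s^{\star})\,p_{F}(s'^{\star}\mid s^{\star})/P_{F}(s'^{\star})$, where $P_{F}$ are the visitation probabilities under $p_{F}$. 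This is normalized over lifted parents and telescopes to $p_{B}(\tau^{\star}\mid x) = p_{F}(\tau^{\star}\mid s_{o}^{\star})/p_{\intercal}(x)$. Your bijection gives $p_{\intercal}(x) = R(x)/Z$, so TB holds with $Z=\sum_{x}R(x)$.
\end{itemize}

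There is also a third option. You could let $p_{B}$ range over Markovian policies on $\mathrm{SG}$ composed along lifted trajectories, which is the reading implicit in \Cref{prop:markovian}. Then your construction is admissible: by your bijection the products still sum to one over trajectories ending at $x$, and the conclusion of \Cref{col:tb} only needs $\sum_{\tau \rightsquigarrow x} p_{B}(\tau\mid x) = 1$. But you would have to state explicitly that this is the feasible set.

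In any version, choose $\tilde{p}_{B}$ with full support, so that $F>0$ and your $\tilde{p}_{F}$ is well defined.
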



Importantly, Corollary~\ref{col:objectives} ensures that the transition dynamics for both $\{s_{t}\}_{t\ge1}$ and $\{W_{t}\}_{t\ge1}$ in Figure~\ref{fig:lifted_mdp} can be jointly learned by minimizing a single stochastic objective.
Also, it is worth noticing that the simplicity of the above formulation is only possible due to determinisitic nature of the latent transitions. 
When the $\{W_{t}\}_{t\ge1}$ follows a stochastic rule, the state space ceases to be countable; as a consequence, a rigorous treatment of the resulting method depends on a more sophistcated mathematical formalism---see \Cref{sec:stochastic}.
Intriguingly, our analysis intentionally avoids considering the sampling on non-acyclig environment, e.g., \citep{Brunswic_2024, morozov2025revisitingnonacyclicgflownetsdiscrete}.
That said, our path-dependent framework can be naturally extended to this broader setting. 
In fact, we believe that the latent dynamical system can be naturally used for biasing the amortized sampler towards shorter trajectories in the context of cyclic generation---which is an interesting direction for future research.
\looseness=-1

\section{Related works} \label{sec:related} 


A systematic study of amortized neural samplers for discrete and compositional spaces was initiated by the introduction of Generative Flow Networks \citep[GFlowNets;][]{Bengio2021}.  
Originally, GFlowNets were proposed as a diversity-seeking reinforcement learning algorithm \cite{Bengio2021, Foundations}. 
Soon after \citet{Bengio2021}'s seminal work, GFlowNets were successfully applied to molecule and biological sequence discovery \cite{Bengio2021, sequence, wang2023scientific, nica2022evaluating}, structure learning \cite{deleu2022bayesian, deleu2023joint}, combinatorial optimization \cite{Zhang2023, robust, zhang2025adversarialgenerativeflownetwork}, phylogenetic inference \cite{zhou2024phylogfn}, and fine-tuning of large language models \cite{hu2023amortizing, venkatraman2024amortizingintractableinferencediffusion}. 
In parallel, there has been a significant effort in developing algorithmic improvements for reducing GFlowNet's sample complexity. 
These approaches have focused on improved credit assignment via reward shaping \cite{pan2023generative, LingTrajectory}, more effective learning objectives \cite{Madan2022LearningGF, deleu2024discreteprobabilisticinferencecontrol}, and heuristics for enhancing exploration of high-probability regions \cite{kim2024localsearchgflownets, kim2025adaptive} and shortening the trajectory length \cite{boussif2024actionabstractionsamortizedsampling, silva2025generalization} for reduce complexity.
Given Proposition~\ref{prop:lifted_balance}, we notice that our method complements---rather than substitutes---these techniques. 
On top of this, there has also been a growing body of literature aimimg to build a principled framework for understanding GFlowNets through the lens of optimization \cite{yu2025secretsgflownetslearningbehavior}, expressivity \cite{silva2025when}, and generalization \cite{towards, atanackovic2024investigating}. 
In \cite{silva2025when}, in particular, it has been shown for the problems described \Cref{sec:experiments} that the \emph{flow-consistency in subgraphs} (FCS),  \looseness=-1
\begin{equation} \label{eq:fcsa}
    \mathrm{FCS}(p, q) = \mathbb{E}_{\mathcal{B} \coloneqq \{x_{1}, \dots, x_{B}\} \sim r}\left[\frac{1}{2} \sum_{1 \le i \le B} \left| \frac{p(x_{i})}{p(\mathcal{B})} - \frac{q(x_{i})}{q(\mathcal{B})} \right|\right] = \mathbb{E}_{\mathcal{B} \sim r} \left[ \mathrm{TV}(p|_{\mathcal{B}}, q|_{\mathcal{B}}) \right],
\end{equation}
in which $p$, $q$, and $r$ are distributions, $B$ is a constant, and $p(\mathcal{B}) = \sum_{x \in \mathcal{B}} p(x)$, and $p|_{\mathcal{B}}$ is the restriction of $p$ on $\mathcal{B}$, 
provides a reliable measurement for the goodness-of-fit of GFlowNets, with a (Spearman) correlation 
of over $90\%$ with the TV distance between the learned and target distributions on the entire space; we followed the instructions in \citep[Section E]{silva2025when} for evaluating $\mathrm{FCS}$. 
Interestingly, our sampling algorithm
is reminiscent of Gibbs sampling \cite{jones2001honest, gelfand1990sampling}, implementing a systematic scannig strategy for state modification: $(s, \omega) \rightarrow (s', \omega) \rightarrow (s', \omega')$. 
\looseness=-1


\section{Stochastic latent dynamical system} \label{sec:stochastic}  

\begin{wrapfigure}{r}{.3\textwidth}
    \vspace{-16pt} 
    \includegraphics[width=\linewidth, page=1]{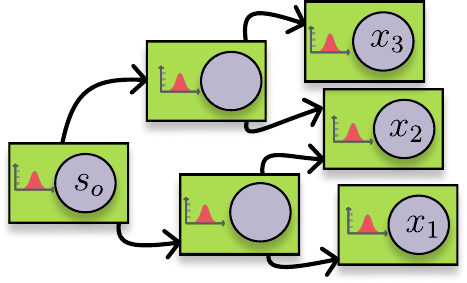}
    \caption{Illustration of Definition~\ref{def:cldag}. We endow each node in the unlifted DAG (recall Definition~\ref{def:normalpointeddag}) with a latent and stochastic random variable (in \textcolor{red}{red}).} 
    \vspace{-12pt} 
    \label{fig:stochasticpointeddag}
\end{wrapfigure}
This section builds on the framework of \citet{lahlou2023theory} to expand our path-dependent algorithm in \Cref{fig:lifted_mdp} to the setting in which the latent dynamical system $\{W_{t}\}_{t\ge1}$ evolves according to a stochastic rule.
The resulting lifted DAG, defined below, is illustrated in~\Cref{fig:stochasticpointeddag}. 
Contrarily to our exposition in \Cref{sec:pdai}, in which we solely aimed to sample from a distribution over $\mathcal{X}$, we presently focus in sampling from a joint distribution $\pi(x, \omega)$ over $\mathcal{X}$ and a user-defined set $\Omega$ such that the marginal of $\pi$ over $\mathcal{X}$ matches our target distribution ($R$, in the main text).  

\begin{definition} \label{def:cldag} 
    Let $(\bar{\mathcal{S}}, \Sigma_{\mathcal{S}})$ and $(\Omega, \Sigma_{\Omega})$ be measurable topological spaces with Borel $\sigma$-algebras $\Sigma$ and $\Sigma_{\Omega}$, respectively. 
    Also, let $\nu$ be a positive probability measure on the product space $(\bar{\mathcal{S}} \times \Omega, \Sigma)$, in which $\Sigma$ is the $\sigma$-algebra induced by the product $\Sigma_{\mathcal{S}} \times \Sigma_{\Omega}$. Let $\mathcal{S}^{\star} = \bar{\mathcal{S}} \times \Omega$ be the \emph{lifted state space}. 
    We define $\bar{\mathcal{S}} = \{s_{o}\} \cup \mathcal{S} \cup \mathcal{X}$, in which $\{s_{o}\}$ and $\mathcal{X}$ are special open sets referred to respectively as \emph{initial} and \emph{terminal states}. 
    We assume the existence of \emph{forward} $\kappa_{F} \colon \mathcal{S}^{\star} \rightarrow \mathcal{M}(\Sigma)$ and \emph{backward} $\kappa_{B} \colon \mathcal{S}^{\star} \rightarrow \mathcal{M}(\Sigma)$ kernels mapping each $s^{\star} \in \mathcal{S}^{\star}$ to a probability measure over $\Sigma$ and satisfying the following properties. \looseness=-1 
    \begin{enumerate}
        \item \textbf{(Duality).} $\nu(\mathrm{d}s) \kappa_{F}(s, \mathrm{d}s') = \nu(\mathrm{d}s') \kappa_{B}(s', \mathrm{d}s)$ on $\Sigma|_{(\{s_{o}\} \cup \mathcal{S}) \times \Omega} \times \Sigma|_{(\mathcal{S} \cup \mathcal{X}) \times \Omega}$. 
        \item \textbf{(Forward absorption).} There exists a $N \in \mathbb{N}$ for which $\kappa_{F}^{N}(s, \mathcal{X} \times \Omega) = 1$ for all $s \in (\{s_{o}\} \cup \mathcal{S}) \times \Omega$, in which $\kappa_{F}^{N}(s, B) = \int_{\mathcal{S}^{\star}} \kappa_{F}(s, \mathrm{d}s') \kappa_{F}^{N - 1}(s', B)$ is recursively defined.
        Also, $\kappa_{F}(x^{\star}, \{x\} \times \Omega) = 1$ $\nu$-almost surely for $x^{\star} \in \{x\} \times \Omega$ and $x \in \mathcal{X}$ and $\kappa_{F}(s, (\mathcal{S}\cup\mathcal{X}) \times \Omega) = 1$ for every $s \in \mathcal{S}^{\star}$.
        \item \textbf{(Reachability).} For each measurable $B \in \Sigma$, there is an integer $k \in \mathbb{N}$ such that 
        $\int_{\mathcal{S}_{o}^{\star}} \nu(\mathrm{d}s_{o}^{\star}) \kappa_{F}^{k}(s_{o}^{\star}, B) > 0$, in which $\mathcal{S}_{o}^{\star} = \{s_{o}\} \times \Omega$.
        \item \textbf{(Backward absorption).} $\mathcal{S}_{o}^{\star}$ is the only absorbing set for $\kappa_{B}$, i.e., $\kappa_{B}(s_{o}^{\star}, \mathcal{S}_{o}^{\star}) = 1$ $\nu$-almost surely for $s_{o}^{\star} \in \mathcal{S}_{o}^{\star}$. 
        Also, $\kappa_{B}(s^{\star}, (\bar{\mathcal{S}} \setminus \mathcal{X}) \times \Omega) = 1$ $\nu$-almost surely for $s^{\star} \in \mathcal{S}^{\star}$, in particular, $\kappa_{B}$ leaves $\mathcal{X}^{\star}$ $\nu$-almost surely. 
        \item \textbf{(Continuity).} Both $s \mapsto \kappa_{F}(s, B)$ and $s \mapsto \kappa_{B}(s, B)$ are continuous functions (wrt given topology) for all $B \in \Sigma$.   
        \item \textbf{(Absolute continuity).} Both $\kappa_{F}(s, \cdot)$ and $\kappa_{B}(s, \cdot)$ are absolutely continuous wrt $\nu$ for all $s \in \mathcal{S}^{\star}$. 
        \item \textbf{($\nu$-nondegeneracy).} For each $s \in \mathcal{S}^{\star}$, if $\kappa_{B}(s, E) > 0$ for some $E \in \Sigma$, then $\kappa_{B}(s, \tilde{E}) > 0$ for every measurable subset $\tilde{E} \subseteq E$ with $\nu(\tilde{E}) > 0$.  
    \end{enumerate}
\end{definition}

Definition~\ref{def:cldag} characterizes a lifted DAG equipped with a continuous and stochastic latent variable inhabiting a set $\Omega$.
Before proceeding, we recall that $\Sigma|_{A}$ is the restriction of the $\sigma$-algebra $\Sigma$ to the measurable set $A$; since $\sigma$-algebras are closed for intersection, this is well-defined. 
Given a measure $\xi$ over $\Sigma|_{\{s_{o}\} \times \Omega}$, our objective is that the marginal distribution over $\mathcal{X} \times \Omega$ induced by a (learned) kernel $P_{F} \colon \mathcal{S}^{\star} \rightarrow \mathcal{M}(\Sigma)$ absolutely continuous wrt $\kappa_{F}$ matches a given (perhaps unnormalized) target $\Pi \colon \Sigma|_{\mathcal{X} \times \Omega} \rightarrow [0, 1]$.
In other words, 
\begin{equation*}
    \int_{\{s_{o}\} \times \Omega} \xi(\mathrm{d}s_{o}^{\star}) P_{F}^{N}(s_{o}, B) = \Pi(B) 
\end{equation*}
for each $B \in \Sigma|_{\mathcal{X} \times \Omega}$. 
When there exist probability measures $R \colon \Sigma_{\mathcal{S}}|_{\mathcal{X}} \rightarrow [0, 1]$ and $P_{\Omega} \colon \Sigma_{\Omega} \rightarrow [0, 1]$ such that $\Pi(B_{\mathcal{S}} \times B_{\Omega}) = R(B_{\mathcal{S}}) P_{\Omega}(B_{\Omega})$ for $B_{\mathcal{S}} \times B_{\Omega} \in \Sigma$, the condition above ensures that the marginal distribution of $\xi(\mathrm{d}s_{o}^{\star})\kappa_{F}(s_{o}^{\star}, B)$ over $\mathcal{X}$ when the latent variable in $\Omega$ is marginalized out matches $R(B)$. 
It remains to be answered how to learn such a $\kappa_{F}$ and $\xi$ and how to define $P_{\Omega}$. 
We will comment on these facts in passing below. 
Towards this objective, we first state a few general facts about the lifted continuous DAG in Definition~\ref{def:cldag}. 
We start by showing that $\mathcal{X} \times \Omega$ is a finite absorbing state for $\kappa_{F}$; in lieu of the statement below, we say that $\kappa_{F}$ is a finitely absorbing kernel with horizon $N$. 

\begin{lemma} \label{lemma:abs} 
    Let $\mathcal{X}^{\star} = \mathcal{X} \times \Omega$. Then, $\kappa_{F}^{n}(s, \mathcal{X}^{\star}) = 1$ for all $n \ge N$. 
\end{lemma}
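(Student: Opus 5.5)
The plan is an induction on $n \ge N$ that uses only the semigroup identity $\kappa_{F}^{n+1}(s, B) = \int_{\mathcal{S}^{\star}} \kappa_{F}^{n}(s, \mathrm{d}s') \kappa_{F}(s', B)$ and the absorption properties in item~2 (Forward absorption) of Definition~\ref{def:cldag}. The recursive definition there composes the other way, as $\kappa_{F}^{n}(s,B)=\int \kappa_{F}(s,\mathrm{d}s')\kappa_{F}^{n-1}(s',B)$. I will therefore first note, by a routine induction (Chapman--Kolmogorov for kernels, via Fubini/Tonelli on nonnegative integrands), that both compositions agree, i.e.\ $\kappa_{F}^{m+k}(s,B)=\int \kappa_{F}^{m}(s,\mathrm{d}s')\kappa_{F}^{k}(s',B)$.

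The base case $n = N$ is exactly item~2 for $s \in (\{s_{o}\}\cup\mathcal{S})\times\Omega$. For $s \in \mathcal{X}^{\star}$ I will show $\kappa_{F}^{n}(s,\mathcal{X}^{\star}) = 1$ for all $n$ directly, using that $\kappa_{F}(x^{\star}, \{x\}\times\Omega) = 1$ and iterating. For the inductive step, I assume $\kappa_{F}^{n}(s,\mathcal{X}^{\star}) = 1$ and write
\begin{equation*}
\kappa_{F}^{n+1}(s,\mathcal{X}^{\star}) = \int_{\mathcal{X}^{\star}} \kappa_{F}^{n}(s,\mathrm{d}s')\,\kappa_{F}(s',\mathcal{X}^{\star}) + \int_{\mathcal{S}^{\star}\setminus\mathcal{X}^{\star}} \kappa_{F}^{n}(s,\mathrm{d}s')\,\kappa_{F}(s',\mathcal{X}^{\star}).
\end{equation*}
The second integral vanishes, because its domain has $\kappa_{F}^{n}(s,\cdot)$-measure $0$ by the induction hypothesis. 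On $\mathcal{X}^{\star}$, the integrand $\kappa_{F}(s',\mathcal{X}^{\star}) \ge \kappa_{F}(s',\{x\}\times\Omega) = 1$, so the first integral equals $\kappa_{F}^{n}(s,\mathcal{X}^{\star}) = 1$.

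The main obstacle is that the terminal absorption $\kappa_{F}(x^{\star},\{x\}\times\Omega)=1$ is only stated $\nu$-almost surely, whereas the integral above is against $\kappa_{F}^{n}(s,\cdot)$. To bridge this I will use item~6 (Absolute continuity): each $\kappa_{F}(s,\cdot)\ll\nu$. By induction, using the Chapman--Kolmogorov form $\kappa_{F}^{n}(s,B)=\int\kappa_{F}^{n-1}(s,\mathrm{d}s')\kappa_{F}(s',B)$, every $\kappa_{F}^{n}(s,\cdot)$ is then absolutely continuous with respect to $\nu$ for $n\ge 1$. Hence a $\nu$-null exceptional set inside $\mathcal{X}^{\star}$ is also $\kappa_{F}^{n}(s,\cdot)$-null, and the integrand equals $1$ $\kappa_{F}^{n}(s,\cdot)$-a.e.\ on $\mathcal{X}^{\star}$. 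Measurability of $s'\mapsto \kappa_{F}(s',\mathcal{X}^{\star})$ is part of $\kappa_{F}$ being a kernel (and also follows from item~5, Continuity), so all integrals are well defined.
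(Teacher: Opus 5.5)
Your proof is correct, but it is organized differently from the paper's. The paper peels off the \emph{first} step, $\kappa_{F}^{n+1}(s,\mathcal{X}^{\star})=\int_{\mathcal{S}^{\star}}\kappa_{F}(s,\mathrm{d}s')\,\kappa_{F}^{n}(s',\mathcal{X}^{\star})$. It then substitutes the induction hypothesis $\kappa_{F}^{n}(s',\mathcal{X}^{\star})=1$ at every successor $s'$ and concludes from $\kappa_{F}(s,\mathcal{S}^{\star})=1$. It never invokes the terminal-absorption clause or absolute continuity. That is shorter, but it tacitly needs the hypothesis at terminal successors $s'\in\mathcal{X}^{\star}$ too. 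The base case from item~2 of Definition~\ref{def:cldag} does not supply this, since it is stated only for $s\in(\{s_{o}\}\cup\mathcal{S})\times\Omega$. The terminal clause of item~2 gives it only $\nu$-almost everywhere. A rigorous version of that route would therefore need the same ingredients you use, plus a separate induction for terminal starting points. Your \emph{last}-step decomposition, via Chapman--Kolmogorov, uses the hypothesis only at the same starting point $s$, so item~2 supplies exactly the base case you need. The terminal clause then enters only inside an integral against $\kappa_{F}^{n}(s,\cdot)$, where $\kappa_{F}^{n}(s,\cdot)\ll\nu$ makes the $\nu$-null exceptional set harmless. The cost is some extra bookkeeping (associativity of kernel composition and absolute continuity of the iterates); the gain is an argument that actually closes under the stated hypotheses. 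One caveat: your side remark that $\kappa_{F}^{n}(s,\mathcal{X}^{\star})=1$ for every $s\in\mathcal{X}^{\star}$ follows from the hypotheses as written only for $\nu$-almost every terminal $s$, unless you also use continuity together with full support of $\nu$. Your induction never relies on it, and the lemma is only ever applied at non-terminal starting points (e.g.\ in Lemma~\ref{lemma:abbs}), so you can simply qualify or drop that remark.
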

\begin{proof}
    We proceed by simple induction. The base case, $\kappa_{F}^{N}(s, \mathcal{X}^{\star}) = 1$, follows from the definition. 
    Assume that $\kappa_{F}^{n}(s, \mathcal{X}^{\star}) = 1$ for some $n \ge N$. Then, 
    \begin{equation*}
        \kappa_{F}^{n + 1}(s, \mathcal{X}^{\star}) = \int_{\mathcal{S}^{\star}} \kappa_{F}(s, \mathrm{d}s') \kappa_{F}^{n}(s', \mathcal{X}^{\star}) = \int_{\mathcal{S}^{\star}} \kappa_{F}(s, \mathrm{d}s') = 1.  
    \end{equation*}
    By induction, $\kappa_{F}^{n}(s, \mathcal{X}^{\star}) = 1$ for all $n \ge N$. 
\end{proof}

We also demonstrate that the duality in Definition~\ref{def:cldag} holds for trajectories. 

\begin{lemma} \label{lemma:trajs} 
    $\nu(\mathrm{d}s_{1}) \prod_{i=1}^{n} \kappa_{F}(s_{i}, \mathrm{d}s_{i + 1}) = \nu(\mathrm{d}s_{n + 1}) \prod_{i=1}^{n} \kappa_{B}(s_{i + 1}, \mathrm{d}s_{i})$ for each $n \ge 2$ on $\Sigma_{(\{s_{o}\} \cup \mathcal{S}) \times \Omega} \times \left( \bigtimes_{i=2}^{n} \Sigma_{\mathcal{S} \times \Omega} \right) \times \Sigma_{(\mathcal{S} \cup \mathcal{X}) \times \Omega}$. 
\end{lemma}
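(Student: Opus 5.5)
We prove Lemma~\ref{lemma:trajs} by induction on $n$, using the duality property (item 1 of Definition~\ref{def:cldag}) both as the base case and as the single ingredient in the inductive step. Both sides are measures on the product $\sigma$-algebra. By the $\pi$-$\lambda$ theorem, it suffices to check equality on measurable rectangles $A_{1} \times \cdots \times A_{n+1}$, where $A_{1} \subseteq (\{s_{o}\} \cup \mathcal{S}) \times \Omega$, $A_{i} \subseteq \mathcal{S} \times \Omega$ for $2 \le i \le n$, and $A_{n+1} \subseteq (\mathcal{S} \cup \mathcal{X}) \times \Omega$. Concretely, the left-hand side assigns to such a rectangle the value
\begin{equation*}
\int_{A_{1}} \nu(\mathrm{d}s_{1}) \int_{A_{2}} \kappa_{F}(s_{1}, \mathrm{d}s_{2}) \cdots \int_{A_{n+1}} \kappa_{F}(s_{n}, \mathrm{d}s_{n+1}),
\end{equation*}
and the right-hand side is defined analogously with the integrals nested from $s_{n+1}$ downward.

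For the base case (which also covers $n=1$), duality gives $\nu(\mathrm{d}s_{1})\kappa_{F}(s_{1},\mathrm{d}s_{2}) = \nu(\mathrm{d}s_{2})\kappa_{B}(s_{2},\mathrm{d}s_{1})$ on the stated product $\sigma$-algebra. For the inductive step, assume the identity holds for $n-1$ on the corresponding product $\sigma$-algebra, and write
\begin{equation*}
\nu(\mathrm{d}s_{1}) \prod_{i=1}^{n} \kappa_{F}(s_{i}, \mathrm{d}s_{i+1}) = \Big[\nu(\mathrm{d}s_{1}) \prod_{i=1}^{n-1} \kappa_{F}(s_{i}, \mathrm{d}s_{i+1})\Big] \kappa_{F}(s_{n}, \mathrm{d}s_{n+1}).
\end{equation*}
Applying the induction hypothesis to the bracket replaces it by $\nu(\mathrm{d}s_{n}) \prod_{i=1}^{n-1}\kappa_{B}(s_{i+1},\mathrm{d}s_{i})$. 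This step needs $s_{n} \in \mathcal{S}\times\Omega$, which lies in both $(\mathcal{S}\cup\mathcal{X})\times\Omega$ and $(\{s_{o}\}\cup\mathcal{S})\times\Omega$, so the restrictions match. The expression then contains the factor $\nu(\mathrm{d}s_{n})\kappa_{F}(s_{n},\mathrm{d}s_{n+1})$, and duality converts it into $\nu(\mathrm{d}s_{n+1})\kappa_{B}(s_{n+1},\mathrm{d}s_{n})$. The result is $\nu(\mathrm{d}s_{n+1})\prod_{i=1}^{n}\kappa_{B}(s_{i+1},\mathrm{d}s_{i})$, as claimed.

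The main obstacle is justifying these substitutions inside a nested integral, since the kernel $\kappa_{F}(s_{n},\mathrm{d}s_{n+1})$ sits inside the integrals over $s_{1},\dots,s_{n-1}$. To handle this, I would first integrate out $s_{n+1}$ against the rectangle and define $g(s_{n}) = \mathbf{1}_{A_{n}}(s_{n})\,\kappa_{F}(s_{n},A_{n+1})$. This function is bounded and measurable, the latter because a kernel is measurable in its first argument (continuity in item 5 also suffices). The induction hypothesis, extended from indicators to bounded measurable test functions by the standard simple-function and monotone-class argument, then gives
\begin{equation*}
\int \nu(\mathrm{d}s_{n}) \, g(s_{n}) \int\prod_{i=1}^{n-1}\mathbf{1}_{A_{i}}(s_{i})\,\kappa_{B}(s_{i+1},\mathrm{d}s_{i}).
\end{equation*}
The inner integral $h(s_{n}) = \int\prod_{i=1}^{n-1}\mathbf{1}_{A_{i}}(s_{i})\,\kappa_{B}(s_{i+1},\mathrm{d}s_{i})$ is again bounded and measurable in $s_{n}$. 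The remaining expression is $\int \nu(\mathrm{d}s_{n})\,\kappa_{F}(s_{n},\mathrm{d}s_{n+1})\,\mathbf{1}_{A_{n}}(s_{n})\,h(s_{n})\,\mathbf{1}_{A_{n+1}}(s_{n+1})$, and duality tested against this product function $\mathbf{1}_{A_{n}}h \otimes \mathbf{1}_{A_{n+1}}$ yields the backward form. Finally, Fubini--Tonelli applies throughout because all integrands are nonnegative, so the nested integral can be read in either order.
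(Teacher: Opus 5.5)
Your proof is correct and follows the paper's route: induction on $n$ with duality as the base case, and in the inductive step the induction hypothesis applied to the first block of forward kernels, then one more application of duality to $\nu(\mathrm{d}s_{n})\kappa_{F}(s_{n},\mathrm{d}s_{n+1})$. Your step of first integrating out the last coordinate via $g(s_{n}) = \mathbf{1}_{A_{n}}(s_{n})\,\kappa_{F}(s_{n},A_{n+1})$ justifies this more carefully than the paper, which applies the hypothesis with the final coordinate held fixed even though that coordinate's kernel depends on the preceding state.
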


\begin{proof}
    We will show that this holds for $n = 2$, and then proceed inductively. 
    For conciseness, let $A = \{s_{o}\} \cup \mathcal{S}$ and $B = \mathcal{S} \cup \mathcal{X}$. For any measurable function $f \colon K_{2} \rightarrow \mathbb{R}$ for $K_{2} = (A \times \Omega)  \times (\mathcal{S} \times \Omega) \times (B \times \Omega)$,  
    \begin{equation*}
        \begin{aligned}
            \int_{K_{2}} f(s_{1}, s_{2}, s_{3}) \nu(\mathrm{d}s_{1}) \kappa_{F}(s_{1}, \mathrm{d}s_{2}) \kappa_{F}(s_{2}, \mathrm{d}s_{3}) &= \int_{K_{2}} f(s_{1}, s_{2}, s_{3}) \nu(\mathrm{d}s_{2}) \kappa_{B}(s_{2}, \mathrm{d}s_{1}) \kappa_{F}(s_{2}, \mathrm{d}s_{3}) \\
            &= \int_{K_{2}} f(s_{1}, s_{2}, s_{3}) \nu(\mathrm{d}s_{3}) \kappa_{B}(s_{3}, \mathrm{d}s_{2}) \kappa_{B}(s_{2}, \mathrm{d}s_{1}) 
        \end{aligned}
    \end{equation*}
    by duality. 
    Inductively, assume that the identity holds for $n$. 
    Then, let $f \colon K_{n + 1} \rightarrow \mathbb{R}$ be any real-valued measurable function, in which $K_{n + 1} = (A \times \Omega) \times (\mathcal{S} \times \Omega)^{n} \times (B \times \Omega)$. 
    As such, 
    \begin{equation*}
        \begin{aligned}
            \int_{K_{n + 1}} f(s_{1}, \dots, s_{n + 2}) \nu(\mathrm{d}s_{1}) \prod_{i=1}^{n + 1} \kappa_{F}(s_{i}, \mathrm{d}s_{i + 1}) &= \int_{K_{n + 1}} f(s_{1}, \dots, s_{n + 2}) \nu(\mathrm{d}s_{n + 1}) \prod_{i=1}^{n} \kappa_{B}(s_{i + 1}, \mathrm{d}s_{i}) \kappa_{F}(s_{n + 1}, \mathrm{d}s_{n + 2}) \\
            &= \int_{K_{n + 1}} f(s_{1}, \dots, s_{n + 2}) \nu(\mathrm{d}s_{n + 2}) \prod_{i=1}^{n + 1} \kappa_{B}(s_{i + 1}, \mathrm{d}s_{i}), 
        \end{aligned}
    \end{equation*}
    in which the first equality follows from the induction hypothesis (and from the measurability of $f_{s_{n + 2}} \colon K_{n} \rightarrow \mathbb{R}$, $f_{s + 2}(s_{1}, \dots, s_{n + 1}) \coloneqq f(s_{1}, \dots, s_{n + 2})$) and the second, from the duality between $\kappa_{F}$ and $\kappa_{B}$ in $\Sigma|_{\mathcal{S} \times \Omega} \times \Sigma|_{(\mathcal{S} \cup \mathcal{X}) \times \Omega}$. 
    As $f$ was chosen arbitrarily, we verified that $\nu(\mathrm{d}s_{1}) \prod_{i=1}^{n} \kappa_{F}(s_{i}, \mathrm{d}s_{i + 1}) = \nu(\mathrm{d}s_{n + 1}) \prod_{i=1}^{n} \kappa_{B}(s_{i + 1}, \mathrm{d}s_{i})$ for each $n \ge 2$ on $\Sigma_{(\{s_{o}\} \cup \mathcal{S}) \times \Omega} \times \left( \bigtimes_{i=2}^{n} \Sigma_{\mathcal{S} \times \Omega} \right) \times \Sigma_{(\mathcal{S} \cup \mathcal{X}) \times \Omega}$. 
\end{proof}

In view of Lemma~\ref{lemma:trajs}, we recursively define the operation $\kappa_{F}^{\circ n}(s, A)$ for $s \in (\{s_{o}\} \cup \mathcal{S}) \times \Omega$ and $A \in \Sigma|_{(\mathcal{S} \cup \mathcal{X}) \times \Omega}$ as
\begin{equation*}
    \kappa_{F}^{\circ n}(s, A) = \int_{(\mathcal{S} \times \Omega)} \kappa_{F}^{\circ (n - 1)}(s, \mathrm{d}s') \kappa_{F}(s', A), 
\end{equation*}
and analogously for $\kappa_{B}$.
Under these conditions, $\nu(\mathrm{d}s) \kappa_{F}^{\circ n}(s, \mathrm{d}s') = \nu(\mathrm{d}s') \kappa_{B}^{\circ n}(s', \mathrm{d}s)$ on $\Sigma_{(\{s_{o}\} \cup \mathcal{S}) \times \Omega} \times \Sigma_{(\mathcal{S} \cup \mathcal{X}) \times \Omega}$. 
Clearly, $\kappa_{F}^{n}(s, A) \ge \kappa^{\circ n}(s, A)$ for each $n$, $s$, and $A$, and the reachability condition in Definition~\ref{def:cldag} implies that there is a $k$ for each measurable $E \subseteq \mathcal{S}^{\star}$ such that $\int_{\mathcal{S}_{o}^{\star}} \nu(\mathrm{d}s_{o}^{\star}) \kappa_{F}^{\circ k}(s_{o}^{\star}, E) > 0$. 
We will show that $\kappa_{B}^{\circ n}(s, \cdot)$ is both absolutely continuous and non-degenerate with respect to $\nu$. 

\begin{lemma} \label{lemma:abscty}
    For each integer $n \ge 1$ and $s \in \mathcal{S}^{\star}$, $\kappa_{B}^{\circ n}(s, \cdot)$ is $\nu$-nondegenerate and absolutely continuous with respect to $\nu$. \looseness=-1 
\end{lemma}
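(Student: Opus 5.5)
We argue by induction on $n$, using the recursive definition $\kappa_{B}^{\circ n}(s, A) = \int \kappa_{B}^{\circ (n-1)}(s, \mathrm{d}s') \kappa_{B}(s', A)$, where the integration variable $s'$ ranges over the intermediate (non-initial, non-terminal) lifted states. The base case $n = 1$ is exactly the content of the absolute continuity and $\nu$-nondegeneracy conditions in Definition~\ref{def:cldag}. So all the work is in the inductive step, and we treat its two properties separately.

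\textbf{Absolute continuity.} Let $E \in \Sigma$ with $\nu(E) = 0$. By the base case, $\kappa_{B}(s', E) = 0$ for every $s' \in \mathcal{S}^{\star}$. Integrating this identically zero function against $\kappa_{B}^{\circ (n-1)}(s, \mathrm{d}s')$ gives $\kappa_{B}^{\circ n}(s, E) = 0$. This step needs only the base case, not the inductive hypothesis, and it uses measurability of $s' \mapsto \kappa_{B}(s', E)$, which holds because $\kappa_B$ is a kernel (and is reinforced by the continuity condition).

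\textbf{Nondegeneracy.} Suppose $\kappa_{B}^{\circ n}(s, E) > 0$, and let $\tilde{E} \subseteq E$ be measurable with $\nu(\tilde{E}) > 0$.
\begin{itemize}
\item Since $\kappa_{B}^{\circ n}(s, E) = \int \kappa_{B}^{\circ(n-1)}(s, \mathrm{d}s') \kappa_{B}(s', E) > 0$, the set $G = \{s' \colon \kappa_{B}(s', E) > 0\}$ has positive $\kappa_{B}^{\circ(n-1)}(s, \cdot)$-measure.
\item For each $s' \in G$, the one-step nondegeneracy condition gives $\kappa_{B}(s', \tilde{E}) > 0$.
\item Hence $\kappa_{B}^{\circ n}(s, \tilde{E}) \ge \int_{G} \kappa_{B}^{\circ(n-1)}(s, \mathrm{d}s') \kappa_{B}(s', \tilde{E}) > 0$, because a strictly positive function integrated over a set of positive measure gives a positive integral.
\end{itemize}
Again only the base-case nondegeneracy is needed. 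The induction on $n$ serves only to make sure $\kappa_{B}^{\circ (n-1)}(s,\cdot)$ is a well-defined (sub-)probability measure to integrate against.

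\textbf{Main obstacle.} The delicate point is the domain restriction in $\kappa^{\circ n}$. The inner integral runs only over intermediate lifted states, so we must check that $G$ is taken inside that integration domain and that $\nu$-nondegeneracy applies to those $s'$. The definition states nondegeneracy for every $s \in \mathcal{S}^{\star}$ (not merely $\nu$-a.e.), so no exceptional null set arises. We will also record that $\tilde{E}$ and $E$ should be restricted to the relevant sub-$\sigma$-algebra $\Sigma|_{(\{s_o\} \cup \mathcal{S}) \times \Omega}$, on which $\kappa_B^{\circ n}$ is defined.
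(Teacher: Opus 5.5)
Your proof is correct, but it splits the $n$-fold composition the opposite way from the paper. The paper writes $\kappa_{B}^{\circ (n+1)}(s, A) = \int_{\mathcal{S} \times \Omega} \kappa_{B}(s, \mathrm{d}s') \kappa_{B}^{\circ n}(s', A)$, i.e., first one step and then $n$ steps. It then applies the inductive hypothesis to $\kappa_{B}^{\circ n}(s', \cdot)$ at every intermediate point $s'$:
\begin{itemize}
\item if $\nu(A)=0$, the integrand vanishes identically;
\item if $\kappa_{B}^{\circ(n+1)}(s,E)>0$, there is a set $D$ of positive $\kappa_{B}(s,\cdot)$-measure on which $\kappa_{B}^{\circ n}(s',E)>0$, and hence $\kappa_{B}^{\circ n}(s',\tilde{E})>0$ on $D$.
\end{itemize}

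You instead peel off the last step. You integrate the one-step quantities $\kappa_{B}(s', E)$ and $\kappa_{B}(s',\tilde{E})$ against $\kappa_{B}^{\circ(n-1)}(s,\cdot)$. So only the base-case assumptions of Definition~\ref{def:cldag} are ever used, and the induction reduces to well-definedness and measurability of the composed kernels.

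Your ordering has a small advantage: it matches the recursion by which $\kappa^{\circ n}$ is actually defined in the paper. The paper's proof, like its later use in Lemma~\ref{lemma:abbs}, tacitly relies on the two orderings agreeing. They do agree, by Tonelli, since every intermediate integration is over $\mathcal{S}\times\Omega$.

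Beyond that, the two arguments are mirror images and rest on the same two facts. A function vanishing identically integrates to zero. A strictly positive measurable function integrated over a set of positive measure gives a positive integral.
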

\begin{proof}
    We proceed by induction. 
    Define $\kappa_{B}^{\circ 1}(s, \cdot) \coloneq \kappa_{B}(s, \cdot)$.
    By definition, $\kappa_{B}(s, \cdot)$ is both absolutely continuous and non-degenerate with respect to $\nu$. 
    Assume that $\kappa^{\circ n}(s, \cdot)$ also satisfies these properties for a $n \ge 1$. 
    Then, for each measurable $A$, \looseness=-1 
    \begin{equation*}
        \kappa_{B}^{\circ n + 1}(s, A) \coloneqq \int_{\mathcal{S} \times \Omega} \kappa_{B}(s, \mathrm{d}s') \kappa_{B}^{\circ n}(s', A). 
    \end{equation*}
    For absolute continuity, notice that $\nu(A) = 0$ implies $\kappa_{B}^{\circ n}(s, A) = 0$ by our induction hypothesis. 
    Hence, $\kappa_{B}^{\circ n + 1}(s, A) = 0$. 
    That is, $\kappa_{B}^{\circ n + 1}(s, \cdot)$ is absolutely continuous with respect to $\nu$. 
    For nondegeneracy, let $E \subseteq \mathcal{S}^{\star}$ measurable be such that $\kappa_{B}^{\circ n + 1}(s, E) > 0$. 
    Then, there is a set $D$ with positive $\kappa_{B}(s, \cdot)$-measure such that $\kappa_{B}^{\circ n}(s', E) > 0$ for every $s' \in D$. 
    By induction hypothesis, if $\tilde{E} \subseteq E$ is a measurable subset of $E$ such that $\nu(\tilde{E}) > 0$, then $\kappa_{B}^{\circ n}(s', \tilde{E}) > 0$ for each $s' \in D$.
    Consequently, $\kappa_{B}^{\circ n + 1}(s, \tilde{E}) > 0$. 
    Inductively, we have shown that $\kappa_{B}^{\circ n + 1}(s, \cdot)$ is $\nu$-nondegenerate.    
\end{proof}

The next lemma shows that, when starting at $\mathcal{X} \times \Omega$, a Markov chain following $\kappa_{B}$ reaches $\{s_{o}\} \times \Omega$ in at most $N$ steps. 
The intuition is that, were this not the case, the chain would arrive at some set $B \in \Sigma|_{\mathcal{S} \times \Omega}$ in $N$ steps with positive probability. 
As each set $B$ is reachable from $\{s_{o}\} \times \Omega$ with positive probability, we would by duality be able to construct a $\kappa_{F}$-based Markov chain starting at $\{s_{o}\} \times \Omega$ that would, with positive probability, remain outside $\mathcal{X} \times \Omega$ after $N$ steps. 
This contradicts the above lemma. 
As a consequence, $\kappa_{B}$ is also a finitely absorbing kernel with horizon $N$.  

\begin{lemma} \label{lemma:abbs} 
    $\kappa_{B}^{N}(x^{\star}, \{s_{o}\} \times \Omega) = 1$ for each $x^{\star} \in \{x\} \times \Omega$ and $x \in \mathcal{X}$. 
\end{lemma}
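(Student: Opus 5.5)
We argue by contradiction, following the intuition given right before the statement. Fix $x \in \mathcal{X}$ and $x^{\star} \in \{x\}\times\Omega$, and suppose $\kappa_{B}^{N}(x^{\star}, \mathcal{S}_{o}^{\star}) < 1$, where $\mathcal{S}_{o}^{\star} = \{s_{o}\}\times\Omega$. By backward absorption, $\kappa_{B}$ never enters $\mathcal{X}\times\Omega$ from $\mathcal{S}^{\star}$ and keeps $\mathcal{S}_{o}^{\star}$ absorbing. Hence the $N$-step backward chain puts positive mass on the set $B = \mathcal{S}\times\Omega$ of intermediate lifted states. Concretely, $\kappa_{B}^{\circ N}(x^{\star}, B) > 0$, where the chain stays inside $\mathcal{S}\times\Omega$ for all $N$ steps.

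We first transfer this to a statement of positive $\nu$-measure. Continuity of $s \mapsto \kappa_{B}(s,\cdot)$ and its iterates extends the positivity from the single point $x^{\star}$ to an open neighbourhood $U \subseteq \{x\}\times\Omega$ of $x^{\star}$. Reachability gives $\nu(U) > 0$, since $\nu$ is positive on open sets, or alternatively since $U$ is reachable and $\kappa_{F}$ is absolutely continuous w.r.t. $\nu$. Integrating, we get
\[
\int_{U}\nu(\mathrm{d}s_{N+1})\,\kappa_{B}^{\circ N}(s_{N+1}, B) > 0 .
\]
Lemma~\ref{lemma:abscty} ensures that the intermediate measures are absolutely continuous and nondegenerate w.r.t. $\nu$. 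This lets us select a measurable set $E \subseteq \mathcal{S}\times\Omega$ with $\nu(E) > 0$ that is hit at step $N$ backward with positive probability.

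Next we dualize and apply reachability. By Lemma~\ref{lemma:trajs}, the trajectory measure $\nu(\mathrm{d}s_{N+1})\prod\kappa_{B}$ restricted to paths $U \leftarrow \cdots \leftarrow E$ equals $\nu(\mathrm{d}s_{1})\prod\kappa_{F}$ on the reversed paths. So a $\nu$-positive set of states $s_{1} \in E \subseteq \mathcal{S}\times\Omega$ reaches $U \subseteq \mathcal{X}^{\star}$ in exactly $N$ forward steps while staying in $\mathcal{S}\times\Omega$ for the first $N-1$ of them, i.e. it has not been absorbed before step $N$. Reachability applied to $E$ gives some $k \ge 1$ with $\int_{\mathcal{S}_{o}^{\star}}\nu(\mathrm{d}s_{o}^{\star})\,\kappa_{F}^{\circ k}(s_{o}^{\star}, E) > 0$. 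Concatenating, a forward chain started in $\mathcal{S}_{o}^{\star}$ with positive probability is still outside $\mathcal{X}\times\Omega$ at time $k + N - 1 \ge N$. This contradicts Lemma~\ref{lemma:abs} together with forward absorption. Note that once in $\mathcal{X}^{\star}$ the chain stays there, by the forward absorption clause $\kappa_{F}(x^{\star},\{x\}\times\Omega)=1$. So $\kappa_{F}^{N}(s_{o}^{\star},\mathcal{X}^{\star}) = 1$ forbids being in $\mathcal{S}\times\Omega$ at any time $\ge N$.

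\textbf{Main obstacle.} The delicate step is the passage from a pointwise statement at $x^{\star}$ to a $\nu$-positive set on which the dualized identity of Lemma~\ref{lemma:trajs} can be used. That identity only holds as an equality of measures, i.e. $\nu$-almost everywhere. I would handle it using continuity of the kernels, together with positivity of $\nu$ on the open set $\{x\}\times\Omega$ (the terminal states are open singletons in $\bar{\mathcal{S}}$). Should continuity be insufficient, the fallback is to prove the claim $\nu$-a.e. and then upgrade it to every $x^{\star}$ by continuity of $s \mapsto \kappa_{B}^{N}(s, \mathcal{S}_{o}^{\star})$. The upgrade works because a continuous function equal to $1$ on a $\nu$-full subset of an open set on which $\nu$ is positive must be identically $1$.
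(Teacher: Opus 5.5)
Your skeleton matches the paper's. You assume $\kappa_{B}^{N}(x^{\star},\mathcal{S}_{o}^{\star})<1$, extract backward paths that stay in $\mathcal{S}\times\Omega$ for $N$ steps, dualize with Lemma~\ref{lemma:trajs}, prepend a forward path from $\mathcal{S}_{o}^{\star}$ supplied by reachability, and contradict Lemma~\ref{lemma:abs} at time $k+N-1\ge N$. Your use of the absorption clauses to get $\kappa_{B}^{\circ N}(x^{\star},\mathcal{S}\times\Omega)>0$ is, if anything, tidier than the paper's. The concatenation, however, does not follow as written. After dualizing you only know that the $\nu$-positive subset $E'=\{s_{1}\in E\colon \kappa_{F}^{\circ N}(s_{1},U)>0\}$ reaches $U$ through non-terminal states. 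Reachability applied to $E$ only says that the time-$k$ forward law $\mu_{k}(\cdot)=\int_{\mathcal{S}_{o}^{\star}}\nu(\mathrm{d}s_{o}^{\star})\,\kappa_{F}^{\circ k}(s_{o}^{\star},\cdot)$ charges $E$. Nothing prevents $\mu_{k}(E')=0$, and then the concatenated mass $\int_{E}\mu_{k}(\mathrm{d}s_{1})\,\kappa_{F}^{\circ N}(s_{1},U)$ is zero and there is no contradiction. This is exactly where the paper uses $\nu$-nondegeneracy. It takes $\tilde{E}\subseteq E$ on which $\kappa_{B}^{\circ k}(\cdot,\mathcal{S}_{o}^{\star})>0$, which is $\nu$-positive by duality. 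It then uses nondegeneracy of $\kappa_{B}^{\circ(N-1)}(s,\cdot)$ (Lemma~\ref{lemma:abscty}) to guarantee that the backward chain actually passes through $\tilde{E}$. You invoke nondegeneracy only when choosing $E$, where absolute continuity alone suffices. The simplest repair is to apply reachability to $E'$ itself, which is measurable with $\nu(E')>0$, rather than to $E$.

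The other difference is where you localize. You spread positivity from $x^{\star}$ to an open $U\subseteq\mathcal{X}^{\star}$ and dualize from the terminal end, so you need $\nu(U)>0$, i.e.\ that $\nu$ charges nonempty open subsets of $\mathcal{X}^{\star}$. That is one possible reading of ``positive probability measure'', but Definition~\ref{def:cldag} does not clearly grant it, and it makes only $\mathcal{X}$, not each $\{x\}$, open. Your alternative justification via reachability is circular under the natural reading of that axiom. It cannot hold literally for every measurable set (take $\emptyset$), and the paper itself applies it only to $\nu$-positive sets. The paper gets its $\nu$-positive set without any such assumption by peeling off the first backward step. Positivity of $\kappa_{B}^{\circ N}(x^{\star},\cdot)=\int_{\mathcal{S}\times\Omega}\kappa_{B}(x^{\star},\mathrm{d}s)\,\kappa_{B}^{\circ(N-1)}(s,\cdot)$ yields $D\subseteq\mathcal{S}\times\Omega$ with $\kappa_{B}(x^{\star},D)>0$, on which the remaining $N-1$ backward steps still avoid $\mathcal{S}_{o}^{\star}$ with positive probability. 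Then $\nu(D)>0$ follows from absolute continuity of $\kappa_{B}(x^{\star},\cdot)$, which is assumed at every point. Dualizing from $D$ instead of from a neighbourhood of $x^{\star}$ removes your ``main obstacle'' and the continuity-based fallback. Combined with the repair above, the argument then needs only absolute continuity, duality, reachability on $\nu$-positive sets, and Lemma~\ref{lemma:abs}.
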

\begin{proof}
    Assume for contradiction that $\kappa_{B}^{N}(x^{\star}, \mathcal{S}_{o}^{\star}) < 1$ for some $x^{\star}$. 
    We will show that there is a positive-measure at-least-$N$-sized trajectory from $\mathcal{S}_{o}^{\star}$ to a subset of $\mathcal{S}^{\star} \setminus \mathcal{X}^{\star}$.   
    By Lemma~\ref{lemma:abs}, however, such a trajectory cannot exist.
    To start with, Lemma~\ref{lemma:trajs} ensures that $\kappa_{B}^{\circ N}(x^{\star}, \mathcal{S}_{o}^{\star}) \le \kappa_{B}^{N}(x^{\star}, \mathcal{S}_{o}^{\star}) < 1$.
    As $\kappa_{B}^{\circ N}(x^{\star}, \mathcal{S}_{o}^{\star}) < 1$, $\kappa_{B}^{\circ N}(x^{\star}, (\mathcal{S}^{\star} \setminus \mathcal{S}_{o}^{\star})) > 0$. 
    By definition,
    \begin{equation*}
        \kappa_{B}^{\circ N}(x^{\star}, (\mathcal{S}^{\star} \setminus \mathcal{S}_{o}^{\star})) \coloneqq \int_{\mathcal{S} \times \Omega} \kappa_{B}(x^{\star}, \mathrm{d}s) \kappa_{B}^{\circ N - 1}(s, \mathcal{S}^{\star} \setminus \mathcal{S}_{o}^{\star}) = \int_{\mathcal{S} \times \Omega} \kappa_{B}(x^{\star}, \mathrm{d}s) \kappa_{B}^{\circ N - 1}(s, \mathcal{S}^{\star} \setminus \mathcal{S}_{o}^{\star}) > 0. 
    \end{equation*}
    As $s \mapsto \kappa_{B}^{\circ N - 1}(s, \mathcal{S}^{\star} \setminus \mathcal{S}_{o}^{\star}))$ is a non-negative and continuous function, the equation above implies that there is an open set with positive $\kappa_{B}(x^{\star}, \cdot)$-measure $D$ for which $\kappa_{B}^{\circ N - 1}(s, \mathcal{S}^{\star} \setminus \mathcal{S}_{o}^{\star}) > 0$ for all $s \in D$. 
    By the (contraposition of) absolute continuity, $D$ has also $\nu$-positive measure. 
    Also, $D \subseteq \mathcal{S} \times \Omega$ due to the integration domain. 
    Fix $s \in D$. 
    Then, there is an measurable set $E \subseteq \mathcal{S}$ for which $\kappa_{B}^{\circ N - 1}(s, E) > 0$. 
    By continuity, there is a neighborhood $\tilde{D} \subseteq D$ of $s$ for which $\kappa_{B}^{\circ N - 1}(s, E) > 0$ for all $s \in \tilde{D}$. 
    By reachability, there is a $k$ for which $\int_{\mathcal{S}_{o}^{\star}} \nu(\mathrm{d}s_{o}^{\star})\kappa_{F}^{\circ k}(s_{o}^{\star}, E) > 0$. 
    By Lemma~\ref{lemma:trajs}, since $E \subseteq (\mathcal{S} \cup \mathcal{X}) \times \Omega$, \looseness=-1 
    \begin{equation*}
        \int_{\mathcal{S}_{o}^{\star}} \nu(\mathrm{d}s_{o}^{\star})\kappa_{F}^{\circ k}(s_{o}^{\star}, E) = \int_{E} \nu(\mathrm{d}s) \kappa_{B}^{\circ k}(s, \mathcal{S}_{o}^{\star}) > 0.  
    \end{equation*}
    Again, by the non-negativity and continuity of $s \mapsto \kappa_{B}^{\circ k}(s, \mathcal{S}_{o}^{\star})$, there is a set $\tilde{E} \subseteq E$ for which $\kappa_{B}^{\circ k}(s, \mathcal{S}_{o}^{\star}) > 0$ for all $s \in \tilde{E}$, and $\tilde{E}$ has positive $\nu$-measure. 
    In other words, a Markov chain following $\kappa_{B}$ moves from $\tilde{D}$ to $\tilde{E}$ in $N - 1$ steps, and from $\tilde{E}$ to $\mathcal{S}_{o}^{\star}$ in $k \ge 1$ steps (recall that $\kappa_{B}^{\circ N - 1}(s, \tilde{E}) > 0$ for all $s \in \tilde{D}$ since $\tilde{E}$ is a measurable subset of $E$ with positive $\nu$-measure---by $\nu$-nondegeneracy; see Lemma~\ref{lemma:abscty}). 
    By duality and Lemma~\ref{lemma:trajs}, 
    \begin{equation*}
        \begin{aligned} 
            0 = \int_{\mathcal{S}_{o}^{\star}} \nu(\mathrm{d}s_{o}^{\star}) \kappa_{F}^{N + k - 1}(s_{o}^{\star}, \tilde{D}) &\ge  \int_{\mathcal{S}_{o}^{\star}} \nu(\mathrm{d}s_{o}^{\star}) \kappa_{F}^{\circ N + k - 1}(s_{o}^{\star}, \tilde{D}) \\ &= \int_{\tilde{D}} \nu(\mathrm{d}s^{\star}) \kappa_{B}^{\circ N + k - 1}(s^{\star}, \mathcal{S}_{o}^{\star}) \\
            &= \int_{\tilde{D}} \nu(\mathrm{d}s^{\star}) \int_{\mathcal{S}} \kappa_{B}^{\circ N - 1}(s^{\star}, \mathrm{d}s^{\star'}) \kappa_{B}^{\circ k}(s^{\star'}, \mathcal{S}_{o}^{\star}) \\ 
            &\ge \int_{\tilde{D}} \nu(\mathrm{d}s^{\star}) \int_{\tilde{E}} \kappa_{B}^{\circ N - 1}(s^{\star}, \mathrm{d}s^{\star'})\kappa_{B}^{\circ k}(s^{\star'}, \mathcal{S}_{o}^{\star}) > 0,  
        \end{aligned} 
    \end{equation*}
    since $\kappa_{B}^{\circ k}(s, \mathcal{S}_{o}^{\star})$ is a strictly positive function on $\tilde{E}$ and both $\tilde{D}$ and $\tilde{E}$ are positive sets wrt their corresponding 
    measures. 
    The initial equality follows from Lemma~\ref{lemma:abs} since $\tilde{D} \in \mathcal{S}^{\star} \setminus \mathcal{X}^{\star}$. 
    This contradiction ensures that $\kappa_{B}^{N}(x^{\star}, \mathcal{S}_{o}^{\star}) = 1$ for all $x^{\star}$. \looseness=-1 
\end{proof}

Similarly to Lemma~\ref{lemma:abs}, $\kappa_{B}^{N + k}(x^{\star}, \mathcal{S}_{o}^{\star}) = 1$ for each $k \ge 0$. 
This is the content of the next lemma. 

\begin{lemma}
    Let $\mathcal{S}_{o}^{\star} = \{s_{o}\} \times \Omega$. Then, $\kappa_{B}^{n}(x^{\star}, \mathcal{S}_{o}^{\star}) = 1$ for all $x^{\star} \in \mathcal{X}^{\star}$ and $n \ge N$. 
\end{lemma}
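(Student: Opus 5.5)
Since the statement mirrors Lemma~\ref{lemma:abs}, I will prove it by induction on $n \ge N$, with Lemma~\ref{lemma:abbs} supplying the base case: $\kappa_{B}^{N}(x^{\star}, \mathcal{S}_{o}^{\star}) = 1$ for every $x^{\star} \in \mathcal{X}^{\star}$.

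\textbf{Inductive step.} Suppose $\kappa_{B}^{n}(x^{\star}, \mathcal{S}_{o}^{\star}) = 1$ for all $x^{\star}$ and some $n \ge N$. Unlike Lemma~\ref{lemma:abs}, I will not peel off the first step. Doing so would require the hypothesis at intermediate states $s \in \mathcal{S} \times \Omega$, which we do not have. Instead I will expand at the last step, using the Chapman--Kolmogorov form
\begin{equation*}
    \kappa_{B}^{n + 1}(x^{\star}, \mathcal{S}_{o}^{\star}) = \int_{\mathcal{S}^{\star}} \kappa_{B}^{n}(x^{\star}, \mathrm{d}s) \, \kappa_{B}(s, \mathcal{S}_{o}^{\star}).
\end{equation*}
This identity follows from associativity of kernel composition applied to the recursive definition $\kappa^{n} = \kappa \circ \kappa^{n-1}$, and I would state it briefly.

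By the induction hypothesis, $\kappa_{B}^{n}(x^{\star}, \cdot)$ is concentrated on $\mathcal{S}_{o}^{\star}$. The integral therefore reduces to $\int_{\mathcal{S}_{o}^{\star}} \kappa_{B}^{n}(x^{\star}, \mathrm{d}s)\, \kappa_{B}(s, \mathcal{S}_{o}^{\star})$. The backward absorption property of Definition~\ref{def:cldag} says that $\kappa_{B}(s, \mathcal{S}_{o}^{\star}) = 1$ for $\nu$-almost every $s \in \mathcal{S}_{o}^{\star}$. If that holds, the integral equals $\kappa_{B}^{n}(x^{\star}, \mathcal{S}_{o}^{\star}) = 1$, which closes the induction.

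\textbf{Main obstacle.} The delicate point is that absorption holds only $\nu$-almost surely, while the integrating measure is $\kappa_{B}^{n}(x^{\star}, \cdot)$. I need this measure to be absolutely continuous with respect to $\nu$, so that the $\nu$-null exceptional set is also null for it.

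\begin{itemize}
    \item For the restricted compositions $\kappa_{B}^{\circ n}$, this is Lemma~\ref{lemma:abscty}.
    \item For the full iterates $\kappa_{B}^{n}$, I would repeat the same induction. The absolute continuity assumption in Definition~\ref{def:cldag} gives the base case. For the step, if $\nu(A) = 0$ then $\kappa_{B}(s', A) = 0$ for every $s'$, and hence $\kappa_{B}^{n}(s, A) = \int \kappa_{B}^{n-1}(s, \mathrm{d}s')\, \kappa_{B}(s', A) = 0$.
\end{itemize}

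With this in hand, the exceptional set is $\kappa_{B}^{n}(x^{\star}, \cdot)$-null and the argument above is complete.
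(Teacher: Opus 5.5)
Your proposal is correct and follows the paper's proof essentially verbatim. Both argue by induction from the base case of Lemma~\ref{lemma:abbs}, expand $\kappa_{B}^{n+1}$ at the last step, restrict the integral to $\mathcal{S}_{o}^{\star}$, and transfer the $\nu$-a.s.\ backward absorption through absolute continuity. Your explicit check that $\kappa_{B}^{n}(x^{\star}, \cdot) \ll \nu$ is the right justification, and it repairs a slip in the paper, which invokes only $\kappa_{B}(x^{\star}, \cdot) \ll \nu$ and then ends with $\int_{\mathcal{S}_{o}^{\star}} \kappa_{B}(x^{\star}, \mathrm{d}s) = \kappa_{B}(x^{\star}, \mathcal{S}_{o}^{\star})$ instead of $\kappa_{B}^{n}(x^{\star}, \mathcal{S}_{o}^{\star})$.
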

\begin{proof}
    We proceed as in Lemma~\ref{lemma:abs}---by induction. By Lemma~\ref{lemma:abbs}, $\kappa_{B}^{N}(x^{\star}, \mathcal{S}_{o}^{\star}) = 1$. Assume that $\kappa_{B}^{n}(x^{\star}, \mathcal{S}_{o}^{\star}) = 1$ for some $n \ge N$. 
    Then, 
    \begin{equation*}
        \kappa_{B}^{n + 1}(x^{\star}, \mathcal{S}_{o}^{\star}) = \int_{\mathcal{S}^{\star}} \kappa_{B}^{n}(x^{\star}, \mathrm{d}s) \kappa_{B}(s, \mathcal{S}_{o}^{\star}) = \int_{\mathcal{S}_{o}^{\star}} \kappa_{B}^{n}(x^{\star}, \mathrm{d}s) \kappa_{B}(s, \mathcal{S}_{o}^{\star}).  
    \end{equation*}
    As $\mathcal{S}_{o}^{\star}$ is an absorbing set for $\kappa_{B}$, $\kappa_{B}(s, \mathcal{S}_{o}^{\star}) = 1$ for each $s \in \mathcal{S}_{o}^{\star}$ $\nu$-almost surely. 
    Due to absolute continuity, $\kappa_{B}(s, \mathcal{S}_{o}^{\star}) = 1$ $\kappa_{B}(x^{\star}, \cdot)$-almost surely for $s \in \mathcal{S}_{o}^{\star}$.  
    Consequently, 
    \begin{equation*}
        \kappa_{B}^{n + 1}(x^{\star}, \mathcal{S}_{o}^{\star}) = \int_{\mathcal{S}_{o}^{\star}} \kappa_{B}^{n}(x^{\star}, \mathrm{d}s) \kappa_{B}(s, \mathcal{S}_{o}^{\star}) = \int_{\mathcal{S}_{o}^{\star}} \kappa_{B}(x^{\star}, \mathrm{d}s) = \kappa_{B}(x^{\star}, \mathcal{S}_{o}^{\star}) = 1.  
    \end{equation*}
    Inductively, $\kappa_{B}^{n}(x^{\star}, \mathcal{S}_{o}^{\star}) = 1$ for all $n \ge N$. 
\end{proof}

Notably, Definition~\ref{def:cldag} and the ensuing lemmas provide a complete characterization of a lifted pointed DAG with a continuous latent state. 
Curiously, our assumptions in Definition~\ref{def:cldag} differ slightly from that of \citet[Definition 1]{lahlou2023theory}, for instance, we assume that $\{s_{o}\}$ is an absorbing set for $\kappa_{B}$, while \citet{lahlou2023theory} establishes that $\kappa_{B}(s_{o}, \cdot)$ is the trivial (null) measure. 
Below, we explain how an amortized sampler over the lifted space in Definition~\ref{def:cldag} can be learned.
Aftewards, we elaborate on the instantiation of our proposed framework for the \textsc{Lines} environment described in \Cref{sec:limitations_markovian}. \looseness=-1 


\paragraph{Learning an amortized sampler.} 
In pratice, we learn kernels $P_{F}(s, \cdot)$ and $P_{B}(s, \cdot)$ absolutely continuous with respect to $\kappa_{F}(s, \cdot)$ and $\kappa_{B}(s, \cdot)$, respectively, and a measure $\xi$ on $\mathcal{S}_{o}^{\star}$ absolutely continuous with respect to $\nu$.
Given a target measure $\Pi$ over $\mathcal{X}^{\star}$ absolutely continuous with respect to $\nu$, our objective is to search for a triple of $\xi$, $P_{F}$, and $P_{B}$ satisfying
\begin{equation*}
    \xi(\mathrm{d}s_{o}^{\star}) P_{F}^{N}(s_{o}^{\star}, \mathrm{d}x) = \Pi(\mathrm{d}x) P_{B}^{N}(x, \mathrm{d}s_{o}^{\star}).  
\end{equation*}
In doing so, we verify that, for each measurable $B$, 
\begin{equation*}
    \int_{S_{o}^{\star}} \xi(\mathrm{d}s_{o}^{\star}) P_{F}^{N}(s_{o}^{\star}, B) = \int_{B} \Pi(\mathrm{d}x) P_{B}^{N}(x, \mathcal{S}_{o}^{\star}). 
\end{equation*}
Since $P_{B}(x, \cdot) \ll \kappa_{B}(x, \cdot)$, an argument similar to that of Lemma~\ref{lemma:abscty} shows that $P_{B}^{N}(x, \cdot) \ll \kappa_{B}^{N}(x, \cdot)$. 
By Lemma~\ref{lemma:abbs}, $\kappa_{B}^{N}(x, \mathcal{S}_{o}^{\star}) = 1$; hence, $\kappa_{B}^{N}(x, (\mathcal{S}_{o}^{\star})^{c}) = 0$
and $P_{B}^{N}(x, (\mathcal{S}_{o})^{c}) = 0$. That is, $P_{B}^{N}(x, (\mathcal{S}_{o}^{\star})^{c}) = 1$ since $P_{B}^{N}(x, \cdot)$ is a probability measure. Thus, 
\begin{equation*}
    \int_{S_{o}^{\star}} \xi(\mathrm{d}s_{o}^{\star}) P_{F}^{N}(s_{o}^{\star}, B) = \int_{B} \Pi(\mathrm{d}x) P_{B}^{N}(x, \mathcal{S}_{o}^{\star}) = \Pi(B). 
\end{equation*}
From an implementation perspective, we represent $P_{F}$, $P_{B}$, $\xi$, and $\Pi$ with their densities $p_{F}$, $p_{B}$, $\pi$, and $\xi$ relatively to $\kappa_{F}$, $\kappa_{B}$, $\nu$, and $\nu$, respectively. 
When $\pi$ is known only up to a normalizing constant, $\pi(x) = Z \tilde{\pi}(x)$, we also estimate $Z$. 
By parameterizing $p_{F}$, $P_{B}$, $\xi$, and $Z$ with neural networks, learning is based on the minimization of the expected trajectory balance loss, 
\begin{equation*}
    \mathcal{L}(s_{o}, s_{1}, \dots, s_{N}) = \left( \log \xi(s_{o}) + \log \prod_{0 \le i \le N - 1} p_{F}(s_{i}, s_{i + 1}) - \log Z - \log \tilde{\pi}(s_{N}) - \log \prod_{0 \le i \le N - 1} p_{B}(s_{i + 1}, s_{i}) \right)^{2},
\end{equation*}
with respect to a \emph{exploratory policy} $P_{E}$ (absolutely continuous with respect to $\kappa_{F}$) via stochastic gradient descent. By Lemma~\ref{lemma:abs}, $s_{N} \in \mathcal{X}^{\star}$ almost surely due to $P_{E} \ll \kappa_{F}$. 

In conclusion, we also present a simple instantiation of Definition~\ref{def:cldag} for concreteness. 

\begin{example}[Stochastic latent dynamical system for the \textsc{Lines} environment]
    Illustratively, consider $\Omega = \mathbb{R}^{d}$, $\mathcal{S} = \mathcal{P} \setminus \{p_{o}\}$, $\mathcal{X} = \mathcal{Q}$, and $s_{o} = p_{o}$ in \Cref{fig:lines} with step size $M = 1$. 
    Under these conditions, $\mathcal{S}$ is a topological measurable space equipped with the discrete topology, while $\Omega$ is a Euclidean space endowed with the Lebesgue measure. 
    Also, $\Sigma$ is the $\sigma$-algebra induced by the product of the $\sigma$-algebras of these measurable spaces, which is uniquely defined since both $\mathcal{S}$ and $\Omega$ are $\sigma$-finite. 
    Naturally, for $s^{\star} = (p_{i}, \omega) \in \mathcal{S}^{\star}$, $\kappa_{F}(s^{\star}, \cdot)$ is supported on $\{q_{i}, p_{i + 1}\} \times \mathbb{R}^{d}$, and $\kappa_{B}(s^{\star}, \cdot)$ is supported on $\{p_{i}\} \times \mathbb{R}^{d}$. 
    In pratice, we could parameterize $p_{F}$ at $s_{i}^{\star} = (s_{i}, \omega_{i})$ and $s_{i + 1}^{\star} = (s_{i + 1}, \omega_{i + 1})$ as 
    \begin{equation*}
        \log p_{F}(s_{i}^{\star}, s_{i + 1}^{\star}) = \log p_{F, \omega}((s_{i + 1}, \omega_{i}), \omega_{i + 1}) + \log p_{F,s}((s_{i}, \omega_{i}), s_{i + 1}), 
    \end{equation*}
    in which $p_{F, \omega}((s_{i + 1}, \omega_{i}), \cdot) = \mathcal{N}(\phi(s_{i + 1}, \omega_{i}), \sigma^{2})$ is parameterized as a Gaussian distribution and $p_{F, s}((s_{i}, \omega_{i}), s_{i + 1}) = \text{Categorical}(\psi(s_{i}, \omega_{i}))$, with $\phi$ and $\psi$ as neural networks. Similarly, $\xi(s_{o})$ could be parameterized as a (mixture of) Gaussian(s), similarly to \citet{lahlou2023theory}. On top of that, $\pi(x, \omega) = Z r(x) \cdot \mathcal{N}(\omega | \mu, \Lambda)$, with $r$ as an unnormalized target distribution. 
    Intuitively, this Gaussian-extended distribution would smooth up the landscape of the target and potentially facilitate the learning of an accurate sampler. 
    That said, we leave a comprehensive investigation of this effect to future endeavors. \looseness=-1 
\end{example}


\section{Distributed and Streaming Inference in Path-dependent Models} \label{sec:distributed}

The composable nature of amortized discrete samplers \cite{garipov2023compositional} allows for the implementation of effective divide-and-conquer algorithms for streaming and distributed learning \cite{dasilva2024embarrassinglyparallelgflownets}. 
In this section, we expand on the analysis in \Cref{sec:recurrenta} to show that the existing techniques for non-centralized learning of amortized Markovian samplers can be seamlessly adapted to our path-dependent setting.  
Towards this objective, we recall the core assumption of such methods. \looseness=-1 

\begin{assumption}[Factorizable target distribution] \label{assumption:targets} 
    Let $\{R_{t}\}_{t\ge1}$ be a sequence of functions $R_{t} \colon \mathcal{X} \rightarrow \mathbb{R}_{+}$. 
    We assume that the target distribution, $R \colon \mathcal{X} \rightarrow \mathbb{R}_{+}$, can always be represented as $R(x) \coloneqq \prod_{t=1}^{T} R_{t}(x)$ for some $T \in \mathbb{N}$. 
\end{assumption}

Assumption~\ref{assumption:targets} simply states that the target distribution can be factorized into the product of independent functions. 
In the context of Bayesian inference, $\mathcal{X}$ might be thought of as the parameter space and $\{R_{t}\}_{t\ge1}$ as the sequence of \emph{sub-posteriors} induced by a independent stream of data $\{\mathcal{D}_{t}\}_{t\ge1}$, i.e., $R_{1}(x) \coloneqq \ell(\mathcal{D}_{1}|x) p(x)$ and $R_{t}(x) \coloneqq \ell(\mathcal{D}_{t}|x)$ for $t > 1$, in which $\ell$ is a likelihood function and $p$ is a prior distribution.
When each $R_{t}$ is expensive to evaluate (e.g., due to the size of $\mathcal{D}_{t}$), learning a sampler for the full posterior $R(x) \coloneqq \prod_{t=1}^{T} R_{t}(x)$ can be infeasible. 
Under these conditions, prior work has demonstrated that parallely learning small samplers for each $R_{t}$ and subsequently combining them in a centralized fashion can be done efficiently. 
Nextly, we demonstrate that this approach can be naturally accommodated into our path-dependent framework. \looseness=-1 

\begin{proposition}[Distributed Amortized Sampling] \label{prop:apx:dist}
    Let $\{R_{t}\}_{1\le t \le T}$ be a sequence of $T$ target distributions. 
    Also, let $(p_{F}^{(t)}, p_{B}^{(t)}, R_{t})_{t=1}^{T}$ be a set of path-dependent amortized samplers abiding by Proposition~\ref{prop:lifted_balance} (with respect to their respective targets).
    Assume that $(p_{F}, p_{B})$ satisfies 
    \begin{equation} \label{eq:apx:dist} 
        \left( \frac{p_{F}(\tau|s_{o}^{\star})}{p_{B}(\tau|x)}\right) \left( \prod_{1 \le t \le T} \frac{p_{F}^{(t)}(\tau'|s_{o}^{\star})}{p_{B}^{(t)}(\tau'|x')} \right) = \left( \prod_{1 \le t \le T} \frac{p_{F}^{(t)}(\tau|s_{o}^{\star})}{p_{B}^{(t)}(\tau|x)} \right)  \left( \frac{p_{F}(\tau'|s_{o}^{\star})}{p_{B}(\tau'|x)}\right) 
    \end{equation}
    for each pair of complete trajectories $(\tau, \tau')$ ending at $(x, x')$ on the lifted DAG (see Definition~\ref{def:cldag}). 
    Then, the marginal distribution of $p_{F}$ over $\mathcal{X}$ matches $\prod_{t=1}^{T} R_{t}$ up to a 
    constant, i.e., $\sum_{\tau \rightsquigarrow x} p_{F}(\tau|s_{o}^{\star}) \propto \prod_{t=1}^{T} R_{t}(x)$ for each $x \in \mathcal{X}$. \looseness=-1  
\end{proposition}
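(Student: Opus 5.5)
The plan is to reduce the statement to the contrastive balance condition of Corollary~\ref{col:cb}, with target $R \coloneqq \prod_{t=1}^{T} R_{t}$. The argument is essentially an algebraic substitution. Each local sampler is correct, so it carries its own trajectory-level balance identity. Plugging these identities into \Cref{eq:apx:dist} eliminates all local policies and leaves only the rewards $R_{t}$.

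First, for each $t$, since $(p_{F}^{(t)}, p_{B}^{(t)}, R_{t})$ abides by Proposition~\ref{prop:lifted_balance}, I take $i=0$ and $j=T$ in \Cref{eq:conditionsa} with $F^{(t)}(x) = R_{t}(x)$. This yields the trajectory balance identity of Corollary~\ref{col:tb}: for every complete trajectory $\tau$ on the lifted DAG ending at $x$,
\begin{equation*}
    \frac{p_{F}^{(t)}(\tau|s_{o}^{\star})}{p_{B}^{(t)}(\tau|x)} = \frac{R_{t}(x)}{Z_{t}}, \qquad Z_{t} \coloneqq F^{(t)}(s_{o}^{\star}).
\end{equation*}
Here it matters that the local samplers live on the same lifted DAG $\mathrm{SG}^{\star}$ (same $\phi$ and $W_{o}$) as $(p_{F}, p_{B})$, so that $\tau$ and $\tau'$ are admissible for every sampler simultaneously. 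I will state this as an implicit standing assumption of the proposition. If the local samplers use different latent dynamics, one would instead need to interpret the trajectories as unlifted and marginalize. I expect this bookkeeping point, rather than any computation, to be the main obstacle.

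Second, taking the product over $t$ gives $\prod_{t} p_{F}^{(t)}(\tau|s_{o}^{\star})/p_{B}^{(t)}(\tau|x) = R(x)/\prod_{t} Z_{t}$, and analogously for $\tau'$ ending at $x'$. Substituting both into \Cref{eq:apx:dist} makes the constant $\prod_{t} Z_{t}$ cancel from the two sides, leaving
\begin{equation*}
    p_{F}(\tau|s_{o}^{\star})\, p_{B}(\tau'|x')\, R(x') = p_{F}(\tau'|s_{o}^{\star})\, p_{B}(\tau|x)\, R(x).
\end{equation*}
This uses the positivity of each $R_{t}$ and of the $p_{B}^{(t)}$ terms on admissible trajectories, which is what allows the division. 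I will also read the $p_{B}(\tau'|x)$ in the displayed hypothesis as the evident $p_{B}(\tau'|x')$.

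Third, the identity above holds for every pair of complete trajectories, which is exactly the hypothesis of Corollary~\ref{col:cb} with reward $R$. Invoking that corollary gives $p_{\intercal}(x) = \sum_{\tau \rightsquigarrow x} p_{F}(\tau|s_{o}^{\star}) \propto R(x) = \prod_{t} R_{t}(x)$, which concludes the proof.
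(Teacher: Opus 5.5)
Your proof is correct. It is the natural unpacking of the paper's one-line justification that the result "follows directly" from Proposition~\ref{prop:lifted_balance}: trajectory balance for each local sampler turns \Cref{eq:apx:dist} into the contrastive balance condition for $\prod_{t} R_{t}$, and Corollary~\ref{col:cb} finishes the argument. The obstacle you anticipate is milder than you suggest: with a deterministic latent update and a fixed initial latent, every unlifted complete trajectory has a unique lift, so samplers with different $\phi^{(t)}$ can still be compared on the same trajectories with no marginalization.
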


\begin{proposition}[Streaming Amortized Sampling] \label{prop:apx:sm}
    Let $\{R_{t}\}_{t\ge1}$ be an unbounded sequence of target distributions. 
    On top of that, let $\{(p_{F}^{(t)}, p_{B}^{(t)}, R_{t})\}_{t \ge 1}$ be a sequence of amortized samplers recursively abiding by 
    \begin{equation} \label{eq:apx:sm} 
        \left( \frac{p_{F}^{(T)}(\tau|s_{o}^{\star})}{p_{B}^{(T)}(\tau|x)}\right) \left( R_{T}(x') \cdot \frac{p_{F}^{(T - 1)}(\tau'|s_{o}^{\star})}{p_{B}^{(T - 1)}(\tau'|x')} \right) = \left( R_{T}(x) \cdot  \frac{p_{F}^{(T - 1)}(\tau|s_{o}^{\star})}{p_{B}^{(T - 1)}(\tau|x)} \right)  \left( \frac{p_{F}^{(T)}(\tau'|s_{o}^{\star})}{p_{B}^{(T)}(\tau'|x)}\right) 
    \end{equation}
    for each $T > 1$ and trajectory pairs $(\tau, \tau')$ ending at $(x, x')$, and assume that $(p_{F}^{(1)}, p_{B}^{(1)}, R_{1})$ satisfies the condition in Proposition~\ref{prop:lifted_balance}.
    Then, the marginal distribution of each $p_{F}^{(T)}$ over $\mathcal{X}$ matches $\prod_{t=1}^{T} R_{t}$ for $T \ge 1$ (see Proposition~\ref{prop:apx:sm}). 
    \looseness=-1 
\end{proposition}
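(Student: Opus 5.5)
We argue by induction on $T$, reducing each step to Corollary~\ref{col:cb} (equivalently, to the trajectory balance condition of Corollary~\ref{col:tb}) on the lifted DAG. For brevity, write $\rho^{(t)}(\tau) \coloneqq p_{F}^{(t)}(\tau|s_{o}^{\star}) / p_{B}^{(t)}(\tau|x)$ for a complete trajectory $\tau$ ending at $x$. The inductive claim is stronger than the marginal statement: there is a constant $Z_{T} > 0$ such that $\rho^{(T)}(\tau) = Z_{T}^{-1} \prod_{t=1}^{T} R_{t}(x)$ for every complete trajectory $\tau$ on $\mathrm{SG}^{\star}$. This is exactly the TB condition \eqref{eq:tb} with target $\prod_{t \le T} R_{t}$ and $F(s_{o}^{\star}) = Z_{T}$.

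\textbf{Base case.} By hypothesis, $(p_{F}^{(1)}, p_{B}^{(1)})$ satisfies the SubTB condition of Proposition~\ref{prop:lifted_balance}. Taking $i = 0$ and $j = T$ in \eqref{eq:conditionsa}, with $F(x) = R_{1}(x)$ (Definition~\ref{def:flows}), gives $F(s_{o}^{\star})\, p_{F}^{(1)}(\tau|s_{o}^{\star}) = R_{1}(x)\, p_{B}^{(1)}(\tau|x)$. Hence $\rho^{(1)}(\tau) = R_{1}(x)/F(s_{o}^{\star})$, and we set $Z_{1} = F(s_{o}^{\star})$.

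\textbf{Inductive step.} Assume the claim for $T-1$ and substitute it into \eqref{eq:apx:sm}. Dividing both sides by the positive quantity $\rho^{(T-1)}(\tau)\rho^{(T-1)}(\tau')$ gives
\begin{equation*}
    \rho^{(T)}(\tau)\, \frac{R_{T}(x')\, Z_{T-1}}{\prod_{t<T} R_{t}(x)} = \rho^{(T)}(\tau')\, \frac{R_{T}(x)\, Z_{T-1}}{\prod_{t<T} R_{t}(x')}.
\end{equation*}
Rearranging, the ratio $\rho^{(T)}(\tau) / \prod_{t \le T} R_{t}(x)$ takes the same value for every pair $(\tau, \tau')$. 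Call this common value $Z_{T}^{-1}$. This gives both the claim for $T$ and the CB condition of Corollary~\ref{col:cb} with target $\prod_{t \le T} R_{t}$. Corollary~\ref{col:tb} (or Corollary~\ref{col:cb} directly) then yields $\sum_{\tau \rightsquigarrow x} p_{F}^{(T)}(\tau|s_{o}^{\star}) \propto \prod_{t=1}^{T} R_{t}(x)$. Concretely, summing $p_{F}^{(T)}(\tau|s_{o}^{\star}) = Z_{T}^{-1}\prod_{t \le T} R_{t}(x)\, p_{B}^{(T)}(\tau|x)$ over $\tau \rightsquigarrow x$ uses the identity $\sum_{\tau \rightsquigarrow x} p_{B}(\tau|x) = 1$, which was established in the proof of Proposition~\ref{prop:lifted_balance}.

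\textbf{Expected obstacle.} The only delicate point is the legitimacy of the division. The $p^{(T-1)}$ ratios must be positive on every complete trajectory, which holds because the balance condition at step $T-1$ forces full support given positive $R_{t}$. There is also an evident typo in \eqref{eq:apx:sm}: $p_{B}^{(T)}(\tau'|x)$ should read $p_{B}^{(T)}(\tau'|x')$, and I will read it that way.
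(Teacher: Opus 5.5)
Your proof is correct and follows the paper's route: the paper says the result follows from Proposition~\ref{prop:lifted_balance} by a ``simple induction,'' and your argument is that induction made explicit. You obtain trajectory balance for $p_{F}^{(1)}$ from SubTB with $i=0$, $j=T$; then \eqref{eq:apx:sm}, read with $p_{B}^{(T)}(\tau'|x')$ as you rightly correct, shows that $\rho^{(T)}(\tau)/\prod_{t\le T}R_{t}(x)$ is constant, which is TB for the product target. One detail you should state explicitly is that this common value is nonzero: $p_{F}^{(T)}(\cdot|s_{o}^{\star})$ is a probability distribution over the finitely many complete trajectories of $\mathrm{SG}^{\star}$, so $Z_{T}\in(0,\infty)$, and this is what gives the positivity you need at the next step.
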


Both Propositions~\ref{prop:apx:dist} and~\ref{prop:apx:sm} follow directly from Proposition~\ref{prop:lifted_balance}. 
In particular, Proposition~\ref{prop:apx:sm} can be verified through a simple induction argument. 
From a learning perspective, Proposition~\ref{prop:apx:dist} suggests an divide-and-conquer algorithm for sampling from a target distribution $R$ satisfying Assumption~\ref{assumption:targets}: we first train each $p_{F}^{(t)}$ parallely for $1 \le t \le T$, and subsequently learn a $p_{F}$ by minimizing the expected log-squared difference between the left- and right-hand-sides in \Cref{eq:apx:dist}.
Similarly, Proposition~\ref{prop:apx:sm} highlights that an iterative algorithm that updates the current model $(p_{F}^{(T - 1)}, p_{B}^{(T - 1)})$ according to \Cref{eq:apx:sm} results in an unbiased sampler for the product target distribution. 
In doing so, we avoid repeated evaluations of $R_{t}$ for each $t < T$ when learning an amortized sampler in a streaming fashion. \looseness=-1

\section{Additional Experiments \& Further Details} \label{sec:details}

This section provides further details on the implementation for our experiments, along with a set of additional experiments for confirming our hypothesis that the path-dependent parameterization results in a better goodness-of-fit to the target distribution. \looseness=-1 

\subsection{Implementation details}

\paragraph{Computer code.} We provide computer code for reproducing our experiments in the supplement. We wrote both the environment and the deep neural networks based on highly-optimized JAX primitives \cite{jax2018github}. Our experiments were executed in a computer equipped with a 64GB AMD MI210 GPU. \looseness=-1  

\paragraph{Models, optimizer, hyperparameters.} 
We used standard hyperparameters for our experiments \cite{viviano2025torchgfnpytorchgflownetlibrary}. 
In particular, in Section~\ref{sec:experiments}, our Markovian models were parameterized with 3-layer (leaky) ReLU networks.
The hidden layer's size of both the Markovian and the path-dependent was chosen to ensure that the resulting neural networks would be similarly sized (in terms of the number of parameters).
We used AdamW \cite{loshchilov2018decoupled} with a learning rate of $10^{-3}$ for the policy's parameters and, when needed, a learning rate of $10^{-1}$ for the $\log Z$ parameter.
Also, we minimized the variance loss for the sequence- and set-based environments, and the TB loss otherwise (see Corollary~\ref{col:objectives}), for estimating the parameters of the policies. 
We followed the experimental setup of \citet{silva2025when} for the graph-structured tasks in \Cref{sec:limitations_markovian}.
\looseness=-1 

\paragraph{Defining the SRWM.} 
We used the normalized ELU link function $\sigma$ described in~\Cref{eq:sigmasa} and, to mitigate per-step computation cost, used a smaller ($d = 32$) dimension for the latent dynamical system, the output of which went through a single-layer perceptron.
The reason for this is that the the cost of our SRWM is dominated by the $\mathcal{O}(d^{3})$ matrix multiplication required by the rotation matrix. 
More specifically, let $\mathbf{W}_{t}$ and $\mathbf{y}_{t} = \mathbf{W}_{t} \psi(s_{t})$ be as in \Cref{eq:ASRWM}. 
To compute $p_{F}(\cdot | s_{t}, \mathbf{W}_{t})$, we introduce $\mathbf{W}_{\mathrm{MLP}, 1} \in \mathbb{R}^{d_{\mathrm{MLP}} \times d}$ and $\mathbf{W}_{\mathrm{MLP}, 2} \in \mathbb{R}^{d_{\mathrm{MLP}} \times d_{\mathrm{out}}}$ and evaluate $p_{F}(\cdot | s_{t}, \mathbf{W}_{t})$ as 
\begin{equation*}
    \mathrm{Softmax}(\mathbf{W}_{\mathrm{MLP}, 2} \mathrm{LeakyReLU}(\mathbf{W}_{\mathrm{MLP}, 1} \mathbf{y}_{t}) \odot \mathbf{m}(s_{t})), 
\end{equation*}
in which we set $\alpha = 10^{-2}$ for $\mathrm{LeakyReLU}(x)$, $d_{\mathrm{out}}$ is given by the environment, and $\mathbf{m}(s_{t}) \in \{1, -\infty\}^{d_{\mathrm{out}}}$ is a masking function delineating the permissible transitions at $s_{t}$---as in \Cref{eq:policya}. 
Importantly, we empirically observed that reducing $d$ negligibly affected learning convergence, but substantially improved computation efficiency. 
\looseness=-1

\subsection{Additional experiments} 

\begin{figure*}
    \centering
    \includegraphics[width=.8\linewidth]{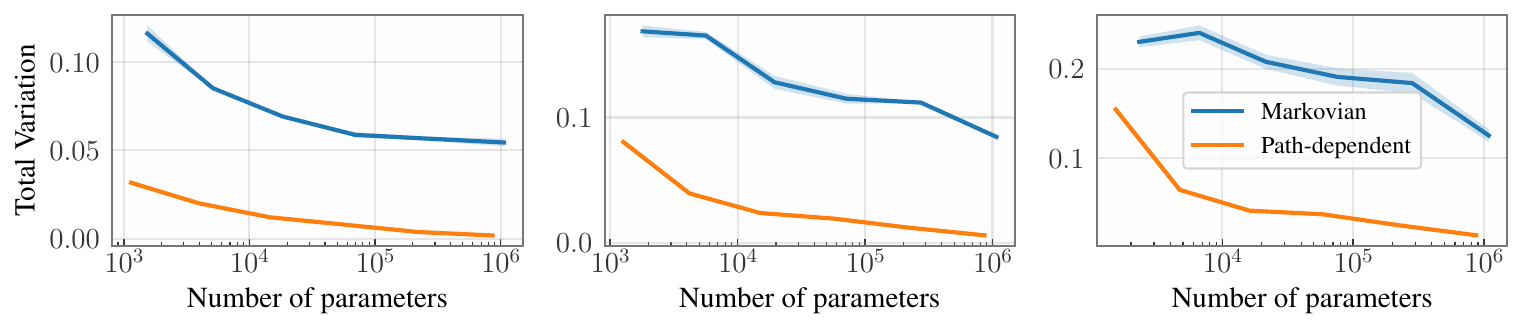}
    \caption{\textbf{TV distance to the target for varying model sizes} for the sequence environment. 
    We show that our path-dependent parameterization based on the modified SRWM architecture consistently outperforms its Markovian counterpart (based on a 3-layer ReLU network). 
    In each case, we increased the parameter count by expanding the latent dimension---e.g., $d$ in \Cref{eq:ASRWM}. See \Cref{sec:experiments}. 
    \looseness=-1} 
    \label{fig:ap}
\end{figure*}

We complement our analysis in Section~\ref{sec:experiments} with further experiments on standard tasks for GFlowNets.  

\begin{wraptable}[9]{r}{.4\textwidth}
    \caption{Goodness-of-fit for a Markovian and path-dependent parameterizations for the preference learning task in terms of the TV distance to the target.}
    \label{tab:preferencesa}
    \vspace{6pt}
    \begin{tabular}{c|c|c}
        Sample size & Markovian & Path-dependent \\
        \hline 
        50 & $0.135 \textcolor{gray} { \pm {\scriptstyle 0.022} }$ & $\mathbf{0.097} \textcolor{gray} { \pm {\scriptstyle 0.016} }$  \\
        100 & $0.177 \textcolor{gray} { \pm {\scriptstyle 0.035} }$ & $\mathbf{0.128} \textcolor{gray} { \pm {\scriptstyle 0.046} }$ \\
        150 & $0.274 \textcolor{gray} { \pm {\scriptstyle 0.074} }$ & $\mathbf{0.236} \textcolor{gray} { \pm {\scriptstyle 0.092} }$ \\ 
    \end{tabular}
\end{wraptable}
\paragraph{Preference learning \cite{dasilva2024streamingbayesgflownets}.} 
We consider the problem of approximating a posterior distribution over the parameters of an integer-valued linear model for preference learning \cite{Cole1993}. 
Briefly, each object $i$ has a set of binary features, $\mathbf{x}_{i} \in \{1, 0\}^{d}$, and the probability of object $i$ being preferred over $j$ is modelled by $p(i \succ j | \mathbf{w}) = \sigma\left(\mathbf{w}^{T} \left( \mathbf{x}_{i} - \mathbf{x}_{j}\right)\right)$, with $\mathbf{w} \in \{-m, -m + 1, \dots, m\}$ for some positive integer $m$. 
We set an uniform prior over $\mathbf{w}$. 
As we show in \Cref{tab:preferencesa} in the supplement, our model also provides a better approximation than its Markovian counterpart for this problem for different sample sizes (we fix both $d = 8$ and $m = 4$). \looseness=-1

\begin{figure}
    \centering 
    \begin{subfigure}{.49\textwidth} 
        \centering 
        \includegraphics[width=\linewidth]{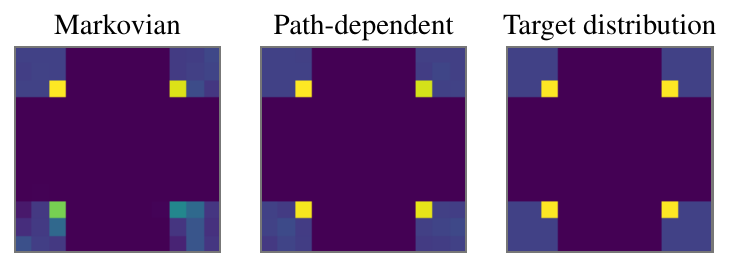}
        \caption{Results for the standard hypergrid. \looseness=-1}
        \label{fig:apx:h}
    \end{subfigure} 
    \begin{subfigure}{.49\textwidth}
        \centering 
        \includegraphics[width=\linewidth]{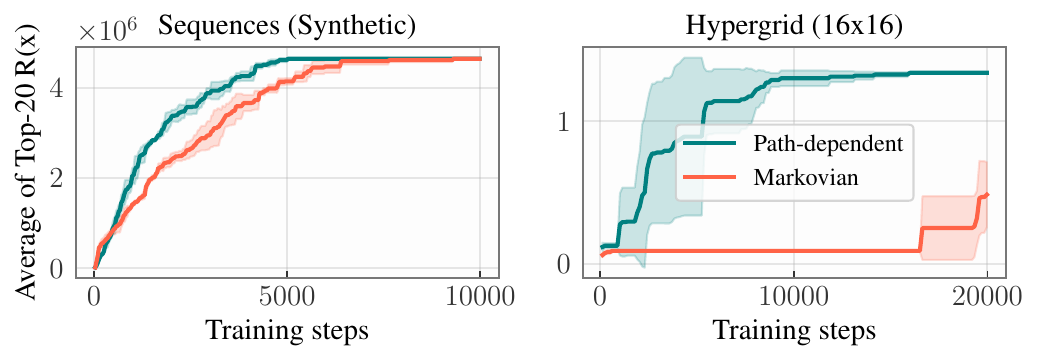}
        \caption{Mode discovery for each parameterization.}
        \label{fig:apx:m} 
    \end{subfigure}
    \caption{(left) Our path-dependent parameterization improves goodness-of-fit for the standard 16$\times$16 hypergrid task. (right) By learning a latent dynamical system, we also improve exploration of high-probability regions, as measured by the average $R(x)$ of the $20$ most probable encountered states during training.
    (We consider the Syn-8 environment introduced in \Cref{tab:sequences} in (b)).
    }
    \label{fig:gridsaa}
    \vspace{-8pt}
\end{figure}

\paragraph{GridWorld \cite{Bengio2021}.} We show in \Cref{fig:apx:h} a 2-dimensional 
grid environment with a target distribution defined for a point $(x, y)$ by \looseness=-1
\begin{equation*}
    R(x, y) = 10^{-3} + 2 \cdot r_{1}(x) \cdot r_{1}(y) + 0.5 \cdot r_{2}(x) \cdot r_{2}(y),
\end{equation*}
in which $r_{1}(x) \coloneqq [n(x) > 0.6] \cdot [n(x) > 0.8]$, $r_{2}(x) = [x > 0.5]$, and $n(x) = \left| \nicefrac{2x}{H - 1} - 1 \right|$ for a $H$-sized grid; $\cdot \mapsto [\cdot]$ is Inverson bracket, which corresponds to $1$ if the evaluated expression is true and $0$ otherwise. 
This $R$ is a standard in GFlowNet evaluation, e.g., \cite{malkin2023gflownets, pan2023generative}. 
Notably, our path-dependent model also provides a better goodness-of-fit to the target in this scenario---which corroborates our observations in \Cref{sec:experiments}. 
On top of this, \Cref{fig:gridsaa} also indicates our approach improves state space exploration. \looseness=-1

\begin{wraptable}[8]{r}{.4\textwidth} 
    \vspace{-12pt}
    \caption{Distance to the target distribution for the task of bit sequence generation.}
    \label{tab:sequencesabits}
    \vspace{6pt}
    \begin{tabular}{c|c|c}
        Seq. Size & Markovian & Path-dependent \\
        \hline
        16 & $0.025 \textcolor{gray} { \pm {\scriptstyle 0.001} }$ & $\mathbf{0.001} \textcolor{gray} { \pm {\scriptstyle 0.000} }$ \\
        32 & $0.057 \textcolor{gray} { \pm {\scriptstyle 0.001} }$ & $\mathbf{0.002} \textcolor{gray} { \pm {\scriptstyle 0.000} }$ \\ 
        64 & $0.064 \textcolor{gray} { \pm {\scriptstyle 0.005} }$ & $\mathbf{0.002} \textcolor{gray} { \pm {\scriptstyle 0.000} }$ \\ 
    \end{tabular}
\end{wraptable}
\paragraph{Bit sequences.} 
We have also run experiments on the generation of bit sequences, which is a standard benchmark 
for GFlowNets (e.g., \citet{malkin2022trajectory, tiapkin2024generativeflownetworksentropyregularized, silva2025generalization}). 
Towards this objective, we considered sequences of size $S \in \{16, 32, 64\}$. 
We adopted the target distribution proposed in \citet[Section 5.3.1]{malkin2022trajectory}, and followed the experimental setup in \cite{silva2025generalization}. 
Results in \Cref{tab:sequencesabits} show that our approach also provides a better goodnesss-of-fit to the target distribution for this task. \looseness=-1  

\begin{figure*}
    \begin{subfigure}{.49\textwidth}
        \includegraphics[width=\linewidth]{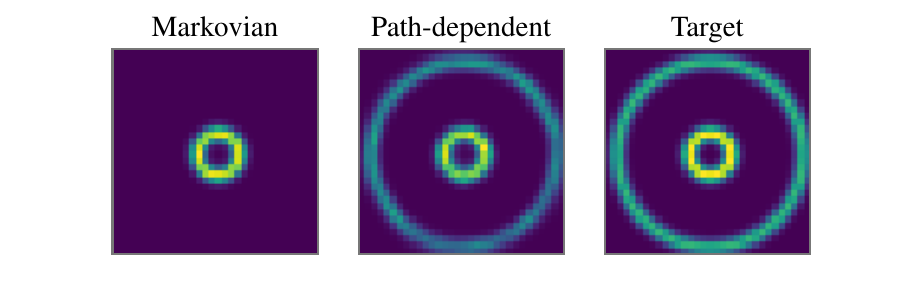}
        \caption{\textsc{Rings}.}
    \end{subfigure}
    \begin{subfigure}{.49\textwidth}
        \includegraphics[width=\linewidth]{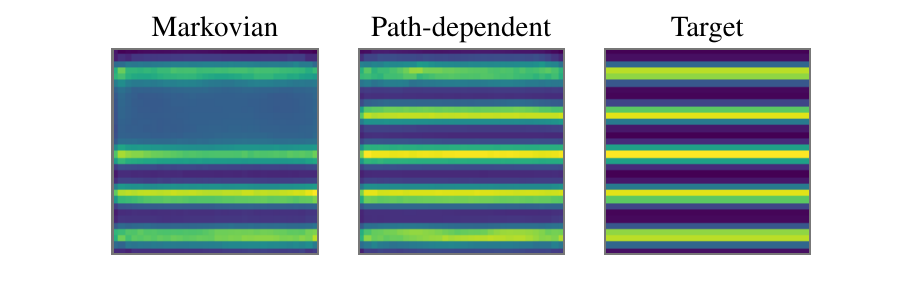}
        \caption{\textsc{Sinusoidal}.}
    \end{subfigure}
    \caption{\textbf{Results for Lazy Random Walk}. A path-dependent parameterization yields a better distributional approximation than a Markovian one for both the \textsc{Rings} (left) and \textsc{Sinusoidal} (right) targets, which we describe in~\Cref{eq:apx:ringsa,eq:apx:sinusoidala}. 
    }
    \label{fig:apx:lra} 
\end{figure*}

\paragraph{Lazy Random Walk.}
We evaluate our algorithm on the Lazy Random Ralk (LRR) task, which has also been considered by \citet{dallantonia2026boostedgflownetsimprovingexploration}.
Similarly to the GridWorld, the sampler starts at $\mathbf{0} \in \mathbb{R}^{d}$ and, at each step, updates its current state $\mathbf{s}$ as $\mathbf{s} \gets \mathbf{s} + \mathbf{v}$, with $\mathbf{v} \in \{\mathbf{e}_{1}, \dots, \mathbf{e}_{d}, \mathbf{0}\}$ and $\mathbf{e}_{i}$ as the $i$-column of the $d$-dimensional identity matrix.
In contrast to the GridWorld, however, the generative process does not stop when the transition corresponding to $\mathbf{0}$ is realized; instead, it proceeds for exactly $T$ steps.
From a fundamental standpoint, this ensures the underlying stochastic process is aperiodic, and we empirically found that such a generative process is less prone to collapse than the one in \Cref{fig:apx:h}.  
In this context, we augmented the state $\mathbf{s}$ with the current time-step $t$, $(\mathbf{s}, t)$, and restricted the sampler to $[-M, M]^{d}$.
As such, the terminal set is $\mathcal{X} = \{(\mathbf{s}, T) \colon \mathbf{s} \in [-M, M]^{15}\}$.
We parameterized $t$ with Fourier's positional encoding \cite{vaswani2023attentionneed}. 
Additionally, we considered the \textsc{Rings} and \textsc{Sinusoidal} target distributions, depicted in \Cref{fig:apx:lra}(a-b) (right), defined by \looseness=-1 
\begin{equation} \label{eq:apx:ringsa}
    R_{\mathrm{Rings}}(\mathbf{s}) = 0.6 \cdot \exp\{-10 (\| \tilde{\mathbf{s}} \| - 0.2)^{2}\} + 0.3 \cdot \exp\{-10 (\| \tilde{\mathbf{s}} \| - 0.9)^{2}\}, 
\end{equation}
in which $\tilde{\mathbf{s}} = M^{-1} \mathbf{s} \in [-1, 1]$ and $\|\cdot\|$ is the Euclidean norm, and 
\begin{equation} \label{eq:apx:sinusoidala} 
    R_{\mathrm{Sinusoidal}}(\mathbf{s}) = \cos (\mathbf{s}_{2}).
\end{equation}
Intuitively, both target distributions require the sampler to perform significantly different actions at similar states.
Under this condition, as we illustrated in \Cref{fig:linesmodels}, learning convergence is hampered by state aliasing. 
Results for the LRR task are shown in \Cref{fig:apx:lra}, confirming the higher-quality goodness-of-fit of our path-dependent parameterization.

\begin{figure}
    \centering
    \begin{subfigure}{.49\textwidth} 
        \centering 
        \includegraphics[width=\linewidth]{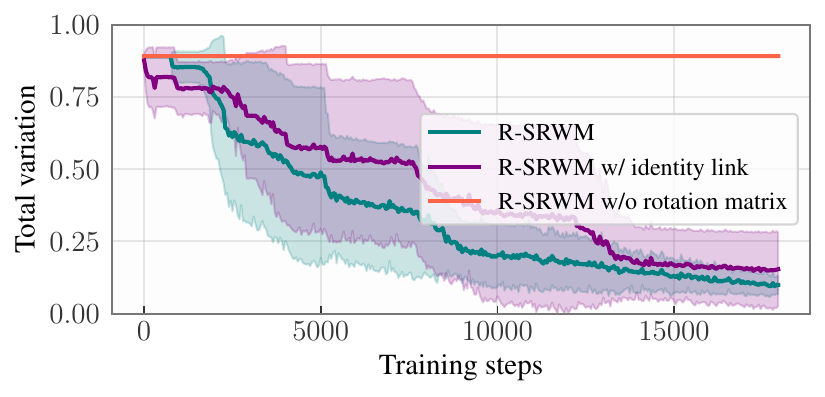}
        \caption{GridWorld (16$\times$16).}
    \end{subfigure}
    \begin{subfigure}{.49\textwidth}
        \centering 
        \includegraphics[width=\linewidth]{figures/hypergrids_ablations.pdf}
        \caption{Lazy Random Walk (\textsc{Rings}).}
    \end{subfigure}
    \caption{\textbf{The rotation matrix plays a key role in accelerating learning convergence}. While both the chosen link function $\sigma$ (\Cref{eq:sigmasa}) and the rotation matrix $R$ are beneficial for training, removing $R$ effectively hinders training.}
    \label{fig:ablationsa}
\end{figure}

\begin{wraptable}{r}{.32\textwidth}
    \vspace{-12pt} 
    \centering
    \caption{FCS for the Ising model simulation.}
    \label{tab:isingsa} 
    \begin{tabular}{c|c|c}
       $d$ & Markovian & non-Markovian \\
        \hline 
        36 & $0.15 \textcolor{gray}{\scriptstyle \pm 0.03}$ & $\mathbf{0.06} \textcolor{gray}{\scriptstyle \pm 0.01}$  \\ 
        64 & $0.20 \textcolor{gray}{\scriptstyle \pm 0.02}$ & $\mathbf{0.08} \textcolor{gray}{\scriptstyle \pm 0.01}$ \\ 
    \end{tabular}
    \label{tab:placeholder}
\end{wraptable}
\paragraph{Ising model.}
We further evaluate our approach in the simulation of the Ising model, which is a common benchmark for discrete sampling in the literature (e.g., \cite{zhang2022generativeflownetworksdiscrete, liu2024generativemarginalizationmodels}). 
Simply put, the state space is $\{-1, 1\}^{d}$, with $-1$ and $1$ representing a particle's spin, and the (unnormalized) target distribution is
\begin{equation*}
    \log R(\mathbf{x}) = \mathbf{x}^{T} \mathbf{h} + \frac{1}{2} \mathbf{x}^{T} \mathbf{J} \mathbf{x}, 
\end{equation*}
with $\mathbf{x} \in \{-1, 1\}^{d}$ and $\mathbf{h} \in \mathbb{R}^{d}$ and $\mathbf{J} \in \mathbb{R}^{d \times d}$ fixed. 
Intuitively, $-\nicefrac{1}{2} \mathbf{x}^{T} \mathbf{J} \mathbf{x}$ represents the energy associated to the particles' pairwise interactions, and $-\mathbf{x}^{T} \mathbf{h}$ measures the energy from an external magnetic field. 
We let $\mathbf{J} = \mathbf{v} \mathbf{v}^{T} - \mathrm{diag}(\mathbf{v} \mathbf{v}^{T})$ for $\mathbf{v} \sim \mathcal{U}(\{\mathbf{v} \in \mathbb{R}^{d} \colon \| \mathbf{v} \|_{2} = 1\})$, representing a ferromagnetic system (i.e., the pairwise-interaction energy is minimized when the particles are spin-aligned), and $\mathbf{h} \sim \mathcal{N}(\mathbf{0}, \mathbf{I})$. 
In \Cref{tab:isingsa}, we selected $d \in \{36, 64\}$ and averaged the FCS for both the Markovian and path-dependent samplers across three independent runs.
Again, we observed our non-Markovian parameterization to find a better approximation to the target distribution than its Markovian counterpart. 

\subsection{Ablation studies}

\pp{Rotation matrix \& link function.}
Our parameterization for the learned latent dynamical system, based on a self-referential weight matrix, has two primary components: a rotation matrix ($R$) and a link function ($\sigma$). 
As in \cite{irie2022modernselfreferentialweightmatrix}, we set 
\begin{equation} \label{eq:sigmasa} 
    \sigma(\mathbf{x}) = \frac{\mathrm{ELU}(\mathbf{x}) + 1}{\mathbf{1}^{T}\mathrm{ELU}(\mathbf{x}) + d}  
\end{equation}
for $\mathbf{x} \in \mathbb{R}^{d}$ and $\mathbf{1}$ as the $d$-dimensional vector of ones. 
(ELU, applied element-wise, is defined by $x \mapsto x$ if $x \ge 0$ and $x \mapsto \exp\{x\} - 1$ if $x < 0$). 
To better understand the effect of these design choices over the effectiveness of our path-dependent parameterization, we show in \Cref{fig:ablationsa} the TV distance of our model after (i) removing the rotation matrix and (ii) setting an identity link function. 
We consider the Lazy Random Walk (\textsc{Rings}) and GridWorld (\Cref{fig:gridsa}) domains, for which this metric can be tractably computed.
Remarkably, both experiments confirm that the rotation matrix plays a pivotal role in accelerating learning convergence---and that the link function, despite beneficial, is of relatively little importance.
This corroborates our mechanical intuition that, by rotating the underlying weight matrix's eigenvectors, the rotation matrix effectively mitigates state aliasing.

\begin{figure}
    \centering
    \begin{subfigure}{.49\textwidth}  
        \centering 
        \includegraphics[width=\linewidth]{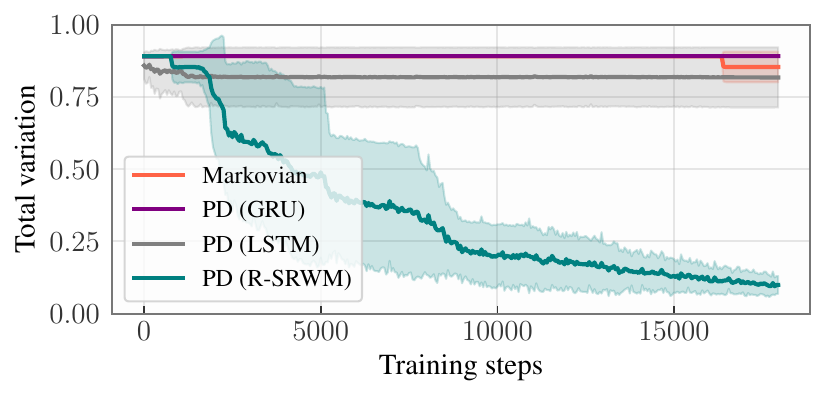}
        \caption{GridWorld (16$\times$16).}
    \end{subfigure} 
    \begin{subfigure}{.49\textwidth} 
        \centering 
        \includegraphics[width=\linewidth]{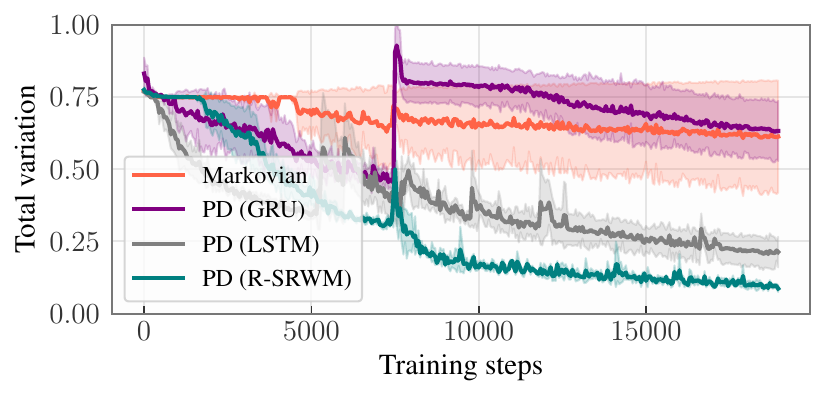}
        \caption{Lazy Random Walk (\textsc{Rings}).}
    \end{subfigure}
    \caption{\textbf{Our SRWM 
    leads to faster convergence than 
    gated RNNs.} In contrast to LSTM and GRU, our SRWM-based policy network 
    promotes the diversity of the model's hidden representations---via the rotation matrix---reducing aliasing and accelerating convergence.}
    \label{fig:gatedsa}
\end{figure}

\pp{Comparison against gated RNNs.}
As mentioned, we compared our modified SRWM against conventional gataed RNNs, including LSTM \cite{hochreiter1997long} and GRU \cite{cho2014learning}, for parameterizing the latent dynamical system, and found our SRWM to be far more effective in speeding up learning convergence in many tasks.
We illustrate this in \Cref{fig:gatedsa}. 
The reason for this is that, differently from LSTM and GRU, our SRWM has a built-in mechanism---the rotation matrix, as explained---for mitigating state aliasing.   
\looseness=-1 

\end{document}